\documentclass[11pt]{article}

\usepackage[T1]{fontenc}
\usepackage[utf8]{inputenc} % se compili con pdfLaTeX
\usepackage{lmodern}

\usepackage{amsmath,amssymb,amsthm}
\usepackage{algorithm}
\usepackage{algpseudocode}
\usepackage[colorlinks=true,linkcolor=blue,citecolor=blue,urlcolor=blue]{hyperref}
\usepackage[nameinlink,noabbrev]{cleveref}

\usepackage{amsmath}
\usepackage{amssymb}
\usepackage{amsthm}
\usepackage{mathtools}
\usepackage{braket}
\usepackage{relsize}
\usepackage{color,soul}
\usepackage{dsfont}
\usepackage{comment}
\usepackage{indentfirst}
\usepackage{microtype}
\usepackage{braket}
\usepackage{graphicx}
\usepackage[shortlabels]{enumitem}
\usepackage{verbatim}
\usepackage{listings}
\usepackage{color}
\usepackage{inconsolata}
\usepackage{tikz}
\usepackage{pgfplots}
\pgfplotsset{compat=1.18}
\usepackage{booktabs}
\usepackage{caption}
\usepackage{bm}
\usepackage{enumitem}
\usepackage{mathrsfs}
\usepackage{longtable}
\usepackage{environ}
\usepackage{xspace}
\usepackage{setspace}

\newtheorem{theorem}{Theorem}
\newtheorem{proposition}[theorem]{Proposition}
\newtheorem{corollary}[theorem]{Corollary}
\newtheorem{lemma}[theorem]{Lemma}
\newtheorem{remark}[theorem]{Remark}

\newcommand{\numberset}{\mathbb}

\newcommand{\R}{\numberset{R}}

\newcommand{\F}{\mathcal{F}}
\newcommand{\G}{\mathcal{G}}
\newcommand{\Q}{\numberset{Q}}

\renewcommand{\P}{\mathscr{P}}

\newcommand{\E}{\numberset{E}}

\renewcommand{\epsilon}{\varepsilon}
\newcommand{\supp}{\operatorname{supp}}

\DeclareMathOperator{\diag}{diag}
\DeclareMathOperator{\rank}{rank}

\title{Triangular-Reference Schrödinger Bridges for Time Series Generation}
\author{Gabriele Bocchi\thanks{%
  \textit{Arakne S.r.l., Roma}.
  E-mail: \texttt{gabriele.bocchi@arakne.it}.}}
  \date{}

\begin{document}
\maketitle

\begin{abstract}
Schrödinger bridges for time series (SBTS) generate synthetic paths by
projecting, in relative entropy, a Brownian reference onto the path laws that
match the joint distribution of the data on the observation grid. The
Brownian reference, however, fixes the quadratic variation of the generated
paths, which is restrictive when stochastic volatility, correlated noise, or
rank-deficient covariance structures must be reproduced. We introduce
\emph{Triangular-Reference Schrödinger Bridges for Time Series} (TR-SBTS),
which keeps the entropy-projection backbone of SBTS but replaces the Brownian
reference by a triangular, volatility-informed, intervalwise frozen reference
on a state augmented with latent covariance descriptors. The construction
remains a single entropy projection on the augmented state: the minimiser is
the \(h\)-transform of the reference, and on each frozen interval the optimal
drift has the logarithmic-gradient form \(b^\star(t,x)=A\,\nabla\log H(t,x)\),
intrinsic to the active covariance directions when the frozen covariance
\(A\) is degenerate. We prove stability of the frozen approximation and
consistency of the associated regularised kernel estimators, describe a
reference-aware Nadaraya--Watson implementation of the conditional
next-increment law, and evaluate the construction on numerical experiments.
\end{abstract}

\section{Introduction}
\label{sec:introduction}

The generation of realistic synthetic time series arises whenever observed
data are too scarce or too sensitive to be used directly. In finance, which
motivates this work, synthetic trajectories serve for stress testing,
scenario generation, data augmentation, and the assessment of downstream
decision rules such as hedging procedures.

Generating time series is harder than generating static data. Matching
one-time marginals, or even a static joint distribution, is not enough: a
useful generator must preserve temporal dependence, cross-sectional
dependence, volatility and covariance structure, and pathwise variability.
Classical parametric models offer interpretable dynamics but are exposed to
misspecification, which has motivated data-driven generative models
\cite{goodfellow2014generative, kingma2013autoencoding, rezende2015variational,
arjovsky2017wasserstein, song2021score, ho2020denoising} and, for sequential
data, specialised architectures and metrics \cite{yoon2019time,
wiese2020quant, xu2020cot, kidger2021neural, buehler2020data,
chevyrev2016primer, choudhary2023funvol}.

Schrödinger bridge methods provide a natural framework for this problem. The
classical Schrödinger bridge seeks the path law closest, in relative entropy,
to a reference measure under prescribed marginal constraints; in diffusion
settings the solution adds a drift to the reference dynamics, with the energy
of the control measured by relative entropy \cite{leonard2014survey,
daipra1991reciprocal, cuturi2013sinkhorn, pavon2021schrodinger,
debortoli2021diffusion, shi2023diffusion, gushchin2024light}. The Schrödinger
Bridge for Time Series (SBTS) of \cite{hamdouche2023generative} adapts this
idea to sequential data: rather than two endpoint marginals, it constrains
the full joint law of the observations on a fixed time grid, and the
minimiser is a controlled diffusion whose path-dependent drift is represented
by backward Schrödinger potentials and estimated from data. SBTS thus targets
the joint time-series distribution directly while producing a continuous-time
interpolation between observation dates.

The benchmarks of \cite{alouadi2025robust} support SBTS as a competitive and
robust generator, but they also expose a structural limitation: the Brownian
reference fixes the quadratic variation of the generated paths. For financial
series this is restrictive, since stochastic volatility, correlated noise,
and possibly rank-deficient covariance structures are part of the phenomenon
to be reproduced rather than secondary modelling details. Extensions in this
direction include the jump-diffusion reference of \cite{demarco2026sbjts},
which captures heavy tails and abrupt moves on càdlàg path space, and the
Schrödinger--Bass bridge for time series of \cite{alouadi2026sbbts}, which
optimises jointly over drift and volatility.

The present paper takes a conservative route: the entropy minimisation
problem of SBTS is kept unchanged, and only the reference law is modified.
TR-SBTS preserves the SBTS entropy-projection backbone but replaces the
Brownian reference by a triangular, volatility-informed, intervalwise frozen
reference. The reference lives on an augmented state
\[
Z=(Z^0,Z^1,\ldots,Z^L),
\]
where \(Z^0\) is the principal time series, such as prices or returns, and
the lower levels \(Z^1,\ldots,Z^L\) encode volatility or covariance
descriptors constructed on the coarse observation grid. Given the joint
coarse-grid law \(\mu=\mathcal L(Z_{t_0},\ldots,Z_{t_N})\), the problem is
the same as in SBTS: find the path law closest to the reference in relative
entropy among those matching \(\mu\), the constraint being joint across time
and across the levels.

The reference is \emph{triangular}: on each interval \((t_i,t_{i+1}]\) its
transition kernels are adapted to the common coarse past
\(\mathcal G_i=\sigma(Z_{t_0},\ldots,Z_{t_i})\) and, ordering the components
from the deepest descriptor to the observed process, each level evolves
conditionally on the common past and on the lower levels already constructed.
The lower levels are thus generated first, and on each interval their
realisation determines the frozen Gaussian reference, possibly with
degenerate covariance, used by the levels above. The frozen covariance may
encode volatility information learned from high-frequency data, which thereby
enriches the coarse-grid state rather than refining the output grid.

Triangularity does not split the target into independent marginal bridges.
The constraint remains the joint law of the augmented process; the triangular
reference only makes the conditional structure of the single global entropy
projection explicit, through the chain rule for relative entropy, which in
two-level notation reads
\[
H(P^{X,Y}\mid R^{X,Y})
=
H(P^Y\mid R^Y)
+
\mathbb E_{P^Y}\bigl[H(P^{X\mid Y}\mid R^{X\mid Y})\bigr].
\]
The augmented bridge can accordingly be generated bottom-up, and the price
bridge is the top component of one augmented entropy projection.

Under the finite-entropy condition \(\mu\ll\mu_T^R\),
\(H(\mu\mid\mu_T^R)<\infty\), where \(\mu_T^R\) is the coarse-grid law of the
reference, the minimiser is the usual \(h\)-transform
\(dP^\star/dR=(d\mu/d\mu_T^R)(Z_{t_0},\ldots,Z_{t_N})\). The local
consequence is the main structural result of the paper: conditionally on the
coarse past and on the lower-level environment, each conditional problem is a
frozen Gaussian Schrödinger bridge, possibly degenerate, and its optimal
drift has the logarithmic-gradient form
\[
b^\star(t,x)=A\,\nabla\log H(t,x),
\]
where \(A\) is the frozen covariance and \(H\) a backward heat potential on
the affine leaf generated by the active covariance directions; when \(A\) is
rank-deficient the formula is understood intrinsically on those directions.
This is the degenerate analogue of the SBTS heat-drift formula.

Two further results complete the analysis. First, the frozen reference
approximates an ideal reference whose covariance varies continuously inside
each interval: we prove stability of the corresponding bridges, with the
approximation error measured by cumulative covariances on the active leaf, so
that refining the volatility grid recovers the continuous formulation.
Second, we prove convergence of regularised kernel estimators of the backward
potentials and the drifts, giving
consistency guarantees for the full approximation chain. At the computational
level, the only object to estimate is the conditional next-increment law,
approximated by a single reference-aware Nadaraya--Watson estimator acting on
finite-dimensional causal summaries of the augmented past.

The paper is organised as follows.
Section~\ref{sec:reference-based-sbts-projection} develops the variational
backbone: the entropy projection under a grid-law constraint and its
characterisation as a coherent family of conditional projections.
Section~\ref{sec:exact-intervalwise-dynamics} introduces the triangular
frozen reference, solves the resulting leaf problems, deriving the
logarithmic-gradient drift on possibly degenerate Gaussian leaves, and
proves the stability of the frozen approximation.
Section~\ref{sec:analytic-statistical-terminal-datum} establishes the
statistical approximation and the main convergence theorem.
Section~\ref{sec:computational-architecture} describes the computational
estimator, and Section~\ref{sec:numerical-experiments} reports the numerical
experiments. Appendix~\ref{sec:appendix-statistical-approximation} contains
the proof of the statistical theorem,
Appendix~\ref{sec:appendix-predictive-energy-score} the predictive scoring
procedure, and
Appendix~\ref{sec:appendix-entropic-reference-selection-and-validation} the
entropic criterion used to select and validate the reference families.

\section{Reference-based entropy projection and conditional structure}
\label{sec:reference-based-sbts-projection}

We isolate the variational backbone used throughout: the entropy minimisation
on path space under a grid-law constraint, with respect to an arbitrary
reference. Subsection~\ref{subsec:abstract-entropy-projection} identifies the
minimiser of the projection, and
Subsection~\ref{subsec:conditional-characterization} proves that the global
projection is stable under conditioning---the global optimiser is equivalently
characterised as a coherent family of conditional entropy projections. Both
results are purely variational and reference-agnostic: no specific structure of the reference is used or assumed.
The specific reference that makes the conditional problems explicit and exactly solvable is introduced
only in Section~\ref{sec:exact-intervalwise-dynamics}.

\subsection{Abstract entropy projection under a grid-law constraint}
\label{subsec:abstract-entropy-projection}

\medskip
Let
\[
0=t_0<t_1<\dots<t_N=T,
\qquad
\Omega:=C([0,T];\R^d),
\]
and let \(X=(X_t)_{0\le t\le T}\) be the canonical process on \(\Omega\),
with canonical filtration \(\F=(\F_t)_{0\le t\le T}\). We denote by
\[
\Pi:\Omega\to(\R^d)^{N+1},
\qquad
\Pi(\omega):=\bigl(X_{t_0}(\omega),\dots,X_{t_N}(\omega)\bigr)
\]
the observation map at the coarse times.

Let \(R\in\P(\Omega)\) be an arbitrary reference probability law on path
space, and denote by
\[
r:=R\circ\Pi^{-1}
\]
its joint law on the coarse grid. Let
\(\mu\in\P((\R^d)^{N+1})\) be a target law on the same grid, and assume
\begin{equation}
\mu\ll r,
\qquad
H(\mu\mid r)<\infty.
\label{eq:finite-entropy-assumption}
\end{equation}
We refer to the entropy minimisation problem
\begin{equation}
V(\mu)
:=
\inf\Bigl\{
H(P\mid R):
P\in\P(\Omega),\;
P\circ\Pi^{-1}=\mu
\Bigr\}
\label{eq:general-sbts-problem}
\end{equation}
as the \emph{reference-based SBTS entropy projection} problem associated with
the reference \(R\) and the constraint \(\mu\). The following theorem
identifies its value and unique minimiser (i.e. the \emph{entropy projection}) in closed form, under no
structural assumption on \(R\) beyond
\eqref{eq:finite-entropy-assumption}.

\begin{theorem}[Variational identification of the entropy projection]
\label{thm:general-sbts-main}
Under assumption \eqref{eq:finite-entropy-assumption}, set
\[
G:=\frac{d\mu}{d r}.
\]
Then the following statements hold.

\begin{enumerate}
\item[(i)] The value of problem \eqref{eq:general-sbts-problem} is
\[
V(\mu)=H(\mu\mid r).
\]

\item[(ii)] There exists a unique minimiser \(P^*\in\P(\Omega)\), given
by the \(h\)-transform
\[
\frac{dP^*}{dR}=G(\Pi),
\qquad
P^*\circ\Pi^{-1}=\mu.
\]

\item[(iii)] The density process of \(P^*\) with respect to \(R\) is
the positive \(R\)-martingale
\[
M_t:=E_R\!\left[G(\Pi)\mid\F_t\right],
\qquad 0\le t\le T,
\]
so that \(M_T=dP^*/dR\) and \(P^*=M_T\,R\).
\end{enumerate}
\end{theorem}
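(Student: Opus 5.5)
The plan is to prove the additive decomposition of relative entropy along the observation map \(\Pi\), namely the disintegration (chain rule) identity
\[
H(P\mid R)=H(P\circ\Pi^{-1}\mid r)+\int H\bigl(P^{y}\mid R^{y}\bigr)\,\mu(dy),
\]
where \(P^{y}\) and \(R^{y}\) are regular conditional laws of \(P\) and \(R\) given \(\Pi=y\). These exist because \(\Omega\) is Polish and \(\Pi\) is measurable. Statements (i) and (ii) will then follow from nonnegativity of the second term. Statement (iii) is a direct consequence of the tower property.

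\textbf{Step 1: feasibility and the candidate.} Since \(\mu\ll r\), the density \(G=d\mu/dr\) exists, is nonnegative and satisfies \(E_R[G(\Pi)]=\int G\,dr=1\). So \(P^*:=G(\Pi)\,R\) is a probability law on \(\Omega\). For bounded measurable \(\varphi\) on the grid, \(E_{P^*}[\varphi(\Pi)]=E_R[G(\Pi)\varphi(\Pi)]=\int\varphi\,G\,dr=\int\varphi\,d\mu\), hence \(P^*\circ\Pi^{-1}=\mu\). Its entropy is
\[
H(P^*\mid R)=E_R[G(\Pi)\log G(\Pi)]=\int G\log G\,dr=H(\mu\mid r)<\infty .
\]
This gives \(V(\mu)\le H(\mu\mid r)\).

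\textbf{Step 2: lower bound and uniqueness.} Let \(P\) be feasible with \(H(P\mid R)<\infty\), so \(P\ll R\). Write \(D:=dP/dR\). The density of \(P\circ\Pi^{-1}\) with respect to \(r\) is \(E_R[D\mid\Pi]\), which must equal \(G(\Pi)\) \(R\)-a.s. because \(P\circ\Pi^{-1}=\mu\). Set \(D=G(\Pi)\,K\), with \(K:=D/G(\Pi)\) on \(\{G(\Pi)>0\}\); this set carries all of \(P\)'s mass. Then \(E_R[K\mid\Pi]=1\) on that set, i.e. \(K\) is the density of \(P^{y}\) with respect to \(R^{y}\) for \(\mu\)-a.e. \(y\). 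This yields
\[
H(P\mid R)=E_P[\log G(\Pi)]+E_P[\log K]=H(\mu\mid r)+\int H(P^{y}\mid R^{y})\,\mu(dy)\ge H(\mu\mid r).
\]
The main obstacle is justifying the splitting of \(E_P[\log D]\) into these two terms when the integrands are not a priori integrable. The first term is finite by \eqref{eq:finite-entropy-assumption}. The second is an integral of nonnegative conditional entropies, and by Jensen it equals \(E_P[\log K]\) in \([0,\infty]\). I would handle the integrability carefully by splitting \(\log\) into positive and negative parts, or alternatively invoke the standard additive property of relative entropy under measurable maps (Léonard's survey). Equality forces \(H(P^{y}\mid R^{y})=0\) for \(\mu\)-a.e. \(y\), hence \(P^{y}=R^{y}\) and so \(P=\int R^{y}\,\mu(dy)=P^*\). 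This gives (i) and uniqueness in (ii). Strict convexity of \(H(\cdot\mid R)\) on the convex feasible set would be an alternative route to uniqueness.

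\textbf{Step 3: the density process.} Because \(G(\Pi)\ge0\) is \(R\)-integrable, \(M_t:=E_R[G(\Pi)\mid\F_t]\) is a uniformly integrable \(R\)-martingale. Since \(\Pi\) is \(\F_T\)-measurable, \(M_T=G(\Pi)=dP^*/dR\). For \(A\in\F_t\), \(P^*(A)=E_R[G(\Pi)\mathbf 1_A]=E_R[M_t\mathbf 1_A]\), so \(M_t\) is the density of \(P^*|_{\F_t}\) with respect to \(R|_{\F_t}\). Regarding positivity: \(M_t>0\) holds \(P^*\)-a.s. for every \(t\). It holds \(R\)-a.s. exactly when \(G>0\) \(r\)-a.s., i.e. when \(\mu\sim r\). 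I would state it in this form, reading "positive" as nonnegative with \(P^*\)-a.s. strict positivity when \(\mu\) is merely absolutely continuous.
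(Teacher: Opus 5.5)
Your proof is correct, but it takes a different route from the paper's.

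\textbf{The paper's route.} The paper never disintegrates. For a feasible \(P\) with \(L=dP/dR\), it notes \(E_P[\log G(\Pi)]=H(\mu\mid r)\) and \(E_P[\log L]=H(P\mid R)\). It then applies Jensen to \(E_P[\exp(\log G(\Pi)-\log L)]\le E_R[G(\Pi)]=1\). The paper writes this as an equality; in general it is only an inequality, since \(E_P[G(\Pi)/L]=E_R[G(\Pi)\mathbf 1_{\{L>0\}}]\), but that is the direction needed. Jensen gives the lower bound, and strict convexity of the exponential gives \(L=G(\Pi)\) in the equality case. This Gibbs-inequality argument is shorter and needs no regular conditional probabilities.

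\textbf{What your route buys.} Your chain-rule argument costs the disintegration machinery, which is harmless here because \(\Omega\) is Polish. In return it gives more:
\begin{itemize}
\item the exact excess-entropy identity \(H(P\mid R)-H(\mu\mid r)=\int H(P^y\mid R^y)\,\mu(dy)\) for every feasible \(P\);
\item the representation of the minimiser as the mixture of reference bridges \(P^*=\int R^y\,\mu(dy)\);
\item consistency with the rest of the paper, since this is the same chain rule the paper itself invokes in Proposition~\ref{prop:temporal-conditional-characterization}.
\end{itemize}

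\textbf{The integrability step.} The step you flag as the main obstacle is immediate. When \(H(P\mid R)<\infty\) and \(H(\mu\mid r)<\infty\), both \(\log D\) and \(\log G(\Pi)\) are \(P\)-integrable. Their negative parts have \(P\)-expectation at most \(e^{-1}\), and finiteness of the entropies controls the positive parts. Hence \(\log K=\log D-\log G(\Pi)\in L^1(P)\), and Fubini applies directly.

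\textbf{Positivity in (iii).} Your caveat in Step 3 is a genuine sharpening of the statement. Under mere \(\mu\ll r\), \(M_T=G(\Pi)\) can vanish on an \(R\)-non-null set. So ``positive'' holds \(P^*\)-a.s., and it holds \(R\)-a.s. for all \(t\) if and only if \(\mu\sim r\). The paper's one-line treatment of (iii) passes over this point.
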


\begin{proof}
Set
\[
G:=\frac{d\mu}{d r},
\qquad
P^*:=G(\Pi)\,R.
\]
Since \(E_R[G(\Pi)]=1\), the measure \(P^*\) is well defined. For every
bounded measurable \(\varphi:(\R^d)^{N+1}\to\R\),
\[
E_{P^*}[\varphi(\Pi)]
=
E_R[G(\Pi)\varphi(\Pi)]
=
\int_{(\R^d)^{N+1}}\varphi(\mathbf x)\,\mu(d\mathbf x),
\]
hence \(P^*\circ\Pi^{-1}=\mu\), so \(P^*\) is feasible. Moreover,
\[
H(P^*\mid R)
=
E_R[G(\Pi)\log G(\Pi)]
=
H(\mu\mid r),
\]
so
\[
V(\mu)\le H(\mu\mid r).
\]

Now let \(P\) be any feasible probability measure. If
\(H(P\mid R)=\infty\), there is nothing to prove. Assume then that
\(H(P\mid R)<\infty\), and write
\[
L:=\frac{dP}{dR}.
\]
Since \(P\circ\Pi^{-1}=\mu\), one has
\[
E_P[\log G(\Pi)]
=
\int_{(\R^d)^{N+1}}\log G(\mathbf x)\,\mu(d\mathbf x)
=
H(\mu\mid r),
\]
and
\[
E_P[\log L]=H(P\mid R).
\]
Finally,
\[
1
=
E_R[G(\Pi)]
=
E_P\!\left[\frac{G(\Pi)}{L}\right]
=
E_P\!\left[\exp\!\bigl(\log G(\Pi)-\log L\bigr)\right].
\]
By Jensen's inequality,
\[
1
\ge
\exp\!\left(E_P\!\bigl[\log G(\Pi)-\log L\bigr]\right)
=
\exp\!\left(H(\mu\mid r)-H(P\mid R)\right),
\]
and therefore \(H(P\mid R)\ge H(\mu\mid r)\). This proves
\(V(\mu)=H(\mu\mid r)\). If equality holds, then equality holds in
Jensen's inequality. Since the exponential function is strictly convex,
there exists a constant \(c\in\R\) such that
\[
\log G(\Pi)-\log L=c
\qquad P\text{-a.s.}
\]
Taking expectation under \(P\), and using the equality of the two
entropies, yields \(c=0\). Hence \(L=G(\Pi)\) \(P\)-a.s. and
\(P=P^*\), proving uniqueness. The martingale representation in (iii)
is immediate from the definition \(M_t=E_R[G(\Pi)\mid\F_t]\).
\end{proof}

\subsection{Temporal conditional optimality and triangular references}
\label{subsec:conditional-characterization}

The entropy projection of Theorem~\ref{thm:general-sbts-main} is a
global minimisation problem on path space, but its optimality can be read
locally along the observation grid. From now on the canonical process is
denoted by \(Z\) rather than \(X\): nothing changes in the setting of
Subsection~\ref{subsec:abstract-entropy-projection}, but the notation
anticipates the augmented state \(Z=(X,Y)\) on which the triangular
reference is built later in this subsection.

Let
\begin{align*}
  \Pi_i(\omega)&:=(Z_{t_0}(\omega),\ldots,Z_{t_i}(\omega)) =: \mathbf Z_i(\omega),\\
  \Pi_{> i}(\omega)&:=(Z_{t_{i+1}}(\omega),\ldots,Z_{t_N}(\omega)) =: \mathbf Z_{>i}(\omega).
\end{align*}
We also write \(\omega_{\le i}\) for the restriction of the path to
\([0,t_i]\), and \(\omega_{>i}\) for the future path on \((t_i,T]\). 

We assume
that the reference is grid-compatible in the following sense: for every
\(i=0,\ldots,N-1\), the conditional law under \(R\) of the future path given
\(\mathcal F_{t_i}\) depends on the past only through \(\Pi_i\). Thus there
exist probability kernels
\[
R_{>i}^{\mathbf z_i} \in \mathcal P(\Omega_{>i}),
\qquad \mathbf z_i\in(\mathbb R^{d_Z})^{i+1},
\]
such that
\[
R(d\omega_{>i}\mid\mathcal F_{t_i})
=
R_{>i}^{\Pi_i(\omega)}(d\omega_{>i}).
\]
For every \(\mathbf z_i\), we denote the corresponding reference law of the future grid by
\[
r_{>i}^{\mathbf z_i}
:=
R_{>i}^{\mathbf z_i}\circ \Pi_{>i}^{-1}.
\]
Finally, disintegrate the target grid law as
\[
\mu(d\mathbf z_N)
=
\mu_{i}(d\mathbf z_i)\,
\mu_{>i}^{\mathbf z_i}(d\mathbf z_{>i}).
\]

For fixed \(\mathbf z_i\), define the future conditional entropy projection
problem
\begin{equation}
  \label{eq:future-conditional-problem}
V_{>i}(\mathbf z_i)
:=
\inf\Bigl\{
H(Q\mid R_{>i}^{\mathbf z_i})
:
Q\circ \Pi_{>i}^{-1}
=
\mu_{>i}^{\mathbf z_i}
\Bigr\}.
\end{equation}
By Theorem~\ref{thm:general-sbts-main}, applied to the conditional reference
\(R_{>i}^{\mathbf z_i}\), its value is
\[
V_{>i}(\mathbf z_i)
=
H\bigl(\mu_{>i}^{\mathbf z_i}\mid r_{>i}^{\mathbf z_i}\bigr),
\]
and, whenever this entropy is finite, the unique minimiser is
\[
\frac{dQ_{>i}^{\star,\mathbf z_i}}{dR_{>i}^{\mathbf z_i}}
=
\frac{d\mu_{>i}^{\mathbf z_i}}
     {dr_{>i}^{\mathbf z_i}}
\bigl(\Pi_{>i}\bigr).
\]

\begin{proposition}[Temporal conditional characterisation of the entropy projection]
\label{prop:temporal-conditional-characterization}
Assume
\[
\mu\ll r,
\qquad
H(\mu\mid r)<\infty ,
\]
and assume that \(R\) is grid-compatible in the sense above. Let \(P\) be an
admissible law, namely
\[
P\circ\Pi^{-1}=\mu .
\]
Fix \(i\in\{0,\ldots,N-1\}\), and disintegrate
\[
P(d\omega)
=
P_{\le i}(d\omega_{\le i})\,
P_{>i}^{\omega_{\le i}}(d\omega_{>i}),
\qquad
R(d\omega)
=
R_{\le i}(d\omega_{\le i})\,
R_{>i}^{\Pi_i(\omega)}(d\omega_{>i}).
\]
Then \(P\) minimises \(H(\cdot\mid R)\) under the full grid-law constraint
\(P\circ\Pi^{-1}=\mu\) if and only if the following two conditions hold.

\begin{enumerate}[i)]
\item The past marginal \(P_{\le i}\) minimises the truncated entropy
projection problem
\[
\inf\Bigl\{
H(M\mid R_{\le i})
:
M\circ \Pi_i^{-1}=\mu_i
\Bigr\}.
\]
\item For \(P_{\le i}\)-a.e. past \(\omega_{\le i} \), \(P_{>i}^{\omega_{\le i}}\) minimises
\[
\inf\Bigl\{
H(Q\mid R_{>i}^{\Pi_i(\omega_{\le i})})
:
Q\circ\Pi_{>i}^{-1}
=
\mu_{>i}^{\Pi_i(\omega_{\le i})}
\Bigr\}.
\]
\end{enumerate}
\end{proposition}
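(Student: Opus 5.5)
The plan is to combine the chain rule for relative entropy along the disintegration at time \(t_i\) with the explicit value formula of Theorem~\ref{thm:general-sbts-main}, applied once to the past problem and once to each conditional future problem.

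\textbf{Step 1: decomposition.} For any admissible \(P\) with \(H(P\mid R)<\infty\), the disintegrations in the statement and the chain rule give
\[
H(P\mid R)=H(P_{\le i}\mid R_{\le i})+\int H\bigl(P_{>i}^{\omega_{\le i}}\mid R_{>i}^{\Pi_i(\omega_{\le i})}\bigr)\,P_{\le i}(d\omega_{\le i}).
\]
Grid-compatibility is essential here: it makes the reference kernel depend on the past only through \(\Pi_i\), so the conditional reference matches the one in \eqref{eq:future-conditional-problem}.

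\textbf{Step 2: feasibility of the pieces.} The constraint \(P\circ\Pi^{-1}=\mu\) implies \(P_{\le i}\circ\Pi_i^{-1}=\mu_i\), so \(P_{\le i}\) is feasible for the truncated problem. Next, the conditional law of \(\mathbf Z_{>i}\) given \(\omega_{\le i}\) under \(P\) need not equal \(\mu_{>i}^{\Pi_i}\); it only does after averaging over \(\omega_{\le i}\) with \(\Pi_i\) fixed. I will handle this as follows. Let \(\nu^{\omega_{\le i}}:=P_{>i}^{\omega_{\le i}}\circ\Pi_{>i}^{-1}\). By Theorem~\ref{thm:general-sbts-main}(i) applied conditionally, the integrand is at least \(H(\nu^{\omega_{\le i}}\mid r_{>i}^{\Pi_i})\). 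Conditional Jensen (convexity of \(H(\cdot\mid r_{>i}^{\mathbf z_i})\)), applied after conditioning on \(\Pi_i=\mathbf z_i\), then gives
\[
\int H(\nu^{\omega_{\le i}}\mid r_{>i}^{\Pi_i})\,dP_{\le i}\ \ge\ \int H(\mu_{>i}^{\mathbf z_i}\mid r_{>i}^{\mathbf z_i})\,\mu_i(d\mathbf z_i).
\]
The past term is at least \(H(\mu_i\mid r_i)\), where \(r_i=R_{\le i}\circ\Pi_i^{-1}\), again by Theorem~\ref{thm:general-sbts-main}(i).

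\textbf{Step 3: the two bounds add to the optimum.} By the chain rule for grid laws,
\[
H(\mu_i\mid r_i)+\int H(\mu_{>i}^{\mathbf z_i}\mid r_{>i}^{\mathbf z_i})\,d\mu_i=H(\mu\mid r)=V(\mu).
\]
Here \(r=r_i\otimes r_{>i}^{\mathbf z_i}\), which again uses grid-compatibility. So \(H(P\mid R)\ge V(\mu)\), and \(P\) is optimal iff every inequality is an equality. Since all terms are finite, this means:
\begin{itemize}
\item[(a)] \(P_{\le i}\) attains the truncated value, which is condition i);
\item[(b)] \(P_{>i}^{\omega_{\le i}}\) is the \(h\)-transform of \(R_{>i}^{\Pi_i}\) by \(d\nu^{\omega_{\le i}}/dr_{>i}^{\Pi_i}\) a.s.;
\item[(c)] equality holds in Jensen.
\end{itemize}
Strict convexity of relative entropy in (c) forces \(\nu^{\omega_{\le i}}=\mu_{>i}^{\Pi_i(\omega_{\le i})}\) a.s. Combined with (b) and the uniqueness in Theorem~\ref{thm:general-sbts-main}(ii), this is condition ii).

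\textbf{Step 4: converse.} If i) and ii) hold, the conditional future grid laws equal \(\mu_{>i}^{\Pi_i}\), so \(P\) is admissible. Its entropy, by Step 1 and the value formulas, equals \(H(\mu_i\mid r_i)+\int H(\mu_{>i}^{\mathbf z_i}\mid r_{>i}^{\mathbf z_i})\,d\mu_i=V(\mu)\), so \(P\) is optimal.

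\textbf{Main obstacle.} I expect the hardest step to be the measure-theoretic one. The chain rule must be justified on path space with regular conditional probabilities (\(\Omega\) is Polish, so these exist). One must also check that the conditional-on-\(\Pi_i\) averaging in Step 2 is legitimate; this needs a measurable selection so that \(\mathbf z_i\mapsto H(\mu_{>i}^{\mathbf z_i}\mid r_{>i}^{\mathbf z_i})\) is measurable and integrable. Finiteness of \(H(\mu\mid r)\) supplies the integrability.
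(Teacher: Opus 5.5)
Your proof is correct and follows the same route as the paper's. Both use the chain rule at \(t_i\), Theorem~\ref{thm:general-sbts-main} for the truncated past problem and for the fibrewise future problems, a Jensen (convexity) step over the fibres \(\{\Pi_i=\mathbf z_i\}\) to handle the fact that \(P_{>i}^{\omega_{\le i}}\) need not have grid law \(\mu_{>i}^{\Pi_i}\), and the grid-level chain rule to identify the sum of the lower bounds with \(H(\mu\mid r)\). The one difference is the order of the last two inequalities: the paper first averages the path-space conditional laws over each fibre and then applies Theorem~\ref{thm:general-sbts-main} to the averaged law, whereas you first project each \(P_{>i}^{\omega_{\le i}}\) to its grid law \(\nu^{\omega_{\le i}}\) and then apply Jensen on the grid; both orderings yield the same equality conditions.
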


\begin{proof}
By the chain rule for relative entropy,
\begin{equation}
  \label{eq:chain-rule-entropy}
H(P\mid R)
=
H(P_{\le i}\mid R_{\le i})
+
\int
H\bigl(
P_{>i}^{\omega_{\le i}}
\mid
R_{>i}^{\Pi_i(\omega_{\le i})}
\bigr)\,
P_{\le i}(d\omega_{\le i}).
\end{equation}
Since \(P\circ\Pi^{-1}=\mu\), the past marginal satisfies
\[
P_{\le i}\circ \Pi_i^{-1}=\mu_i.
\]
Therefore, by Theorem~\ref{thm:general-sbts-main} applied on the truncated
time interval \([0,t_i]\),
\[
H(P_{\le i}\mid R_{\le i})
\ge
H(\mu_i\mid r_i),
\]
with equality if and only if \(P_{\le i}\) is the truncated entropy
projection.

Moreover, we note that
\[
P_{\le i}(d\omega_{\le i}) = P_{\le i}(d\omega_{\le i} \mid \Pi_i(\omega_{\le i}))\, \mu_i(d\Pi_i(\omega_{\le i})).
\]
Let
\[
P_{>i}^{\mathbf z_i}
:=
P(\omega_{>i}\in\cdot\mid \Pi_i=\mathbf z_i)
\]
be the conditional future law given the coarse grid past.
Hence, by the reference grid-compatibility and Jensen's inequality, we obtain
\[
\int
H\bigl(
P_{>i}^{\omega_{\le i}}
\mid
R_{>i}^{\Pi_i(\omega_{\le i})}
\bigr)\,
P_{\le i}(d\omega_{\le i}) 
\ge 
\int
H\bigl(
P_{>i}^{\Pi_i(\omega_{\le i})}
\mid
R_{>i}^{\Pi_i(\omega_{\le i})}
\bigr)\,
\mu_i(d\Pi_i(\omega_{\le i}))
\]
Equality in this Jensen step is precisely the condition that the
conditional future law \(P_{>i}^{\omega_{\le i}}\) depends on the past
only through the coarse grid variable \(\Pi_i\), and is therefore the
fibrewise entropy projection.

Moreover, for \(P_{\le i}\)-a.e. \(\omega_{\le i}\), the conditional future
law given the coarse grid past satisfies
\[
P_{>i}^{\Pi_i(\omega_{\le i})}\circ \Pi_{>i}^{-1}
=
\mu_{>i}^{\Pi_i(\omega)} .
\]
Hence the conditional form of Theorem~\ref{thm:general-sbts-main} gives
\[
H\bigl(
P_{>i}^{\Pi_i(\omega_{\le i})}
\mid
R_{>i}^{\Pi_i(\omega_{\le i})}
\bigr)
\ge
H\bigl(
\mu_{>i}^{\Pi_i(\omega_{\le i})}
\mid
r_{>i}^{\Pi_i(\omega)}
\bigr),
\]
with equality if and only if
\( P_{>i}^{\Pi_i(\omega_{\le i})}\) is the optimal solution of \eqref{eq:future-conditional-problem} with \(\mathbf z_i=\Pi_i(\omega_{\le i})\).

Hence, by \eqref{eq:chain-rule-entropy}, we obtain
\begin{align*}
H(P\mid R)
&\ge H(\mu_i\mid r_i)
+
\int
H\bigl(
\mu_{>i}^{\mathbf z_i}
\mid
r_{>i}^{\mathbf z_i}
\bigr)\,
\mu_i(d\mathbf z_i)
\end{align*}
By the chain rule for the grid laws, the last right-hand side is precisely
\[
H(\mu\mid r),
\]
which is the value of the global entropy projection by
Theorem~\ref{thm:general-sbts-main}. Thus \(P\) is globally optimal if and
only if both lower bounds above are equalities. This is exactly the pair of
conditions stated in the proposition.
\end{proof}

The proposition says that the global optimiser is not merely compatible with
conditional problems: it is characterised by them. At every time \(t_i\), the
past part is already the entropy projection for the truncated grid, while the
future conditional law is the entropy projection of the remaining target
conditional law, with respect to the reference conditional on the realised
past. In this precise sense, the global solution minimises all temporal
conditional problems simultaneously.

The one-step version is the form used in the construction below. Writing
\[
\mu_i^{\mathbf z_i}(dz_{i+1})
:=
\mu(dz_{i+1}\mid \mathbf z_i),
\qquad
r_i^{\mathbf z_i}(dz_{i+1})
:=
R(dz_{i+1}\mid \mathbf z_i),
\]
the conditional problem on the interval \((t_i,t_{i+1}]\) is
\[
V_i(\mathbf z_i)
:=
\inf\Bigl\{
H(Q\mid R_i^{\mathbf z_i})
:
Q\circ Z_{t_{i+1}}^{-1}
=
\mu_i^{\mathbf z_i}
\Bigr\},
\]
and its unique minimiser is
\[
\frac{dQ_i^{\star,\mathbf z_i}}{dR_i^{\mathbf z_i}}
=
\frac{d\mu_i^{\mathbf z_i}}{dr_i^{\mathbf z_i}}
\bigl(Z_{t_{i+1}}\bigr).
\]
The global bridge is therefore obtained by concatenating these one-step
conditional bridges along the realised grid path.

We now explain how this principle leads to the triangular reference. Suppose
that the augmented state is
\[
Z=(X,Y),
\]
and fix a realised past \(\mathbf z_i\). A one-step triangular reference is a
grid-compatible one-step reference such that, on the interval
\((t_i,t_{i+1}]\),
\[
R_i^{\mathbf z_i}(d\eta,d\xi)
=
R_{Y,i}^{\mathbf z_i}(d\eta)\,
R_{X,i}^{\mathbf z_i,\eta_{t_{i+1}}}(d\xi),
\]
where \(\eta\) denotes a \(Y\)-path and \(\xi\) an \(X\)-path on the current
interval. Thus the \(X\)-reference does not depend on the whole \(Y\)-path,
but only on the realised endpoint
\[
y_{i+1}=\eta_{t_{i+1}}.
\]
At the grid level this gives
\[
r_i^{\mathbf z_i}(dy_{i+1},dx_{i+1})
=
r_{Y,i}^{\mathbf z_i}(dy_{i+1})\,
r_{X,i}^{\mathbf z_i,y_{i+1}}(dx_{i+1}).
\]
Disintegrate the target conditional law in the same order:
\[
\mu_i^{\mathbf z_i}(dy_{i+1},dx_{i+1})
=
\mu_{Y,i}^{\mathbf z_i}(dy_{i+1})\,
\mu_{X,i}^{\mathbf z_i,y_{i+1}}(dx_{i+1}).
\]

\begin{proposition}[Triangular conditional characterisation]
\label{prop:triangular-conditional-characterization}
Fix \(\mathbf z_i\) and assume that the one-step reference
\(R_i^{\mathbf z_i}\) is triangular in the sense above. A one-step admissible
law \(Q\), satisfying
\[
Q\circ (Y_{t_{i+1}},X_{t_{i+1}})^{-1}
=
\mu_i^{\mathbf z_i},
\]
minimises
\[
H(Q\mid R_i^{\mathbf z_i})
\]
if and only if the following two conditions hold.

\begin{enumerate}[i)]
\item Its \(Y\)-marginal \(Q_Y\) minimises
\[
\inf\Bigl\{
H(M\mid R_{Y,i}^{\mathbf z_i})
:
M\circ Y_{t_{i+1}}^{-1}
=
\mu_{Y,i}^{\mathbf z_i}
\Bigr\}.
\]

\item Disintegrating
\[
Q(d\eta,d\xi)
=
Q_Y(d\eta)\,Q_X^\eta(d\xi),
\]
we have, for \(Q_Y\)-a.e. \(\eta\), with
\(y_{i+1}=\eta_{t_{i+1}}\),
\[
Q_X^\eta
=
Q_{X,i}^{\star,\mathbf z_i,y_{i+1}},
\]
where \(Q_{X,i}^{\star,\mathbf z_i,y_{i+1}}\) is the unique minimiser of
\[
\inf\Bigl\{
H(N\mid R_{X,i}^{\mathbf z_i,y_{i+1}})
:
N\circ X_{t_{i+1}}^{-1}
=
\mu_{X,i}^{\mathbf z_i,y_{i+1}}
\Bigr\}.
\]
\end{enumerate}
\end{proposition}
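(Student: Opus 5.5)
The plan mirrors the proof of Proposition~\ref{prop:temporal-conditional-characterization}, with the role of ``past/future'' now played by ``$Y$-path/$X$-path'' on the single interval. First, apply the chain rule for relative entropy to the disintegrations $Q=Q_Y\,Q_X^\eta$ and $R_i^{\mathbf z_i}=R_{Y,i}^{\mathbf z_i}\,R_{X,i}^{\mathbf z_i,\eta_{t_{i+1}}}$. This gives
\[
H(Q\mid R_i^{\mathbf z_i})
=
H(Q_Y\mid R_{Y,i}^{\mathbf z_i})
+
\int H\bigl(Q_X^\eta\mid R_{X,i}^{\mathbf z_i,\eta_{t_{i+1}}}\bigr)\,Q_Y(d\eta).
\]
Admissibility of $Q$ forces $Q_Y\circ Y_{t_{i+1}}^{-1}=\mu_{Y,i}^{\mathbf z_i}$. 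Hence Theorem~\ref{thm:general-sbts-main}, applied to the $Y$-reference with a one-point grid constraint at $t_{i+1}$, bounds the first term below by $H(\mu_{Y,i}^{\mathbf z_i}\mid r_{Y,i}^{\mathbf z_i})$. Equality holds iff condition (i) holds.

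For the second term, the $X$-reference depends on $\eta$ only through $y=\eta_{t_{i+1}}$. I therefore introduce the averaged law
\[
\bar Q_X^{y}:=\int Q_X^\eta\,Q_Y(d\eta\mid \eta_{t_{i+1}}=y).
\]
By joint convexity of relative entropy (the Jensen step), the integral is at least
\[
\int H\bigl(\bar Q_X^{y}\mid R_{X,i}^{\mathbf z_i,y}\bigr)\,\mu_{Y,i}^{\mathbf z_i}(dy).
\]
The joint constraint on $(Y_{t_{i+1}},X_{t_{i+1}})$ gives $\bar Q_X^{y}\circ X_{t_{i+1}}^{-1}=\mu_{X,i}^{\mathbf z_i,y}$ for a.e.\ $y$. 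Theorem~\ref{thm:general-sbts-main}, applied fibrewise, then yields the lower bound $H(\mu_{X,i}^{\mathbf z_i,y}\mid r_{X,i}^{\mathbf z_i,y})$, with equality iff $\bar Q_X^{y}=Q_{X,i}^{\star,\mathbf z_i,y}$.

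Summing the two bounds and using the chain rule for the grid laws $r_i^{\mathbf z_i}=r_{Y,i}\,r_{X,i}$ and $\mu_i^{\mathbf z_i}=\mu_{Y,i}\,\mu_{X,i}$, the total lower bound is exactly $H(\mu_i^{\mathbf z_i}\mid r_i^{\mathbf z_i})$. By Theorem~\ref{thm:general-sbts-main} this is the optimal value. So $Q$ is optimal iff all three inequalities are equalities. For the converse direction, taking $Q$ to satisfy (i) and (ii) makes the first bound tight. Since $Q_X^\eta$ then depends on $\eta$ only through $y$, the Jensen step is trivially tight, and the fibrewise bound is tight by (ii).

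The main obstacle is the equality case of the Jensen/convexity step. I need that equality there, together with finiteness of the entropies, forces $Q_X^\eta=\bar Q_X^{\eta_{t_{i+1}}}$ for $Q_Y$-a.e.\ $\eta$. I would prove this by writing the gap as a conditional mutual-information term:
\[
\int H\bigl(Q_X^\eta\mid R_X^{y}\bigr)\,dQ_Y-\int H\bigl(\bar Q_X^{y}\mid R_X^{y}\bigr)\,d\mu_Y
=\int H\bigl(Q_X^\eta\mid \bar Q_X^{\eta_{t_{i+1}}}\bigr)\,Q_Y(d\eta).
\]
This is a direct computation via the log-density decomposition $\log\frac{dQ_X^\eta}{dR_X^y}=\log\frac{dQ_X^\eta}{d\bar Q_X^y}+\log\frac{d\bar Q_X^y}{dR_X^y}$, justified by the finite-entropy assumption. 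The gap is nonnegative and vanishes iff $Q_X^\eta=\bar Q_X^{y}$ a.e., which gives (ii). Measurability of the disintegrations, with path spaces Polish, is routine.
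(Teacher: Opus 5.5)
Your proposal is correct and follows essentially the same route as the paper's proof. Both arguments use the entropy chain rule along $Q=Q_Y\,Q_X^\eta$, the lower bound on the $Y$-term from Theorem~\ref{thm:general-sbts-main}, the convexity (Jensen) step that replaces $Q_X^\eta$ by its average given $Y_{t_{i+1}}=y$, the fibrewise bound, and the grid-level chain rule identifying the total with $H(\mu_i^{\mathbf z_i}\mid r_i^{\mathbf z_i})$. Your one real addition is writing the Jensen gap as $\int H\bigl(Q_X^\eta\mid \bar Q_X^{\eta_{t_{i+1}}}\bigr)\,Q_Y(d\eta)$, which rigorously justifies the equality case that the paper only asserts.
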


\begin{proof}
The proof is the same entropy-disintegration argument as in
Proposition~\ref{prop:temporal-conditional-characterization}, now applied in
the vertical variable. The chain rule gives
\[
H(Q\mid R_i^{\mathbf z_i})
=
H(Q_Y\mid R_{Y,i}^{\mathbf z_i})
+
\int
H\bigl(
Q_X^\eta
\mid
R_{X,i}^{\mathbf z_i,\eta_{t_{i+1}}}
\bigr)\,
Q_Y(d\eta).
\]
The first term is bounded below by
\[
H\bigl(
\mu_{Y,i}^{\mathbf z_i}
\mid
r_{Y,i}^{\mathbf z_i}
\bigr).
\]
For the second term, we note that 
\[
  Q_Y(d\eta) = Q_Y(d\eta\mid Y_{t_{i+1}}=y_{i+1})\,\mu_{Y,i}^{\mathbf z_i}(dy_{i+1}),
\]
by the admissibility condition \(Q\circ Y_{t_{i+1}}^{-1}=\mu_{Y,i}^{\mathbf z_i}\). 
Therefore, using Jensen's inequality, we obtain
\[
\int
H\bigl(
Q_X^\eta
\mid
R_{X,i}^{\mathbf z_i,\eta_{t_{i+1}}}
\bigr)\,
Q_Y(d\eta)
\ge
\int
H\bigl(
Q_X^{\eta_{t_{i+1}}}
\mid
R_{X,i}^{\mathbf z_i,\eta_{t_{i+1}}}
\bigr)\,
\mu_{Y,i}^{\mathbf z_i}(d\eta_{t_{i+1}}).
\]
where \(Q_X^{\eta_{t_{i+1}}}\) is the conditional law of \(Q_X^\eta\) given \(Y_{t_{i+1}}=\eta_{t_{i+1}}\).
Equality in this Jensen step means that, after conditioning on the
lower-level endpoint \(y_{i+1}\), the upper-level conditional law
\(Q_X^\eta\) depends on \(\eta\) only through \(y_{i+1}\), and is
therefore the corresponding fibrewise entropy projection.
Hence, since the conditional entropy
projection optimal value on the \(X\)-path space is \(H\bigl(
\mu_{X,i}^{\mathbf z_i,y_{i+1}}
\mid
r_{X,i}^{\mathbf z_i,y_{i+1}}
\bigr)\), 
\[
H(Q\mid R_i^{\mathbf z_i})
\ge
H\bigl(
\mu_{Y,i}^{\mathbf z_i}
\mid
r_{Y,i}^{\mathbf z_i}
\bigr)
+
\int
H\bigl(
\mu_{X,i}^{\mathbf z_i,y_{i+1}}
\mid
r_{X,i}^{\mathbf z_i,y_{i+1}}
\bigr)\,
\mu_{Y,i}^{\mathbf z_i}(dy_{i+1}).
\]
By the entropy chain rule for the one-step grid laws, the right-hand side is
\[
H\bigl(\mu_i^{\mathbf z_i}\mid r_i^{\mathbf z_i}\bigr),
\]
which is the value of the one-step entropy projection. Equality holds if and
only if equality holds in the \(Y\)-projection and, \(Q_Y\)-a.e., in the
conditioned \(X\)-projections. This gives the stated characterisation.
\end{proof}

The two propositions identify the recursive structure of the entropy
projection. The first one is temporal: after the grid past
\(\mathbf z_i\) has been realised, the remaining part of the global optimiser
is the entropy projection of the conditional future law
\(\mu_{>i}^{\mathbf z_i}\) with respect to the conditional reference
\(R_{>i}^{\mathbf z_i}\). The second one is triangular: on each interval, the
lower-level component \(Y\) is projected first, and, once its endpoint
\(y_{i+1}\) has been realised, the upper-level component \(X\) is projected
with respect to the corresponding conditional reference
\(R_{X,i}^{\mathbf z_i,y_{i+1}}\).

Thus the global bridge is characterised by a family of local conditional
bridges, organised first in time and then along the triangular hierarchy of
the augmented state. The rest of the paper exploits this reduction in the
specific references used for TR-SBTS: the lower-level bridge determines the
realised environment, and the upper-level bridge is solved conditionally on
the associated frozen Gaussian leaf. Section~\ref{sec:exact-intervalwise-dynamics}
derives the corresponding intervalwise dynamics.

\section{Triangular frozen references and leafwise dynamics}
\label{sec:exact-intervalwise-dynamics}

Section~\ref{sec:reference-based-sbts-projection} reduced the global entropy
projection to a coherent family of conditional one-step problems: temporally
along the coarse grid
(Proposition~\ref{prop:temporal-conditional-characterization}) and vertically
within each interval, where the lower level is projected first and the upper
level is projected conditionally on the realised lower-level endpoint
(Proposition~\ref{prop:triangular-conditional-characterization}). Once the
coarse past and the lower-level endpoint are fixed, what remains is a one-step
conditional entropy projection against an explicit reference: a \emph{leaf
problem}. This section solves these leaf problems in closed form for the
triangular frozen Gaussian reference used in the paper.
Subsection~\ref{subsec:conditional-to-leaf} introduces the reference and
formulates the leaf problem; Subsections~\ref{subsec:frozen-gaussian-leaves}
and~\ref{subsec:intervalwise-backward} develop the intrinsic geometry of
possibly degenerate Gaussian leaves and prove the leafwise $h$-transform
lemma, which yields the backward heat potential and the logarithmic-gradient
drift; Subsection~\ref{subsec:intervalwise-backward-local} identifies the
resulting local optimiser with the conditional optimiser required by
Section~\ref{sec:reference-based-sbts-projection}; and
Subsection~\ref{subsec:volatility-informed-freezing} explains in which sense
the frozen reference approximates an ideal volatility-informed reference.

\subsection{The frozen leaf problem}
\label{subsec:conditional-to-leaf}

\paragraph{The triangular frozen reference.}
Let \(m_0\in\P(\R^d)\) and let \(\mathcal Y\) denote the latent state space of
Subsection~\ref{subsec:volatility-informed-freezing}. For each
\(i=0,\dots,N-1\), let
\[
\sigma_i:(\R^d)^{i+1}\times\mathcal Y\to\R^{d\times d}
\]
be a Borel map and set, for \(\mathbf x_i=(x_0,\dots,x_i)\in(\R^d)^{i+1}\) and
\(y_{i+1}\in\mathcal Y\),
\[
A_i(\mathbf x_i,y_{i+1})
:=\sigma_i(\mathbf x_i,y_{i+1})\,\sigma_i(\mathbf x_i,y_{i+1})^\top
\in\mathbb S_+^d .
\]
The reference \(R\in\P(\Omega)\) is the law under which \(X_0\sim m_0\) and,
for each \(i\), conditionally on the coarse past
\(\mathbf X_i:=(X_0,X_{t_1},\dots,X_{t_i})\) and on the realised lower-level
endpoint \(y_{i+1}\), the process on \((t_i,t_{i+1}]\) is the frozen Gaussian
martingale
\[
dX_t=\sigma_i(\mathbf X_i,y_{i+1})\,dW_t^R,
\qquad t\in(t_i,t_{i+1}],
\]
for some Brownian motion \(W^R\); equivalently, \(X\) is an \(R\)-martingale
with predictable bracket \(A(dt)=A_i(\mathbf X_i,y_{i+1})\,dt\) on
\((t_i,t_{i+1}]\), and \(R\) is obtained by concatenating the Gaussian
transition kernels of these frozen intervalwise diffusions, conditioned on the
latent realisations \((y_{i+1})_{i=0}^{N-1}\). The matrix \(A_i\) may be
degenerate, in which case the process evolves on the affine \emph{active leaf}
\[
X_{t_i}+\operatorname{Ran}A_i .
\]
Triangularity enters only here: the lower level is generated first, and its
realised endpoint \(y_{i+1}\) is then a known parameter of the upper-level
reference, which thereby becomes an explicit conditional Gaussian.

Throughout this section the target \(\mu\) has first marginal \(m_0\) and
satisfies the finite-entropy condition \eqref{eq:finite-entropy-assumption}
conditionally on the latent realisation, and every \(i\)-indexed object
carries the conditioning \((\mathbf x_i,y_{i+1})\), often suppressed from the
notation.

\paragraph{The leaf problem.}
Write \(\Delta X_{i+1}:=X_{t_{i+1}}-X_{t_i}\) for the next increment and
define the conditional one-step laws
\begin{align*}
\eta_i^\mu(\cdot)
&:=\mu\bigl(\Delta X_{i+1}\in\cdot\mid\mathbf X_i=\mathbf x_i,\,Y=y_{i+1}\bigr),
\\
k_i^R(\cdot)
&:=R\bigl(\Delta X_{i+1}\in\cdot\mid\mathbf X_i=\mathbf x_i,\,Y=y_{i+1}\bigr),
\end{align*}
the target and reference conditional next-increment laws; under the frozen
reference, \(k_i^R\) is the centred Gaussian on \(\operatorname{Ran}A_i\) with
covariance \((t_{i+1}-t_i)A_i\). The \emph{leaf problem} on \((t_i,t_{i+1}]\)
is the conditional one-step entropy projection of
Proposition~\ref{prop:triangular-conditional-characterization}\,(ii): minimise
the relative entropy with respect to the conditional reference
\(R_{X,i}^{\mathbf z_i,y_{i+1}}\) over one-step laws whose next-increment law
is \(\eta_i^\mu\). By Theorem~\ref{thm:general-sbts-main} applied to the
conditional reference, its unique solution is the \(h\)-transform of
\(R_{X,i}^{\mathbf z_i,y_{i+1}}\) by the \emph{terminal tilt}
\[
f_i:=\frac{d\eta_i^\mu}{dk_i^R}.
\]
The dynamics of this \(h\)-transform are governed by the backward
\emph{heat potential}
\[
H_i(t,x)
=
E_R\!\bigl[f_i(\Delta X_{i+1})\mid X_t=x,\ \mathbf x_i,\ y_{i+1}\bigr]
\]
through the logarithmic-gradient drift
\[
b_i^*(t,x)
=
A_i\,\nabla\log H_i(t,x)
=
\sum_{j=1}^{r_i}\lambda_{i,j}\,D_{q_{i,j}}\log H_i(t,x)\,q_{i,j},
\]
where \(r_i=\rank A_i\) and \((\lambda_{i,j},q_{i,j})_{j\le r_i}\) are the
positive eigenpairs of \(A_i\).

\subsection{Intrinsic Gaussian geometry on a degenerate leaf}
\label{subsec:frozen-gaussian-leaves}

We fix a single covariance matrix and record the intrinsic objects attached to
it; in the application, \(A=A_i(\mathbf x_i,y_{i+1})\) for a fixed value of
the conditioning.

Let \(A\in\mathbb S_+^d\), let
\[
r:=\rank(A),
\qquad
\lambda_1,\dots,\lambda_r>0,
\qquad
q_1,\dots,q_r\in\R^d
\]
be the strictly positive eigenvalues of \(A\) and associated orthonormal
eigenvectors, and set
\[
E_A:=\mathrm{Span}\{q_1,\dots,q_r\}.
\]
For every \(x\in\R^d\), we define the \emph{affine diffusive leaf through \(x\)} by
\[
\mathcal L_A^x:=x+E_A.
\]
This is the affine subspace on which a diffusion with covariance matrix \(A\),
started from \(x\), evolves.

We denote by \(\ell_A^x\) the canonical Lebesgue measure on \(\mathcal L_A^x\),
defined as the push-forward of the Lebesgue measure on \(\R^r\) under the map
\[
\Xi_x:\R^r\to \mathcal L_A^x,
\qquad
\Xi_x(\xi):=x+\sum_{i=1}^r \xi_i q_i.
\]

Since \(A\) is invertible on \(E_A\), we denote by
\[
A^{-1}_{E_A}:E_A\to E_A
\]
the inverse of the restriction of \(A\) to \(E_A\). Explicitly,
\[
A^{-1}_{E_A}v
=
\sum_{i=1}^r \lambda_i^{-1}\langle v,q_i\rangle q_i,
\qquad v\in E_A.
\]

For \(S\le t<T\) and \(y,z\in\mathcal L_A^x\), we define the intrinsic heat kernel
associated with \(A\) by
\begin{align*}
p_A(t,y;T,z)
&:=
\frac{1}{(2\pi(T-t))^{r/2}\prod_{i=1}^r \lambda_i^{1/2}}
\\[2pt]
&\quad\times
\exp\!\left(
-\frac{1}{2(T-t)}
\big\langle A^{-1}_{E_A}(z-y),z-y\big\rangle
\right).
\end{align*}

Given a nonnegative Borel function \(g:\mathcal L_A^x\to[0,\infty]\), we define its
backward heat propagation by
\[
(P_{t,T}^A g)(y)
:=
\int_{\mathcal L_A^x} p_A(t,y;T,z)\,g(z)\,d\ell_A^x(z),
\qquad
(t,y)\in (S,T)\times\mathcal L_A^x,
\]
whenever the integral is finite.

We say that a nonnegative Borel function \(g:\mathcal L_A^x\to[0,\infty]\) is
\((A,S,T,x)\)-admissible if
\begin{equation}\label{cond:admissibility}
(P_{t,T}^A g)(y)<\infty
\qquad
\forall (t,y)\in (S,T)\times\mathcal L_A^x.
\end{equation}

We also denote by \(\mathcal M_+^{\mathrm{loc}}(\mathcal L_A^x)\) the cone of positive
Radon measures on \(\mathcal L_A^x\), that is, positive locally finite Borel measures
on \(\mathcal L_A^x\). Whenever convenient, such measures are identified with Borel
measures on \(\R^d\) supported on \(\mathcal L_A^x\), by extension by zero outside
\(\mathcal L_A^x\).

\subsection{The leafwise \texorpdfstring{$h$}{h}-transform}
\label{subsec:intervalwise-backward}

Theorem~\ref{thm:general-sbts-main} identifies the optimal law \(P^*\)
abstractly as an \(h\)-transform, with no regularity required of \(R\). When
the reference moreover satisfies the uniqueness condition \((U)\) of
L\'eonard \cite{Leonard}, Theorems~2.1 and~2.3 of \cite{Leonard} upgrade this
to a controlled semimartingale representation: \(P^*\) is a controlled
diffusion with respect to the predictable bracket of \(R\), driven by an
adapted drift correction \(\beta^*\). The triangular frozen references of
Subsection~\ref{subsec:conditional-to-leaf} satisfy condition \((U)\)
conditionally on the triangular environment: given the latent realisations,
the reference density factorises into a finite product of explicit conditional
Gaussians, and the associated stochastic integrals are finite. The next lemma
identifies this drift correction in closed form on a single frozen interval.

\begin{lemma}[Controlled diffusion induced by a terminal martingale density on one frozen interval]
\label{lem:frozen-interval-controlled-diffusion}
Let \(0\le S<T\), let \(x\in\R^d\), and let
\[
(\Omega,\G,(\G_t)_{t\in[S,T]},\Q)
\]
support a \(d\)-dimensional Brownian motion \(W\).
Let \(\sigma\in\R^{d\times d}\), set
\[
A:=\sigma\sigma^\top\in\mathbb S_+^d,
\qquad
X_t:=x+\sigma(W_t-W_S),
\qquad t\in[S,T].
\]
Let \(r:=\rank(A) \ge1\), and let \(q_1,\dots,q_r\in\R^d\) be orthonormal eigenvectors of \(A\)
associated with its strictly positive eigenvalues \(\lambda_1,\dots,\lambda_r>0\).

Let
\[
f:\mathcal L_A^x\to[0,\infty]
\]
be a Borel measurable function, identified throughout with its extension
by zero outside \(\mathcal L_A^x\). Assume that
\[
\E_\Q[f(X_T)]=1,
\]
and that \(f\) is \((A,S,T,x)\)-admissible.

Define the martingale probability \(\Q^f\) on \((\Omega,\G_T)\) by
\[
\frac{d\Q^f}{d\Q}:=f(X_T),
\]
and define
\[
H_t:=\E_\Q[f(X_T)\mid \G_t],
\qquad t\in[S,T].
\]

Then there exists a unique Borel function
\[
h:(S,T)\times\mathcal L_A^x\to(0,\infty]
\]
with \(H_t=h(t,X_t)\) \(\Q\)-a.s.\ for every \(t\in(S,T)\), namely the backward
heat propagation
\[
h(t,y)
=
(P_{t,T}^A f)(y)
=
\int_{\mathcal L_A^x} p_A(t,y;T,z)\,f(z)\,d\ell_A^x(z),
\qquad
(t,y)\in(S,T)\times\mathcal L_A^x.
\]
Moreover, the following hold.

\begin{enumerate}
\item[(i)]
\(h\) is \(C^1\) in time and \(C^\infty\) along the diffusive directions
\(q_1,\dots,q_r\) on \((S,T)\times\mathcal L_A^x\), strictly positive there, and
solves the backward heat equation on the active leaf,
\[
\partial_t h(t,y)
+
\frac12\sum_{j=1}^r \lambda_j\,D^2_{q_jq_j}h(t,y)=0,
\qquad
(t,y)\in(S,T)\times\mathcal L_A^x;
\]
it attains the terminal datum in the sense that
\(h(t,\cdot)\,\ell_A^x\to f\,\ell_A^x\) in
\(\mathcal M_+^{\mathrm{loc}}(\mathcal L_A^x)\) as \(t\uparrow T\).

\item[(ii)]
The density process \(M_t:=h(t,X_t)=H_t\) is a positive \(\Q\)-martingale and
satisfies the intrinsic It\^o formula
\[
dM_t
=
\sum_{j=1}^r \sqrt{\lambda_j}\,D_{q_j}h(t,X_t)\,dB_t^j
=
\nabla h(t,X_t)\,\sigma\,dW_t,
\]
where \(B=(B^1,\dots,B^r)\) is the \(r\)-dimensional Brownian motion driving the
active directions (constructed in the proof).

\item[(iii)]
Under \(\Q^f\) there is a \(d\)-dimensional Brownian motion \(W^f\) such that,
on \((S,T)\),
\begin{align*}
dX_t
&=
A\,\nabla\log h(t,X_t)\,dt+\sigma\,dW_t^f
\\
&=
\sum_{j=1}^r \lambda_j\,D_{q_j}\log h(t,X_t)\,q_j\,dt+\sigma\,dW_t^f .
\end{align*}
The drift is intrinsic to the active covariance directions and is well defined
even when \(A\) is rank-deficient.
\end{enumerate}
\end{lemma}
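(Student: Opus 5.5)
The plan is to reduce everything to a nondegenerate heat equation on \(\R^r\) through the coordinates \(\Xi_x\). Write \(\sigma^\top q_j=\sqrt{\lambda_j}\,u_j\), where the \(u_j:=\lambda_j^{-1/2}\sigma^\top q_j\) are orthonormal in \(\R^d\); this holds because \(\langle\sigma^\top q_j,\sigma^\top q_k\rangle=\langle Aq_j,q_k\rangle=\lambda_j\delta_{jk}\). Set \(B_t^j:=\langle u_j,W_t-W_S\rangle\). Then \(B=(B^1,\dots,B^r)\) is an \(r\)-dimensional \((\G_t)\)-Brownian motion. Since \(\operatorname{Ran}\sigma=\operatorname{Ran}A=E_A\), we have
\[
X_t=x+\sum_{j=1}^r\sqrt{\lambda_j}\,B_t^j\,q_j .
\]
So \(X\) stays on \(\mathcal L_A^x\), and in the coordinates \(\xi=\Xi_x^{-1}(X)\) it is a Brownian motion with diagonal covariance \(\diag(\lambda_j)\).

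\textbf{Identification of \(h\) and part (i).} By the Markov property of \(B\), for \(t\in(S,T)\) we get \(H_t=\E_\Q[f(X_T)\mid\G_t]=\int f(X_t+\cdot)\,d\mathcal N(0,(T-t)A)\). Because the Gaussian \(\mathcal N(0,(T-t)A)\) lives on \(E_A\) with density \(p_A(t,X_t;T,\cdot)\) with respect to \(\ell_A^x\), this gives \(H_t=(P^A_{t,T}f)(X_t)\). Uniqueness of \(h\) on \((S,T)\times\mathcal L_A^x\) comes from the fact that \(X_t\) has a strictly positive density on the whole leaf. A Borel function that equals \(H_t\) a.s. is therefore determined \(\ell_A^x\)-a.e., and continuity (from the next step) pins it down everywhere. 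Admissibility makes \(h\) finite.

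Smoothness is the standard heat-kernel argument in \(\R^r\), and I expect it to be the main technical step. For \((t,y)\) in a compact subset of \((S,T)\times\mathcal L_A^x\), I will bound the \(t\)- and \(\xi\)-derivatives of \(p_A(t,y;T,z)\) by a polynomial in \(|z|\) times \(p_A(t',y';T,z)\), for a slightly earlier time \(t'\) and a nearby point \(y'\). The extra Gaussian factor \(e^{-\epsilon|z|^2}\) absorbs the polynomial, and finiteness of \(P^A_{t',T}f(y')\), which is exactly admissibility, then justifies differentiating under the integral. The heat equation holds because the kernel satisfies it in \((t,y)\). Strict positivity follows since \(f\ge0\) and \(\E f(X_T)=1\) force \(f>0\) on a set of positive \(\ell_A^x\)-measure, and the kernel is positive everywhere. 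For the terminal datum, \(p_A(t,\cdot;T,z)\) is an approximate identity as \(t\uparrow T\). For \(\varphi\in C_c(\mathcal L_A^x)\), write \(\int\varphi\,h(t,\cdot)\,d\ell=\int (P_{t,T}^{A,*}\varphi)\,f\,d\ell\). Here \(P^{A,*}\varphi\to\varphi\) uniformly, and \(P^{A,*}\varphi\) is dominated by a Gaussian tail that is integrable against \(f\) by admissibility, so dominated convergence gives the vague convergence.

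\textbf{Parts (ii) and (iii).} \(M_t=H_t\) is a martingale by construction, and it is positive by (i). Itô's formula applies to \(h(t,\Xi_x(\xi_t))\) on \([S,T-\epsilon]\), where \(h\) is \(C^{1,2}\). The \(dt\) terms cancel by the backward heat equation, which leaves \(dM_t=\sum_j\sqrt{\lambda_j}D_{q_j}h\,dB^j_t\). Using \(\sqrt{\lambda_j}\,dB^j=\langle \sigma^\top q_j,dW\rangle\) and \(\nabla h\in E_A\) (the gradient is taken intrinsically, i.e. extended constantly off the leaf), this equals \(\nabla h\,\sigma\,dW_t\).

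For (iii), on \([S,T-\epsilon]\) we have \(d\Q^f/d\Q|_{\G_t}=M_t\), and Girsanov's theorem gives a \(\Q^f\)-Brownian motion \(W^f_t=W_t-\int\sigma^\top\nabla\log h\,ds\). Substituting, \(dX=\sigma\,dW=\sigma\sigma^\top\nabla\log h\,dt+\sigma\,dW^f\), i.e. drift \(A\nabla\log h=\sum_j\lambda_j D_{q_j}\log h\,q_j\). Letting \(\epsilon\downarrow0\), consistency of the Brownian motions yields \(W^f\) on \((S,T)\). This expression involves only derivatives along \(E_A\), which gives the intrinsic, rank-deficient statement.
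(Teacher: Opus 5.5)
Your proof is correct and follows essentially the same route as the paper. Your vectors \(u_j=\lambda_j^{-1/2}\sigma^\top q_j\) are exactly the rows of the paper's matrix \(R=\Lambda_r^{-1/2}Q_r^\top\sigma\); the identification of \(h\) via independent increments, the reduction to the nondegenerate heat semigroup on \(\R^r\), and the It\^o computation all coincide with the paper's, and your domination-by-admissibility arguments spell out what the paper calls classical. The one real variation is in (iii): you apply Girsanov directly to the \(d\)-dimensional \(W\) on \([S,T-\epsilon]\), where \(M>0\), and obtain \(W^f=W-\int\sigma^\top\nabla\log h\,ds\) on the original space, whereas the paper applies Girsanov to the active \(B\) only and then adjoins an independent \((d-r)\)-dimensional Brownian motion along \(\ker\sigma\), a completion step your version avoids.
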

\begin{proof}
Let \(Q_r:=[q_1,\dots,q_r]\), let \(Q_0\) span \(\ker A\), and set
\(\eta_0:=Q_0^\top x\), so that
\(\mathcal L_A^x=\{Q_r\xi+Q_0\eta_0:\xi\in\R^r\}\). Since \(Q_0^\top A Q_0=0\)
gives \(Q_0^\top\sigma=0\), we have \(Q_0^\top X_t\equiv\eta_0\), hence
\(X_t\in\mathcal L_A^x\) for all \(t\), \(\Q\)-a.s., and \(f(X_T)\) is well
defined. As \(X_T=X_t+\sigma(W_T-W_t)\) with \(W_T-W_t\) independent of
\(\G_t\), for \(t\in(S,T)\)
\[
H_t=\E_\Q[f(X_T)\mid\G_t]
=\E_\Q\!\bigl[f(y+\sigma(W_T-W_t))\bigr]\big|_{y=X_t}
=h(t,X_t),
\]
with $h(t,y)=(P_{t,T}^A f)(y),$ which is finite by \((A,S,T,x)\)-admissibility.

\emph{Straightening.} On the active coordinates \(\xi\in\R^r\) set
\(U(t,\xi):=h(t,Q_r\xi+Q_0\eta_0)\). Only the \(q\)-directions are diffusive, so
\(U\) is the classical nondegenerate heat semigroup in \(\R^r\), with generator
\(\frac12\sum_j\lambda_j\partial^2_{\xi_j}\), applied to the terminal datum
\(\xi\mapsto f(Q_r\xi+Q_0\eta_0)\). Hence \(U\in C^{1,\infty}((S,T)\times\R^r)\),
\(U>0\), \(U\) solves \(\partial_tU+\frac12\sum_j\lambda_j\partial^2_{\xi_j}U=0\),
and \(U(t,\cdot)\,d\xi\to f(Q_r\cdot+Q_0\eta_0)\,d\xi\) in
\(\mathcal M^{\mathrm{loc}}_+(\R^r)\) as \(t\uparrow T\), with \(h\) the unique such
solution. Transporting these statements back through the affine isometry
\(\xi\mapsto Q_r\xi+Q_0\eta_0\), and pushing forward Lebesgue measure to
\(\ell_A^x\), yields (i); strict positivity on all of
\((S,T)\times\mathcal L_A^x\) follows from \(p_A>0\) together with
\(\int_{\mathcal L_A^x}f\,p_A(S,x;T,\cdot)\,d\ell_A^x=\E_\Q[f(X_T)]=1\), so that
\(f\) is not \(\ell_A^x\)-negligible.

\emph{It\^o and Girsanov.} Put \(Z_t:=Q_r^\top X_t\), so \(X_t=Q_rZ_t+Q_0\eta_0\),
and let \(R:=\Lambda_r^{-1/2}Q_r^\top\sigma\), \(\Lambda_r:=\diag(\lambda_1,\dots,\lambda_r)\).
Then \(RR^\top=\Lambda_r^{-1/2}Q_r^\top A Q_r\Lambda_r^{-1/2}=I_r\), so
\(B_t:=R(W_t-W_S)\) is an \(r\)-dimensional \(\Q\)-Brownian motion and
\(dZ_t=Q_r^\top\sigma\,dW_t=\Lambda_r^{1/2}\,dB_t\). Since
\(M_t=H_t=U(t,Z_t)\) with \(U\in C^{1,2}\), the classical It\^o formula and the
heat equation give
\[
dM_t=\sum_{j=1}^r\sqrt{\lambda_j}\,\partial_{\xi_j}U(t,Z_t)\,dB_t^j
=\nabla h(t,X_t)\,\sigma\,dW_t,
\]
using \(\partial_{\xi_j}U=D_{q_j}h\) and \(q_j^\top\sigma=\sqrt{\lambda_j}\,R_j\)
(the \(j\)-th row of \(R\)); thus \(M\) is a positive \(\Q\)-martingale, proving
(ii). Applying Girsanov to \(d\Q^f/d\Q=M_T\) yields, in the active coordinates,
\[
dZ_t=\sum_{j=1}^r\lambda_j\,\partial_{\xi_j}\log U(t,Z_t)\,e_j\,dt
+\Lambda_r^{1/2}\,dB_t^f
\]
for a \(\Q^f\)-Brownian motion \(B^f\). Complete the kernel directions by an
independent \((d-r)\)-dimensional \(\Q^f\)-Brownian motion \(B^{f,\perp}\) and let
\(\bar Q_0\) be an orthonormal basis of \(\ker\sigma\); since
\(R^\top R+\bar Q_0\bar Q_0^\top=I_d\), the process
\(W_t^f:=R^\top B_t^f+\bar Q_0 B_t^{f,\perp}\) is a \(d\)-dimensional
\(\Q^f\)-Brownian motion with \(\sigma\,dW_t^f=Q_r\Lambda_r^{1/2}\,dB_t^f\).
Multiplying the displayed equation by \(Q_r\) gives
\[
dX_t=\sum_{j=1}^r\lambda_j\,D_{q_j}\log h(t,X_t)\,q_j\,dt+\sigma\,dW_t^f
=A\,\nabla\log h(t,X_t)\,dt+\sigma\,dW_t^f,
\]
the intrinsic controlled diffusion of (iii).
\end{proof}

\subsection{Identification of the conditional optimiser}
\label{subsec:intervalwise-backward-local}

We now apply Lemma~\ref{lem:frozen-interval-controlled-diffusion} to the
\(i\)-th leaf problem and identify the resulting controlled diffusion with the
conditional optimiser required by
Section~\ref{sec:reference-based-sbts-projection}.

\begin{proposition}[Local intervalwise representation of the minimiser]
\label{prop:local-intervalwise-minimiser}
Adopt the setup of Subsection~\ref{subsec:conditional-to-leaf}, with \(\mu\)
satisfying \eqref{eq:finite-entropy-assumption} conditionally on the latent
realisation. Fix \(i\in\{0,\dots,N-1\}\), a coarse past
\(\mathbf X_i=\mathbf x_i\), and a latent realisation \(y_{i+1}\in\mathcal Y\)
determining the frozen reference on \((t_i,t_{i+1}]\). Every \(i\)-indexed
object below depends on \((\mathbf x_i,y_{i+1})\), suppressed from the
notation: we write
\[
\sigma_i,
\qquad
A_i=\sigma_i\sigma_i^\top,
\qquad
r_i=\rank A_i,
\qquad
(\lambda_{i,j},q_{i,j})_{j=1}^{r_i}
\]
for the frozen coefficients and the positive eigenpairs of \(A_i\),
\[
\mathcal L_i:=\operatorname{Ran}A_i,
\qquad
\ell_i:=\text{canonical Lebesgue measure on }\mathcal L_i,
\]
for the active subspace of admissible increments (the active affine leaf in
position space being \(x_i+\mathcal L_i\)), and
\[
P^{*,i}:=P^*(\,\cdot\mid \mathbf X_i=\mathbf x_i,\,Y=y_{i+1})
\]
for the conditional law of the minimiser. With
\(\Delta X_{i+1}:=X_{t_{i+1}}-X_{t_i}\), let
\begin{align*}
\eta_i^*
&:= P^*\!\left(\Delta X_{i+1}\in\cdot\mid \mathbf X_i=\mathbf x_i,Y=y_{i+1}\right),
\\
k_i^R
&:= R\!\left(\Delta X_{i+1}\in \cdot\mid \mathbf X_i=\mathbf x_i,Y=y_{i+1}\right)
\end{align*}
be the conditional next-increment laws under the minimiser and under the
reference. Choosing regular conditional probabilities consistently under
\(P^\ast=G(\Pi)\,R\), we have \(\eta_i^*\ll k_i^R\) for
\(P^\ast\circ (\mathbf X_i,Y)^{-1}\)-a.e.\ \((\mathbf x_i,y_{i+1})\), and we
set
\[
f_i(\delta)
:= \frac{d\eta_i^*}{dk_i^R}(\delta),
\qquad \delta\in\R^d,
\]
viewed as a Borel function on \(\mathcal L_i\) and extended by zero outside
\(\mathcal L_i\). 

Then, for \(P^*\circ (\mathbf X_i,Y)^{-1}\)-a.e.\
\((\mathbf x_i,y_{i+1})\), the following hold.

\begin{enumerate}
\item[(a)]
The function \(f_i\) is
\((A_i,t_i,t_{i+1},x_i)\)-admissible in the sense of
\eqref{cond:admissibility} (formulated on the active increment
subspace \(\mathcal L_i\)).

\item[(b)]
Under \(P^{*,i}\), there exists a \(d\)-dimensional Brownian motion
\(W^{*,i}\) such that, on \([t_i,t_{i+1})\),
\[
dX_t=b_i^*(t,X_t)\,dt+\sigma_i\,dW_t^{*,i},
\]
where
\[
b_i^*(t,x)
=
\sum_{j=1}^{r_i}
\lambda_{i,j}\,
D_{q_{i,j}}\log H_i(t,x)\,
q_{i,j},
\qquad
dt\otimes P^{*,i}\text{-a.e.}
\]
and the potential \(H_i\) being the unique function given by
Lemma~\ref{lem:frozen-interval-controlled-diffusion}, namely the backward heat
potential associated with the frozen interval \([t_i,t_{i+1}]\), the frozen
diffusion matrix \(\sigma_i\), and the terminal tilt \(f_i\) on
\(\mathcal L_i\). In particular,
\[
\partial_t H_i(t,x)
+
\frac12\sum_{j=1}^{r_i}\lambda_{i,j}\,
D^2_{q_{i,j}q_{i,j}}H_i(t,x)
=0,
\quad
(t,x)\in(t_i,t_{i+1})\times (x_i+\mathcal L_i),
\]
with the terminal datum attained as
\(H_i(t,\cdot)\,\ell_i\to f_i\,\ell_i\) as \(t\uparrow t_{i+1}\), and
\[
H_i(t,x)>0
\qquad\forall (t,x)\in(t_i,t_{i+1})\times (x_i+\mathcal L_i).
\]
\end{enumerate}
\end{proposition}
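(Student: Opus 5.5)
The plan is to reduce everything to Lemma~\ref{lem:frozen-interval-controlled-diffusion} applied on the probability space carrying the conditional reference \(R_{X,i}^{\mathbf z_i,y_{i+1}}\). First I identify the conditional law of the minimiser on the interval. By Theorem~\ref{thm:general-sbts-main}, \(P^*=G(\Pi)R\). Combining Proposition~\ref{prop:temporal-conditional-characterization} (temporal) with Proposition~\ref{prop:triangular-conditional-characterization} (vertical), for \(P^*\circ(\mathbf X_i,Y)^{-1}\)-a.e.\ conditioning the restriction of \(P^{*,i}\) to \([t_i,t_{i+1}]\) is the unique minimiser of the one-step leaf problem. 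By Theorem~\ref{thm:general-sbts-main} applied to the conditional reference, this minimiser has density \(f_i(\Delta X_{i+1})\) with respect to the frozen Gaussian reference. Here \(f_i\) is defined as \(d\eta_i^*/dk_i^R\), and \(\eta_i^*=\eta_i^\mu\) because \(P^*\) has grid law \(\mu\).

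The absolute continuity \(\eta_i^*\ll k_i^R\) is obtained by disintegrating the identity \(P^*=G(\Pi)R\) along \((\mathbf X_i,Y)\). One integrates out the future grid variables \(\mathbf Z_{>i+1}\) via grid-compatibility. This exhibits the conditional density of \(\Delta X_{i+1}\) as a ratio: the partial integral of \(G\) over later grid points, divided by its normalisation. That ratio is finite and positive \(P^*\)-a.e. Consequently \(E_{k_i^R}[f_i]=1\) a.e., which is the normalisation hypothesis of the Lemma.

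Next comes the admissibility (a), which I expect to be the main obstacle, since \(f_i\) is only known to be \(k_i^R\)-integrable. I would exploit the Gaussian structure. For \(t\in(t_i,t_{i+1})\) and \(y\) on the leaf,
\[
\frac{p_A(t,y;t_{i+1},z)}{p_A(t_i,x_i;t_{i+1},z)}
=
\Bigl(\tfrac{t_{i+1}-t_i}{t_{i+1}-t}\Bigr)^{r_i/2}
\exp\!\Bigl(\tfrac{|z-x_i|_{A}^2}{2(t_{i+1}-t_i)}-\tfrac{|z-y|_{A}^2}{2(t_{i+1}-t)}\Bigr),
\]
where \(|v|_A^2=\langle A^{-1}_{E_A}v,v\rangle\). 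The exponent is a quadratic in \(z\) with strictly negative leading coefficient, because \(t_{i+1}-t<t_{i+1}-t_i\). Hence the ratio is bounded uniformly in \(z\). This gives \((P^A_{t,t_{i+1}}f_i)(y)\le C(t,y)\,E_{k_i^R}[f_i]=C(t,y)<\infty\), which is exactly condition \eqref{cond:admissibility}. Taking a countable a.e.\ set of conditionings, every later statement holds simultaneously.

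With normalisation and admissibility established, I apply Lemma~\ref{lem:frozen-interval-controlled-diffusion} with \(S=t_i\), \(T=t_{i+1}\), \(x=x_i\), \(\sigma=\sigma_i\), \(\Q\) the conditional reference, and \(f=f_i(\cdot-x_i)\) on \(x_i+\mathcal L_i\). Part (i) of the Lemma gives the backward heat equation, the terminal attainment \(H_i(t,\cdot)\ell_i\to f_i\ell_i\), and strict positivity of \(H_i\). Part (iii) gives the SDE under \(\Q^f\). Since \(\Q^f=P^{*,i}\) on \(\G_{t_{i+1}}\) by the identification in the first paragraph, the drift is \(b_i^*\), with \(W^{*,i}:=W^f\). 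The drift identity holds \(dt\otimes P^{*,i}\)-a.e., because \(X_t\) remains on the leaf and \(H_i(t,X_t)>0\) there.

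Two technical points need care. The first is measurability of the conditional kernels in \((\mathbf x_i,y_{i+1})\). The second is that regular versions are chosen consistently under \(P^*\), so that the a.e.\ sets from the two propositions and from the disintegration can be intersected. Both are routine given standard Borel spaces.
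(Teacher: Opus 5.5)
Your proposal is correct and follows the paper's proof essentially step by step. You identify \(P^{*,i}\) as the \(h\)-transform of the conditional frozen reference by \(f_i(\Delta X_{i+1})\) using the conditional projection structure and \(\eta_i^*=\eta_i^\mu\). You prove admissibility from the \(\alpha<\beta\) gap, which makes the Gaussian kernel ratio bounded in the terminal variable. You then invoke Lemma~\ref{lem:frozen-interval-controlled-diffusion} with \(S=t_i\), \(x=x_i\), \(\sigma=\sigma_i\). Your additions only spell out points the paper leaves implicit: the normalisation \(E_{k_i^R}[f_i]=1\), the absolute continuity obtained by disintegrating \(P^*=G(\Pi)R\), and the shift \(f=f_i(\cdot-x_i)\) from increment to position variables.
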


\begin{proof}
Write \(R^i:=R(\,\cdot\mid\mathbf X_i=\mathbf x_i,Y=y_{i+1})\) for the
conditional reference and \(p_i(t,x;t_{i+1},\delta)\) for the reference density
of the increment \(\Delta X_{i+1}=\delta\) given \(X_t=x\); under the frozen
reference this is the intrinsic Gaussian heat kernel on \(\mathcal L_i\)
attached to \(A_i\), and
\(k_i^R(d\delta)=p_i(t_i,x_i;t_{i+1},\delta)\,\ell_i(d\delta)\). Fix
\((\mathbf x_i,y_{i+1})\) in a full-measure set; under \(R^i\) the process solves
\(dX_t=\sigma_i\,dW_t^R\), \(X_{t_i}=x_i\), so \(\Delta X_{i+1}\in\mathcal L_i\)
and \(X\) evolves on \(x_i+\mathcal L_i\).

\emph{Terminal tilt via the conditional entropy projection.} By the
one-step triangular characterisation of
Proposition~\ref{prop:triangular-conditional-characterization}, and by
Corollary~\ref{cor:terminal-conditional-law}, the conditional optimiser
on the current leaf is the \(h\)-transform of \(R_i\) by the terminal
density
\[
f_i=\frac{d\eta_i^\mu}{dk_i^R}.
\]
Equivalently, \(f_i=d\eta_i^*/dk_i^R\) is the terminal density tilting
\(R^i\) into \(P^{*,i}\), i.e.\
\(E_{P^{*,i}}[F]=E_{R^i}[F\,f_i(\Delta X_{i+1})]\) for every bounded
\(\F_{t_{i+1}}\)-measurable \(F\); Corollary~\ref{cor:terminal-conditional-law}
identifies \(\eta_i^*\) with the target transition \(\eta_i^\mu\). The
local terminal tilt is thus justified not by Markovianity alone, but by
the conditional entropy projection structure together with the
identification of the projected grid law with \(\mu\).

\emph{Admissibility: the \(\alpha\)--\(\beta\) gap.} For \(t\in(t_i,t_{i+1})\)
and \(x\in x_i+\mathcal L_i\), since \(f_i\,k_i^R=\eta_i^*\),
\[
\int_{\mathcal L_i} p_i(t,x;t_{i+1},\delta)\,f_i(\delta)\,d\ell_i(\delta)
=
\int_{\mathcal L_i}
\frac{p_i(t,x;t_{i+1},\delta)}{p_i(t_i,x_i;t_{i+1},\delta)}\,\eta_i^*(d\delta),
\]
and, in intrinsic coordinates on \(\mathcal L_i\) with Mahalanobis norm
\(\|v\|_i^2:=\langle A_i^+v,v\rangle\),
\[
\frac{p_i(t,x;t_{i+1},\delta)}{p_i(t_i,x_i;t_{i+1},\delta)}
=
\Big(\tfrac{t_{i+1}-t_i}{t_{i+1}-t}\Big)^{r_i/2}
\exp\!\Big(-\tfrac{\|x_i+\delta-x\|_i^2}{2(t_{i+1}-t)}
+\tfrac{\|\delta\|_i^2}{2(t_{i+1}-t_i)}\Big).
\]
The quadratic part in \(\delta\) is
\(-\tfrac12\big(\tfrac{1}{t_{i+1}-t}-\tfrac{1}{t_{i+1}-t_i}\big)\|\delta\|_i^2\),
strictly negative because the variance from \(t\) to \(t_{i+1}\) is smaller than
that from \(t_i\) to \(t_{i+1}\). Completing the square bounds the ratio by a
Gaussian-type function of \(\delta\); since \(\eta_i^*\) is a probability
measure, the integral is finite and \(f_i\) is
\((A_i,t_i,t_{i+1},x_i)\)-admissible, proving (a). The same integral at the
anchor \((t_i,x_i)\) equals \(E_{R^i}[f_i(\Delta X_{i+1})]=1\), fixing the
boundary normalisation \(H_i(t_i,x_i)=1\).

\emph{Conclusion.} We are in the setting of
Lemma~\ref{lem:frozen-interval-controlled-diffusion} with
\(S=t_i,\ x=x_i,\ \sigma=\sigma_i,\ f=f_i\), which provides the unique backward
heat potential \(H_i\) on \((t_i,t_{i+1})\times\mathcal L_i\) with terminal datum
\(f_i\,\ell_i\) and the controlled diffusion
\[
dX_t=\sum_{j=1}^{r_i}\lambda_{i,j}\,D_{q_{i,j}}\log H_i(t,X_t)\,q_{i,j}\,dt
+\sigma_i\,dW_t^{*,i},
\qquad t\in(t_i,t_{i+1}),
\]
under \(P^{*,i}\), proving (b).
\end{proof}

\begin{remark}[Boundary semantics at \(t=t_i\)]
\label{rem:boundary-semantics-t-i}
Under \(P^{*,i}\) the controlled diffusion is initialised at the
single point \(X_{t_i}=x_i\), so the natural domain of \(H_i\) and
\(b_i^*\) is the \emph{open} cylinder
\((t_i,t_{i+1})\times\mathcal L_i\), and only the diagonal evaluation at
\((t_i,x_i)\) is meaningful. We therefore set
\[
H_i(t_i,x_i):=\lim_{t\downarrow t_i}H_i(t,x_i)=1,
\qquad
b_i^*(t_i,x_i):=\lim_{t\downarrow t_i}b_i^*(t,x_i),
\]
respectively the bridge normalisation and the starting drift of the
controlled diffusion. The same convention applies to the
regularised and empirical potentials of
Section~\ref{sec:analytic-statistical-terminal-datum}.
\end{remark}

Since \(P^*\) projects onto \(\mu\) on the augmented state space, the terminal
datum \(f_i\) is determined entirely by \(\mu\), as we now record.

\begin{corollary}[Identification of the exact terminal datum under \(\mu\)]
\label{cor:terminal-conditional-law}
Adopt the setup and shorthand of
Proposition~\ref{prop:local-intervalwise-minimiser}, and set
\[
\eta_i^\mu
:= \mu\!\left(\Delta X_{i+1}\in\cdot\mid \mathbf X_i=\mathbf x_i,Y=y_{i+1}\right).
\]
Let \(\mu_i\) denote the joint law of \((\mathbf X_i,Y)\) on the
augmented state space. Then, for \(\mu_i\)-a.e.\
\((\mathbf x_i,y_{i+1})\),
\[
\eta_i^* = \eta_i^\mu,
\qquad
f_i(\delta) = \frac{d\eta_i^\mu}{dk_i^R}(\delta).
\]
\end{corollary}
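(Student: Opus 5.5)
The plan is to use the fact that \(P^*\) satisfies the grid constraint on the augmented state, so its conditional next-increment laws coincide with those of \(\mu\). The density formula for \(f_i\) then follows from the definition in Proposition~\ref{prop:local-intervalwise-minimiser}.

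\emph{Step 1 (grid marginal).} By Theorem~\ref{thm:general-sbts-main}(ii), applied on the augmented path space conditionally on the latent realisation, \(P^*=G(\Pi)\,R\) satisfies \(P^*\circ\Pi^{-1}=\mu\). Since \((\mathbf X_i,Y)\) and \(\Delta X_{i+1}=X_{t_{i+1}}-X_{t_i}\) are Borel functions of the augmented grid vector \(\Pi\), the joint law of \((\mathbf X_i,Y,\Delta X_{i+1})\) under \(P^*\) equals its law under \(\mu\). In particular \(P^*\circ(\mathbf X_i,Y)^{-1}=\mu_i\), so the two ``a.e.'' qualifiers (under \(P^*\circ(\mathbf X_i,Y)^{-1}\) and under \(\mu_i\)) refer to the same null sets.

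\emph{Step 2 (equality of conditional laws).} Regular conditional distributions of \(\Delta X_{i+1}\) given \((\mathbf X_i,Y)\) depend only on the joint law of the pair. Hence, by the almost-everywhere uniqueness of disintegrations on Polish spaces, \(\eta_i^*=\eta_i^\mu\) for \(\mu_i\)-a.e.\ \((\mathbf x_i,y_{i+1})\).

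To make this concrete, I would test against products \(\varphi(\mathbf x_i,y)\psi(\delta)\) with \(\varphi,\psi\) bounded. The identity
\[
\int\varphi\int\psi\,d\eta_i^*\,d\mu_i=\int\varphi\int\psi\,d\eta_i^\mu\,d\mu_i
\]
holds by Step 1. Letting \(\psi\) range over a countable convergence-determining class, and then \(\varphi\) range over all bounded functions, gives the a.e.\ equality of the kernels.

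\emph{Step 3 (densities).} Proposition~\ref{prop:local-intervalwise-minimiser} gives \(\eta_i^*\ll k_i^R\) a.e. Intersecting the two full-measure sets, \(\eta_i^\mu\ll k_i^R\) there, and
\[
f_i=\frac{d\eta_i^*}{dk_i^R}=\frac{d\eta_i^\mu}{dk_i^R}
\]
\(k_i^R\)-a.e. by uniqueness of Radon--Nikodym derivatives. The derivative is viewed on \(\mathcal L_i\), where \(k_i^R\) is supported, and extended by zero outside.

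\emph{Main obstacle.} The delicate point is the consistent choice of versions. The conditional laws \(k_i^R\) and \(\eta_i^*\) were chosen under different measures, \(R\) and \(P^*=G(\Pi)R\), while \(\eta_i^\mu\) is chosen under \(\mu\).

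I would handle this by first fixing the latent realisation, so that the finite-entropy hypothesis gives \(\mu\ll r\). Then \(\mu_i\ll r\circ(\mathbf X_i,Y)^{-1}\), so \(R\)-a.e.\ statements about \(k_i^R\) transfer to \(\mu_i\)-a.e.\ statements.

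The remaining task is to discard a \(\mu_i\)-null set on which any of the three kernels are ill-behaved. This is done by a countable-intersection argument: only countably many test functions \(\psi\) are involved, so only countably many null sets need to be removed.
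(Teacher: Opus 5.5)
Your proposal is correct and follows the paper's proof. The paper's argument is the same: the augmented grid constraint \(P^*\circ(X_{t_0},\dots,X_{t_N},Y)^{-1}=\mu\) forces the conditional laws of \(\Delta X_{i+1}\) given \((\mathbf X_i,Y)\) to agree \(\mu_i\)-a.e., and substituting into the definition of \(f_i\) gives the density identity. Your handling of the choice of versions and of countably many null sets spells out what the paper's two-line proof leaves implicit.
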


\begin{proof}
On the augmented space \(P^*\circ(X_{t_0},\dots,X_{t_N},Y)^{-1}=\mu\), so the
conditional laws of \(\Delta X_{i+1}\) given \((\mathbf X_i,Y)\) agree under
\(P^*\) and \(\mu\). Hence \(\eta_i^*=\eta_i^\mu\) for \(\mu_i\)-a.e.\
\((\mathbf x_i,y_{i+1})\), and substituting into the definition of \(f_i\)
yields the claim.
\end{proof}

\paragraph{Reconstruction of the conditional bridge.}
Proposition~\ref{prop:local-intervalwise-minimiser} and
Corollary~\ref{cor:terminal-conditional-law} identify, for a.e.\ conditioning
\((\mathbf x_i,y_{i+1})\), the conditional law \(P^{*,i}\) as the
\(h\)-transform of the frozen kernel \(k_i^R\) by the terminal tilt
\(f_i=d\eta_i^\mu/dk_i^R\). This is exactly the conditional optimiser
\(Q_{X,i}^{\star,\mathbf z_i,y_{i+1}}\) required by
Proposition~\ref{prop:triangular-conditional-characterization}\,(ii), and the
measurable kernel \((\mathbf x_i,y_{i+1})\mapsto P^{*,i}\) is the family of
one-step conditional optimisers required by
Proposition~\ref{prop:temporal-conditional-characterization}; concatenated
along the realised triangular grid with the lower-level optimisers, these
kernels reconstruct the global minimiser \(P^*\).

\paragraph{What remains to be estimated.}
Once the conditioning \((\mathbf x_i,y_{i+1})\) is fixed, the frozen
covariance \(A_i\), the reference kernel \(k_i^R\), and the leafwise heat
semigroup \(P^{A_i}_{t,t_{i+1}}\) are deterministic, in closed form from the
construction of the triangular reference. The terminal tilt
\(f_i=d\eta_i^\mu/dk_i^R\) and the backward potential
\(H_i=P^{A_i}_{t,t_{i+1}}f_i\) are therefore explicit functions of the single
unknown \(\eta_i^\mu\): the only statistical object to be estimated from data
is the conditional next-increment law \(\eta_i^\mu\). This is the estimation
problem taken up in Section~\ref{sec:analytic-statistical-terminal-datum}.

\subsection{Frozen approximation of an ideal volatility-informed continuous reference}
\label{subsec:volatility-informed-freezing}

High-frequency observations need not be used by refining the SBTS
observation grid. Refining the output grid multiplies the number of
generation steps, of sliding windows, of log-weight updates, and the size of
the empirical databases, without necessarily adding information about the
drift on the coarse scale. A more natural use of intra-interval observations
is the estimation of the evolution of the volatility, or of the covariance,
inside each coarse interval. In TR-SBTS this information enriches the
reference: high-frequency data may enrich the covariance evolution within each coarse interval, while the generative grid remains
coarse. The refinement is therefore placed in the reference, rather than
necessarily in the generated observation grid.

In this perspective, the frozen reference of
Subsection~\ref{subsec:conditional-to-leaf} is a one-piece approximation of
an \emph{ideal} volatility-informed reference whose covariance varies
continuously inside each coarse interval. This subsection quantifies the
approximation and shows that the resulting bridge is stable under it. Since
the endpoint Gaussian scale is controlled by the cumulative covariance, the
relevant error is the error in the covariance cumulant, not the pointwise
volatility error.

Fix a coarse interval \([S,T]\), \(\Delta:=T-S\), a coarse past \(h\) at time
\(S\), and a latent volatility environment \(Y\). Conditionally on \((h,Y)\),
the ideal reference is the additive Gaussian martingale
\(dZ_t=\sigma(t,h,Y)\,dW_t\) with covariance
\(a_t(h,Y)=\sigma\sigma^\top\in\mathbb S_+^d\), possibly random and degenerate
but \emph{state-independent} in \(Z_t\). Define the covariance cumulant
\[
\Gamma_t(h,Y):=\int_S^t a_r(h,Y)\,dr,
\qquad
C_{S,T}(h,Y):=\Gamma_T(h,Y),
\]
which determines the terminal Gaussian scale and the active affine leaf
\(x+\operatorname{Ran}C_{S,T}\). The implementable construction replaces
\(a_t\) by the one-piece frozen covariance \(\bar a_{S,T}=C_{S,T}/\Delta\),
whose cumulant is the linear interpolation
\(\bar\Gamma_t=\frac{t-S}{\Delta}C_{S,T}\) between \(0\) and \(C_{S,T}\); more
generally, a refinement \(\mathcal P=\{S=s_0<\cdots<s_M=T\}\) yields the
\emph{piecewise-frozen} cumulant \(\Gamma^{\mathcal P}\) by linear
interpolation of \(\Gamma\) between consecutive gridpoints. The algorithm of
this work uses \(M=1\).

The two references share the same terminal covariance \(C_{S,T}\),
hence the same endpoint Gaussian scale and the same active leaf;
they differ only inside the interval. For a fixed endpoint pair
\((x,z)\) with \(z-x\in\operatorname{Ran}C_{S,T}\), the means of the
ideal and piecewise-frozen bridges are
\(m_t(\cdot;x,z)=x+\Gamma_t C_{S,T}^+(z-x)\) and
\(m_t^{\mathcal P}(\cdot;x,z)=x+\Gamma_t^{\mathcal P}C_{S,T}^+(z-x)\),
where \(C_{S,T}^+\) is the Moore--Penrose inverse on the active
leaf. Setting the intrinsic endpoint norm and the normalised
cumulant interpolation error
\[
r_{S,T}(h,Y;x,z) := \bigl|C_{S,T}^{+/2}(z-x)\bigr|,
\qquad
\eta_{\mathcal P}(h,Y) := \sup_{S\le t\le T}
\bigl\|(\Gamma_t-\Gamma_t^{\mathcal P})C_{S,T}^{+/2}\bigr\|,
\]
one has the uniform mean-bound
\(\sup_t|m_t-m_t^{\mathcal P}|\le\eta_{\mathcal P}\cdot r_{S,T}\),
and the covariance kernels of the two Gaussian bridges are
controlled by the same \(\eta_{\mathcal P}\) through their explicit
expressions in terms of \(\Gamma\) and \(C_{S,T}^+\).

\begin{proposition}[Stability of piecewise-frozen endpoint bridges]
\label{prop:piecewise-frozen-bridge-stability}
Under conditional Gaussianity of \(R^a\), if
\(\eta_{\mathcal P_n}(h,Y)\to 0\) in probability and the intrinsic
endpoint norms \(r_{S,T}(h,Y;x,z)\) are tight under the endpoint
laws \(\pi^{h,Y}\), then the piecewise-frozen entropy projection
\(P^{h,Y,\mathcal P_n,\star}\) converges in probability, in the weak
topology on \(C([S,T];\R^d)\), to its ideal counterpart
\(P^{h,Y,\star}\).
\end{proposition}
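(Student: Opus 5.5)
The plan is to reduce the statement to stability of Gaussian bridges, using the \(h\)-transform representation of Theorem~\ref{thm:general-sbts-main}. Conditionally on \((h,Y)\), the ideal reference \(R^a\) and the piecewise-frozen reference \(R^{\mathcal P_n}\) are additive Gaussian martingales with the same terminal covariance \(C_{S,T}\). Their endpoint laws \(r^{h,Y}\) on \(x+\operatorname{Ran}C_{S,T}\) therefore coincide, and so do the terminal tilts \(G=d\pi^{h,Y}/dr^{h,Y}\). By Theorem~\ref{thm:general-sbts-main}, each projection is obtained by reweighting the endpoint pair with the same density. Disintegrating along the endpoints, I would write both projections as mixtures of bridges over the same endpoint law \(\pi^{h,Y}\):
\[
P^{h,Y,\star}=\int \mathrm{Br}^{a}_{x,z}\,\pi^{h,Y}(dx,dz),
\qquad
P^{h,Y,\mathcal P_n,\star}=\int \mathrm{Br}^{\mathcal P_n}_{x,z}\,\pi^{h,Y}(dx,dz).
\]
Here \(\mathrm{Br}_{x,z}\) denotes the reference pinned at \(X_S=x\), \(X_T=z\). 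The key point is that both references share the same endpoint marginal, so the problem becomes a comparison of pinned Gaussian bridges with a common mixing law. It does not require any convergence of the potentials \(H\).

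Next, I would compare the bridges for a fixed endpoint pair. Each \(\mathrm{Br}_{x,z}\) is a Gaussian process on \(C([S,T];\R^d)\). Its mean is \(m_t\), respectively \(m_t^{\mathcal P}\), and its covariance kernel is
\[
K(s,t)=\Gamma_{s\wedge t}-\Gamma_s C_{S,T}^+\Gamma_t,
\]
with \(\Gamma\) replaced by \(\Gamma^{\mathcal P}\) in the frozen case.

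\begin{itemize}
\item The mean bound \(\sup_t|m_t-m_t^{\mathcal P}|\le\eta_{\mathcal P}\,r_{S,T}\) is already recorded above.
\item For the kernels, I will show \(\sup_{s,t}\|K-K^{\mathcal P}\|\to0\) when \(\eta_{\mathcal P}\to0\). This follows by writing \(\Gamma_t=\Gamma_tC^{+/2}C^{1/2}\) on the active leaf (since \(\operatorname{Ran}\Gamma_t\subset\operatorname{Ran}C_{S,T}\), because \(\Gamma_t\le C_{S,T}\)) and expanding the differences. The bound is of the form \(\eta_{\mathcal P}\|C\|^{1/2}(2+\eta_{\mathcal P})\), plus the term \(\|\Gamma_{s\wedge t}-\Gamma^{\mathcal P}_{s\wedge t}\|\le\eta_{\mathcal P}\|C\|^{1/2}\).
\item For tightness, the bridge increments satisfy \(E|X_t-X_s|^2\le \operatorname{tr}(\Gamma_t-\Gamma_s)\) plus mean increments. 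The frozen cumulants \(\Gamma^{\mathcal P_n}\) are linear interpolations of \(\Gamma\), so their modulus of continuity is dominated by that of \(\Gamma\). Gaussian hypercontractivity upgrades this to Kolmogorov-type moment bounds, which gives tightness of \(\{\mathrm{Br}^{\mathcal P_n}_{x,z}\}_n\) locally uniformly in \(r_{S,T}\).
\item Uniform convergence of mean and covariance, together with tightness, yields weak convergence of the Gaussian laws. This could also be quantified through a Wasserstein-1 bound on \(C([S,T])\) via a synchronous coupling \(X=m+G\), \(X^{\mathcal P}=m^{\mathcal P}+G^{\mathcal P}\).
\end{itemize}

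Finally, I integrate over the endpoints. Fix a bounded Lipschitz test functional \(F\) and \(\epsilon>0\). Tightness of \(r_{S,T}\) under \(\pi^{h,Y}\) gives a radius \(\rho\) with \(\pi^{h,Y}(r_{S,T}>\rho)<\epsilon\). On \(\{r_{S,T}\le\rho\}\), the previous step gives
\[
|E_{\mathrm{Br}^{\mathcal P_n}}F-E_{\mathrm{Br}^a}F|\le \omega_\rho(\eta_{\mathcal P_n})\quad\text{uniformly},
\]
with \(\omega_\rho(0+)=0\). Therefore
\[
|E_{P^{\mathcal P_n,\star}}F-E_{P^\star}F|\le\omega_\rho(\eta_{\mathcal P_n}(h,Y))+2\|F\|_\infty\epsilon .
\]
Since \(\eta_{\mathcal P_n}\to0\) in probability over \((h,Y)\), the right-hand side tends to \(0\) in probability. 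Running this over a countable convergence-determining class of bounded Lipschitz \(F\) gives convergence in probability of the random measures for the weak topology, for instance in the bounded-Lipschitz metric.

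I expect the main obstacle to be the uniformity in the third bullet of the second step. The pathwise weak-convergence modulus \(\omega_\rho\) must depend only on \(\eta_{\mathcal P}\), on \(\rho\), and on \(\operatorname{tr}C_{S,T}\), and not on the particular realisation. I would handle this with the explicit coupling estimate
\[
E\sup_t|X_t-X^{\mathcal P}_t|\lesssim \eta_{\mathcal P}\rho+E\sup_t|G_t-G_t^{\mathcal P}|.
\]
The last term would be controlled by the Dudley or Fernique bound applied to the Gaussian difference process, whose variance is \(O(\eta_{\mathcal P}\|C\|^{1/2})\). A possible localisation in \(\|C_{S,T}\|\), which is tight as a random variable, would absorb the remaining dependence.
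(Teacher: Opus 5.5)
Your proposal is correct and follows the paper's own, much terser, argument: Gaussian endpoint bridges whose means and covariance kernels differ by amounts controlled by \(\eta_{\mathcal P_n}\) (times \(r_{S,T}\) for the mean), localisation through tightness of \(r_{S,T}\), and integration against the common endpoint law \(\pi^{h,Y}\). What you add is the explicit mixture representation, which comes from the two references sharing the endpoint law, and the path-space tightness, both of which the paper leaves implicit. The one thing to drop is your final paragraph: you do not need a realisation-independent modulus \(\omega_\rho\), since for fixed \((h,Y)\) the modulus may depend on the realisation and a subsequence argument turns \(\eta_{\mathcal P_n}\to 0\) in probability into convergence in probability of the projections; moreover, driving both bridges with the same noise does not give a difference process of variance \(O(\eta_{\mathcal P}\|C_{S,T}\|^{1/2})\) in general, because a rapidly oscillating \(a_t\) with nearly linear cumulant makes \(\eta_{\mathcal P}\) small while \(\int|\sigma_r-\bar\sigma|^2\,dr\) stays of order one.
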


\begin{proof}
For fixed \((h,Y,x,z)\), both bridges are Gaussian; means and
covariance kernels are continuous functions of \(\Gamma\),
\(\Gamma^{\mathcal P_n}\) and \(C_{S,T}^+\), and the difference is
uniformly controlled by \(\eta_{\mathcal P_n}\cdot r_{S,T}\).
Tightness of \(r_{S,T}\) localises the argument, and
\(\eta_{\mathcal P_n}\to 0\) closes the convergence after
integration against \(\pi^{h,Y}\).
\end{proof}

Natural sufficient conditions are continuity of \((t,h,Y)\mapsto a_t(h,Y)\)
on compact strata (so that \(\eta_{\mathcal P_n}\to 0\) along refining
subgrids) and finite entropy of \(\pi^{h,Y}\) relative to the Gaussian
endpoint law induced by \(C_{S,T}\) (so that the intrinsic endpoint norms are
tight). The approximation error thus concerns the cumulative covariance, not
instantaneous volatilities: already at \(M=1\) the frozen reference preserves
the total covariance \((T-S)\bar a_{S,T}=C_{S,T}\), hence the endpoint
Gaussian scale and the terminal leaf; finer subgrids refine only the
path-level interpolation of the cumulant.

\section{Statistical approximation and main convergence theorem}
\label{sec:analytic-statistical-terminal-datum}

For each frozen interval \([t_i,t_{i+1}]\), the exact local drift is
determined by the backward potential \(H_i\) of
Proposition~\ref{prop:local-intervalwise-minimiser}. This section
isolates the implementable approximation problem in the order in
which it appears at run time: target of estimation, analytic
regularisation, empirical Gaussian estimator, statistical
assumptions, and final convergence theorem. Detailed proofs are
given in Appendix~\ref{sec:appendix-statistical-approximation}.

\subsection{The target of estimation}
\label{subsec:target-of-estimation}

We adopt throughout this section the shorthand of
Proposition~\ref{prop:local-intervalwise-minimiser}: \(A_i\),
\(\sigma_i\), \(\mathcal L_i\), \(k_i^R\), \(\eta_i^\mu\), \(f_i\),
\(H_i\), \(b_i^*\), etc., all carry the implicit conditioning
\((\mathbf x_i,y_{i+1})\). By the closing remarks of
Corollary~\ref{cor:terminal-conditional-law}, the frozen covariance
\(A_i\), the reference kernel \(k_i^R\) and the leafwise heat
semigroup are deterministic functions of the conditioning; the only
object that has to be estimated from data is the transition kernel
of \(\mu\) on the coarse grid,
\[
\eta_i^\mu
:= \mu\bigl(\Delta X_{i+1}\in\cdot\mid\mathbf X_i=\mathbf x_i,Y=y_{i+1}\bigr).
\]
Our strategy is three-step:
\begin{enumerate}
\item[(i)]
\emph{Analytic regularisation}: replace the rank-deficient frozen
covariance \(A_i\) with its spectral floor \(A_i^\varepsilon\) and
the leafwise heat semigroup with the corresponding full-rank
Gaussian semigroup, producing a regularised potential
\(H_i^\varepsilon\) and drift \(b_i^\varepsilon\). Under the
conditioning, \(A_i^\varepsilon\) remains a deterministic function
of \((\mathbf x_i,y_{i+1})\); only the leafwise geometry of \(A_i\)
is softened, not the conditioning itself.
\item[(ii)]
\emph{Empirical substitution}: replace the conditional law
\(\eta_i^\mu\) inside \(H_i^\varepsilon\) by its
Nadaraya--Watson kernel-weighted empirical surrogate built from
\(M\) i.i.d.\ samples, producing an empirical estimator
\(\widehat H_i^{\varepsilon,M}\) and the corresponding drift
\(\widehat b_i^{\varepsilon,M}\).
\item[(iii)]
\emph{Diagonal extraction}: drive
\((\varepsilon_n,M_n,h_n)\to(0,\infty,0)\) along a sequence
admissible in the statistical sense, so that
\(\widehat H_i^{\varepsilon_n,M_n}\to H_i\) on compacta of the open
intervalwise cylinder
\((t_i,t_{i+1})\times\mathcal L_i\), and the corresponding drifts converge to their exact counterparts.
\end{enumerate}
Steps (i)--(ii) are spelled out in
Sections~\ref{subsec:spectral-regularisation-leafwise-kernel}
and~\ref{subsec:statistical-estimator}; step (iii) is
Theorem~\ref{thm:approximate-intervalwise-dynamics} of
Section~\ref{subsec:main-statistical-convergence-theorem}.

\subsection{Coherent regularised objects}
\label{subsec:spectral-regularisation-leafwise-kernel}

Write \(r_i:=\rank A_i\), \(\Delta_i:=t_{i+1}-t_i\), and
\(\lambda_i^{\min,+}\) for the smallest strictly positive eigenvalue
of \(A_i\). For \(\varepsilon>0\), the spectral floor of \(A_i\) is
\begin{equation}
\label{eq:def-A-eps}
A_i^\varepsilon
:=
Q\Lambda_\varepsilon Q^{\top},
\qquad
\Lambda_\varepsilon
:=
\operatorname{diag}\!\bigl(
\max(\lambda_1,\varepsilon),\dots,\max(\lambda_d,\varepsilon)
\bigr),
\end{equation}
where \(A_i=Q\operatorname{diag}(\lambda_1,\dots,\lambda_d)Q^\top\)
is any spectral decomposition; \(A_i^\varepsilon\) is full-rank,
uniformly elliptic, with \(A_i^\varepsilon\succeq\varepsilon I\) and
\(\det A_i^\varepsilon\ge\varepsilon^d\), and is independent of the
choice of \(Q\). The associated full-rank backward Gaussian density
\emph{on the increment} is
\begin{equation}
\label{eq:def-p-eps}
p_i^\varepsilon(t,x;t_{i+1},\delta)
:=
\frac{
\exp\!\Bigl(
-\tfrac{1}{2(t_{i+1}-t)}
\bigl\langle (A_i^\varepsilon)^{-1}(x_i+\delta-x),x_i+\delta-x\bigr\rangle
\Bigr)
}{
(2\pi(t_{i+1}-t))^{d/2}\,(\det A_i^\varepsilon)^{1/2}
},
\end{equation}
the density of \(\Delta X_{i+1}=\delta\) given \(X_t=x\) under the
\(\varepsilon\)-floored reference (so that the position at
\(t_{i+1}\) lies at \(x_i+\delta\)).

The \emph{regularised backward potential} is defined as the coherent
Doob kernel ratio: for \(t_i<t<t_{i+1}\) and \(x\in\R^d\),
\begin{equation}
\label{eq:def-H-eps}
H_i^\varepsilon(t,x)
:=
\int_{\R^d}
\Phi_i^\varepsilon(t,x,\delta)\,
\eta_i^\mu(d\delta),
\qquad
\Phi_i^\varepsilon(t,x,\delta)
:=
\frac{p_i^\varepsilon(t,x;t_{i+1},\delta)}
     {p_i^\varepsilon(t_i,x_i;t_{i+1},\delta)};
\end{equation}
equivalently,
\begin{equation}
\label{eq:def-H-eps-cond}
H_i^\varepsilon(t,x)
=
\mathbb E_\mu\!\left[
\Phi_i^\varepsilon(t,x,\Delta X_{i+1})
\,\middle|\,
\mathbf X_i=\mathbf x_i,\,Y=y_{i+1}
\right].
\end{equation}
Both transition densities in \(\Phi_i^\varepsilon\) share the same
floor \(A_i^\varepsilon\); the \(\varepsilon\)-dependent
normalisation \((2\pi(t_{i+1}-t))^{-d/2}(\det A_i^\varepsilon)^{-1/2}\)
cancels in the ratio, and \(\Phi_i^\varepsilon\) collapses to a
Gaussian-type function of \(\delta\). The corresponding \emph{regularised
drift} is the intrinsic logarithmic-gradient quantity
\begin{equation}
\label{eq:def-b-eps}
b_i^\varepsilon(t,x)
:=
A_i^\varepsilon\,\nabla_x\log H_i^\varepsilon(t,x).
\end{equation}

\paragraph{Analytic facts.}
The analytic properties of \(H_i^\varepsilon\) and
\(b_i^\varepsilon\) are proved in
Proposition~\ref{prop:spectral-regularisation-leafwise-kernel} of
Appendix~\ref{sec:appendix-statistical-approximation}. The key
ingredient is the \(\alpha<\beta\) gap of the kernel ratio
(\(\alpha:=t_{i+1}-t\), \(\beta:=t_{i+1}-t_i\)), which realises
\(\Phi_i^\varepsilon\) as a bounded Gaussian-type function of
\(\delta\) and supplies uniform domination on bounded \(x\)-sets and
on \(\{t\le t_{i+1}-\tau\}\) for every \(\tau>0\). The matched floor
leaves the active eigenvalues unchanged, so the coherent ratio on
the active affine leaf reduces to the unfloored leafwise ratio,
yielding
\begin{equation}
\label{eq:H-eps-equals-H}
H_i^\varepsilon(t,x)=H_i(t,x),
\qquad
(t,x)\in(t_i,t_{i+1})\times(x_i+\mathcal L_i),
\quad
\varepsilon\in(0,\lambda_i^{\min,+}),
\end{equation}
and the global collapse
\begin{equation}
\label{eq:H-eps-leaf-limit}
H_i^\varepsilon \longrightarrow H_i\,\mathbf 1_{x_i+\mathcal L_i}
\quad\text{as }\varepsilon\downarrow 0
\quad\text{pointwise on }\R^d,
\end{equation}
locally uniformly on compacta of
\((t_i,t_{i+1})\times(x_i+\mathcal L_i)\). Off the leaf the
convergence is driven by the same \(\alpha<\beta\) gap: numerator
and denominator of \(\Phi_i^\varepsilon\) share the same
\(A_i^\varepsilon\) and both contribute a transverse penalty in
\(1/\varepsilon\), but the numerator's penalty, evaluated at the
shorter time \(\alpha=t_{i+1}-t\), is strictly stronger than the
denominator's at \(\beta=t_{i+1}-t_i\); the net residual
\(\sim\exp(-\mathrm{dist}(x-x_i,\mathcal L_i)^2/(2\varepsilon\alpha))\)
drives \(H_i^\varepsilon\to 0\) exponentially in \(1/\varepsilon\).
The intrinsic-gradient drift converges accordingly on every compact
\(K\Subset(t_i,t_{i+1})\times(x_i+\mathcal L_i)\):
\(b_i^\varepsilon \to A_i\nabla_x\log H_i = b_i^*\), in the
Mahalanobis geometry of \(A_i\). The natural quantities throughout
are \(A_i^{\varepsilon,1/2}\nabla H_i^\varepsilon\),
\(A_i^\varepsilon\nabla\log H_i^\varepsilon\) and the intrinsic
Dirichlet energy
\(|A_i^{\varepsilon,1/2}\nabla\log H_i^\varepsilon|^2\), not the
naked Euclidean gradient.

\paragraph{Boundary semantics at \(t=t_i\).}
At the left endpoint the boundary semantics of
Remark~\ref{rem:boundary-semantics-t-i} apply unchanged. We set
\begin{equation}
\label{eq:H-eps-boundary-value}
H_i^\varepsilon(t_i,x_i):=\lim_{t\downarrow t_i}H_i^\varepsilon(t,x_i)=1,
\end{equation}
\begin{equation}
\label{eq:b-eps-boundary-value}
b_i^\varepsilon(t_i,x_i):=\lim_{t\downarrow t_i}b_i^\varepsilon(t,x_i).
\end{equation}
The first equality follows from~\eqref{eq:H-eps-equals-H} together
with the Doob normalisation \(H_i(t,x_i)\to 1\); the second limit
exists by dominated convergence on the Gaussian-ratio integrand,
using the finite intrinsic \(A_i^+\)-Mahalanobis first moment on the
active leaf, implied by finite conditional entropy via
Donsker--Varadhan (with global finite entropy via the
finite-entropy hypothesis~\eqref{eq:finite-entropy-assumption} and
disintegration); for fixed \(\varepsilon\in(0,\lambda_i^{\min,+})\)
the leaf inequality
\(|\delta|_{(A_i^\varepsilon)^{-1}}\le\|(A_i^\varepsilon)^{-1/2}(A_i^+)^{1/2}\|_{\mathrm{op}}|\delta|_{A_i^+}\)
transfers this to the regularised
\(A_i^\varepsilon\)-Mahalanobis quantity that actually appears in
the integrand.

\subsection{Statistical estimator}
\label{subsec:statistical-estimator}

We use throughout the joint samples
\(\bigl(\mathbf X_i^{(m)},Y^{(m)},\Delta X_{i+1}^{(m)}\bigr)_{m=1}^M\)
i.i.d.\ from \(\mu\); the latent realisations
\(Y^{(m)}\in\mathcal Y\) are observed since they are produced by the
joint TR-SBTS construction
(Section~\ref{subsec:joint-tr-sbts-generation}).

\paragraph{Macro-conditioning variable.}
For every level \(\ell\) and interval \(i\), package all the
finite-dimensional summaries on which the regression conditions into
a single macro-variable
\begin{equation}
\label{eq:macro-conditioning}
\mathcal U_i^\ell
:=\Bigl(
\underbrace{x_i}_{\text{state}},\;
\underbrace{(\Delta x_i,\dots,\Delta x_1)}_{\text{past incr.}},\;
\underbrace{y_{i+1}}_{\text{latent}},\;
\underbrace{(C_k^{\varepsilon,\ell})_{k\le i+1}}_{\text{frozen ref.s}}
\Bigr),
\end{equation}
with runtime query value \(u_0\in\mathcal U_i^\ell\) and historical
sample value \(v=U_i^{\ell,(m)}\in\mathcal U_i^\ell\). All
notations \((\mathbf x_i,y_{i+1})\), \((\boldsymbol\xi_i,y)\), etc.\
collapse onto \(u_0\) and \(v\) from now on.

\paragraph{Reference-aware pseudo-distance.}
Define a single query-dependent pseudo-distance on
\(\mathcal U_i^\ell\) by
\begin{equation}
\label{eq:macro-pseudo-distance}
d_\varepsilon^\ell(u_0,v)^2
:=
\sum_{a\in\mathcal A_i^\ell} d_{a,\varepsilon}(u_0,v)^2,
\end{equation}
summing over the components of \(\mathcal U_i^\ell\): for a state or
increment component \(z_a\),
\(d_{a,\varepsilon}(u_0,v)^2=|z_a(u_0)-z_a(v)|_{C_{a,\varepsilon}(u_0)^{-1}}^2\),
the Mahalanobis distance in the runtime-floored covariance; for each
frozen-reference component the log-spectral distance
\begin{equation}
\label{eq:macro-reference-distance}
d_{a,\varepsilon}^{\mathrm{ref}}(u_0,v)
:=
\bigl|\log\!\bigl(C_{a,\varepsilon}(u_0)^{-1/2}\,
C_{a,\varepsilon}(v)\,C_{a,\varepsilon}(u_0)^{-1/2}\bigr)\bigr|_{\mathrm{op}}.
\end{equation}
Locality is measured in the geometry in which the process actually
moves, i.e.\ in the metrics induced by the floored references; in
particular \(d_\varepsilon^\ell\) automatically contains the
reference-comparison growth bounded by
\(\kappa_\varepsilon^\ell(u_0,v)\le\exp(d_\varepsilon^\ell(u_0,v)/2)\)
(Lemma~\ref{lem:reference-aware-comparison}).

\paragraph{Single-kernel regression.}
The Nadaraya--Watson kernel is a univariate function of the macro
pseudo-distance:
\begin{equation}
\label{eq:reference-aware-kernel}
W_h(u_0,v):=K_h\!\bigl(d_\varepsilon^\ell(u_0,v)\bigr),
\quad
\begin{cases}
K(d_\varepsilon^\ell/h),\ \operatorname{supp}K\subset[0,1],&\text{(compact)}\\[2pt]
\exp\!\bigl(-d_\varepsilon^\ell(u_0,v)^2/(2h^2)\bigr),&\text{(Gaussian)}
\end{cases}
\end{equation}
with NW weights and empirical regression
\begin{align}
\label{eq:reference-aware-NW-weights}
\omega_{i,m}^{M}(u_0)
&:=
\frac{W_{h_M}(u_0,U_i^{\ell,(m)})}
     {\sum_{n=1}^M W_{h_M}(u_0,U_i^{\ell,(n)})},
\\
\widehat{\mathbb E}_M[\psi(\Delta)\mid u_0]
&:=
\sum_{m=1}^M\omega_{i,m}^M(u_0)\,\psi(\Delta_i^{(m)}).
\notag
\end{align}
The \emph{empirical regularised potential} replaces \(\eta_i^\mu\)
in~\eqref{eq:def-H-eps} by the NW surrogate
\(\widehat\eta_i^{M}:=\sum_m\omega_{i,m}^M(u_0)\,\delta_{\Delta X_{i+1}^{(m)}}\):
\begin{align}
\label{eq:empirical-H-eps}
\widehat H_i^{\varepsilon,M}(t,x;u_0)
&:=
\sum_{m=1}^M\omega_{i,m}^M(u_0)\,
\Phi_i^\varepsilon\!\bigl(t,x,\Delta X_{i+1}^{(m)}\bigr),
\\
\widehat b_i^{\varepsilon,M}(t,x;u_0)
&:=A_i^\varepsilon\nabla_x\log\widehat H_i^{\varepsilon,M}(t,x;u_0),
\notag
\end{align}
with \(\Phi_i^\varepsilon\) the kernel-ratio response
of~\eqref{eq:def-H-eps}; at \(t=t_i\) the boundary semantics of
Remark~\ref{rem:boundary-semantics-t-i} apply unchanged.

\begin{remark}[Geometry of the kernel: \(d_\varepsilon^\ell\) is fundamental]
\label{rem:reference-aware-kernel-rationale}
The single-distance form~\eqref{eq:reference-aware-kernel} is the
fundamental object: the theory is developed in
\(d_\varepsilon^\ell\), and all separate Mahalanobis, log-spectral
and Euclidean kernels collapse to it. The classical Euclidean
estimator with bandwidth in \(q_i+q_Y\) is recovered as a
\emph{sufficient special case}: if there is a local chart
\(T_{u_0}\) of \(\mathcal U_i^\ell\) that is bi-Lipschitz with
respect to \(d_\varepsilon^\ell\),
\(c|T_{u_0}(v)-T_{u_0}(u_0)|\le d_\varepsilon^\ell(u_0,v)\le C|T_{u_0}(v)-T_{u_0}(u_0)|\)
near \(u_0\), then the effective dimension is
\(q_{\mathrm{eff}}=\dim T_{u_0}(\mathcal U_i^\ell)\) and a positive
Euclidean density at \(u_0\) implies the small-ball
condition~\eqref{eq:s7-small-ball-lower-mass}. In general the
chart is not used; the regression localises directly in
\(d_\varepsilon^\ell\).
\end{remark}

\subsection{Statistical assumptions}
\label{subsec:statistical-assumptions-app}

All assumptions are stated in the macro-conditioning variable
\(\mathcal U_i^\ell\) of~\eqref{eq:macro-conditioning} and the
reference-aware pseudo-distance \(d_\varepsilon^\ell\)
of~\eqref{eq:macro-pseudo-distance}; we write
\(B_\varepsilon(u_0,r):=\{v\in\mathcal U_i^\ell:d_\varepsilon^\ell(u_0,v)<r\}\)
for the associated pseudo-metric balls. The query point is
\(u_0\in\mathcal U_i^\ell\). For the interior convergence we
assume:
\begin{enumerate}[(S1)]
\item \textnormal{(\textsc{Local lower mass})} there exist
\(r_0>0\), \(c_{u_0}>0\) and an effective dimension
\(q_{\mathrm{eff}}>0\) such that
\begin{equation}
\label{eq:s7-small-ball-lower-mass}
\mu_U\bigl(B_\varepsilon(u_0,r)\bigr)
\ge c_{u_0}\,r^{q_{\mathrm{eff}}}
\qquad\forall\,0<r<r_0;
\end{equation}
\item \textnormal{(\textsc{Conditional continuity})} the map
\(v\mapsto\eta_i^{\mu,\ell}(\cdot\mid v):=\mu(\Delta X_{i+1}\in\cdot\mid U_i^\ell=v)\)
is weakly continuous on a \(d_\varepsilon^\ell\)-neighbourhood of
\(u_0\), and \(v\mapsto A_i^\ell(v)\) is operator-norm continuous
at \(u_0\);
\item \textnormal{(\textsc{i.i.d.\ sample})} the historical sample
\((U_i^{\ell,(m)},\Delta X_{i+1}^{(m)})_{m=1}^M\) is i.i.d.\ with
law induced by \(\mu\);
\item \textnormal{(\textsc{Bandwidth})} the bandwidth satisfies
\(h_M\downarrow 0\) and \(Mh_M^{q_{\mathrm{eff}}}\to\infty\);
\item \textnormal{(\textsc{Strong bandwidth})} for almost-sure
convergence, the stronger condition
\(Mh_M^{q_{\mathrm{eff}}}/\log M\to\infty\) holds.
\end{enumerate}
For the boundary-diagonal statement of
Theorem~\ref{thm:approximate-intervalwise-dynamics}(ii) we
additionally assume the \emph{local conditional entropy
admissibility in the same geometry}:
\begin{enumerate}[(S1)]\setcounter{enumi}{5}
\item \textnormal{(\textsc{Local conditional entropy admissibility})}
there exists \(\rho>0\) such that
\[
\sup_{v\in B_\varepsilon(u_0,\rho)}
H\!\left(\eta_i^{\mu,\ell}(\cdot\mid v)\,\big|\,
k_i^{R,\ell}(\cdot\mid v)\right)<\infty.
\]
\end{enumerate}
The small-ball lower mass~\eqref{eq:s7-small-ball-lower-mass} is the
fundamental positivity condition in \(d_\varepsilon^\ell\); the
kernel-denominator non-collapse and localiser concentration follow
as a lemma (Appendix~\ref{app:boundary-drift}).
\emph{Euclidean special case.} If a chart \(T_{u_0}\) of
\(\mathcal U_i^\ell\) is bi-Lipschitz with respect to
\(d_\varepsilon^\ell\) near \(u_0\), then \(q_{\mathrm{eff}}=\dim T_{u_0}(\mathcal U_i^\ell)\)
and (S1) follows from strict positivity and continuity of the
push-forward density at \(u_0\); in the elementary chart with
present state, past increments and latent stacked into
\(\R^{q_i+q_Y}\) one recovers the classical
\(Mh_M^{q_i+q_Y}\to\infty\) bandwidth condition. The macro form is
the fundamental one; the Euclidean chart is invoked only when
verifying (S1) in concrete examples.

\subsection{Main statistical convergence theorem}
\label{subsec:main-statistical-convergence-theorem}

The theorem below combines the fixed-\(\varepsilon\) statistical
step \(\widehat H_i^{\varepsilon,M}\to H_i^\varepsilon\) with the
deterministic analytic step
\(H_i^\varepsilon\to H_i\,\mathbf 1_{\mathcal L_i}\), and closes the
chain by a diagonal extraction in \((\varepsilon_n,M_n,h_n)\).

\begin{theorem}[Main statistical convergence]
\label{thm:approximate-intervalwise-dynamics}
Adopt the setup and shorthand of
Proposition~\ref{prop:local-intervalwise-minimiser}. Fix a compact
set \(K\Subset(t_i,t_{i+1})\times\mathcal L_i\) contained in the
backward stratum \(\{t\le t_{i+1}-\tau\}\) for some \(\tau>0\). Then
there exist sequences \(\varepsilon_n\downarrow 0\),
\(M_n\to\infty\), and admissible bandwidths \(h_{M_n}\) such that,
writing \(\widehat H_i^{(n)}\) for the empirical regularised
potential with parameters \((\varepsilon_n,M_n,h_{M_n})\):
\begin{enumerate}
\item[(i)] \textnormal{\textsc{Interior drift convergence.}}
Under (S1)--(S5),
\(\displaystyle
\sup_{(t,x)\in K}\bigl|\widehat b_i^{(n)}(t,x)-b_i^*(t,x)\bigr|\to 0
\) in probability, with
\(\widehat b_i^{(n)}:=A_i^{\varepsilon_n}\nabla_x\log\widehat H_i^{(n)}\);
a.s.\ under the stronger (S5) bandwidth.

\item[(ii)] \textnormal{\textsc{Boundary-diagonal drift convergence.}}
Under (S1)--(S6), the empirical boundary drift defined as the
finite-sample diagonal limit
\(\widehat b_i^{(n)}(t_i,x_i):=\lim_{t\downarrow t_i}\widehat b_i^{(n)}(t,x_i)\)
satisfies \(\widehat b_i^{(n)}(t_i,x_i)\to b_i^*(t_i,x_i;u_0)\) in
probability; a.s.\ under (S5).

\end{enumerate}
The boundary statement~(ii) is the only one requiring the
local entropy admissibility~(S6); statement~(i) holds under (S1)--(S5) alone.
\end{theorem}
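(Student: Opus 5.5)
The argument splits the error along the chain announced in Subsection~\ref{subsec:target-of-estimation}, working on the compact $K$ throughout:
\[
\widehat b_i^{\varepsilon,M}-b_i^*
=\bigl(\widehat b_i^{\varepsilon,M}-b_i^{\varepsilon}\bigr)+\bigl(b_i^{\varepsilon}-b_i^*\bigr).
\]

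\emph{Deterministic term.} By \eqref{eq:H-eps-equals-H}, for $\varepsilon<\lambda_i^{\min,+}$ we have $H_i^\varepsilon=H_i$ on the leaf. The analytic facts of Subsection~\ref{subsec:spectral-regularisation-leafwise-kernel} give $b_i^\varepsilon\to b_i^*$ uniformly on $K$ in the intrinsic geometry. So the second term is controlled once $\varepsilon_n\downarrow0$, and the real content is the statistical term at fixed $\varepsilon$.

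\emph{Statistical term at fixed $\varepsilon$: uniform domination.} I would use the $\alpha<\beta$ gap. On $K\subset\{t\le t_{i+1}-\tau\}$ with $x$ bounded, completing the square shows that $\Phi_i^\varepsilon(t,x,\delta)$, $\nabla_x\Phi_i^\varepsilon$ and $\nabla_x^2\Phi_i^\varepsilon$ are bounded by a constant times a Gaussian-type envelope in $\delta$. This family is therefore uniformly bounded and equi-Lipschitz in $(t,x)$ over $K$.

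\emph{Statistical term at fixed $\varepsilon$: pointwise NW convergence.} For each fixed bounded continuous response $\psi$, the standard Nadaraya--Watson argument in the pseudo-metric $d_\varepsilon^\ell$ applies:
\begin{enumerate}
\item[1.] By (S1) and (S4), the kernel denominator is of order $Mh_M^{q_{\mathrm{eff}}}$ with high probability.
\item[2.] The bias is $\sup_{v\in B_\varepsilon(u_0,h_M)}\bigl|\int\psi\,d\eta_i^{\mu,\ell}(\cdot\mid v)-\int\psi\,d\eta_i^{\mu,\ell}(\cdot\mid u_0)\bigr|$, which vanishes by (S2).
\item[3.] The variance is $O\bigl(1/(Mh_M^{q_{\mathrm{eff}}})\bigr)$.
\end{enumerate}
Bernstein's inequality with a union bound over a finite net of $K$, using (S5), upgrades this to almost sure convergence.

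\emph{Statistical term at fixed $\varepsilon$: uniformity over $K$.} Equicontinuity in $(t,x)$ turns the pointwise statements into uniform ones, giving
\[
\widehat H_i^{\varepsilon,M}\to H_i^\varepsilon,\qquad \nabla_x\widehat H_i^{\varepsilon,M}\to\nabla_x H_i^\varepsilon
\]
uniformly on $K$. Two points need care here.
\begin{itemize}
\item The response also depends on the query through $A_i^\varepsilon(u_0)$ versus $A_i^\varepsilon(v)$. I would handle this with the operator-norm continuity in (S2), together with the reference-comparison bound $\kappa_\varepsilon^\ell\le\exp(d_\varepsilon^\ell/2)$ quoted before Lemma~\ref{lem:reference-aware-comparison}.
\item Passing to $\log$ requires a lower bound on the limit. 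By Proposition~\ref{prop:local-intervalwise-minimiser}, $H_i^\varepsilon=H_i>0$ on the leaf and is continuous, so $\inf_K H_i^\varepsilon>0$. The quotient map $(H,\nabla H)\mapsto A_i^\varepsilon\nabla H/H$ is Lipschitz there, which gives $\widehat b_i^{\varepsilon,M}\to b_i^\varepsilon$ uniformly on $K$.
\end{itemize}

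\emph{Diagonal extraction.} For each $\varepsilon_n=1/n$, choose $M_n$ so large that
\[
P\bigl(\sup_K|\widehat b^{(n)}-b^{\varepsilon_n}|>1/n\bigr)<1/n
\]
(summable in the almost sure case). Combining with the deterministic term yields (i).

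\emph{Boundary-diagonal statement (ii).} Here $K$ is replaced by the diagonal point $(t_i,x_i)$. At $t=t_i$ the kernel ratio loses the strict gap ($\alpha=\beta$): $\Phi_i^\varepsilon(t_i,x_i,\delta)\equiv1$, and its $x$-gradient equals $(A_i^\varepsilon)^{-1}\delta/\beta$, which is unbounded in $\delta$. The finite-sample limit $\widehat b^{(n)}(t_i,x_i)$ is then the NW estimate of
\[
\mathbb E\bigl[A_i^\varepsilon (A_i^\varepsilon)^{-1}\Delta/\beta\mid u_0\bigr]=\mathbb E[\Delta\mid u_0]/\beta.
\]
Consistency of this estimate requires uniform integrability of the unbounded response $|\delta|_{A_i^+}$ over the localising ball $B_\varepsilon(u_0,\rho)$. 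I would obtain this from (S6) via the Donsker--Varadhan variational inequality:
\[
\int|\delta|^2_{A_i^+}\,d\eta\le C\bigl(H(\eta\mid k^R)+1\bigr),
\]
uniformly over $v$ in the ball, which gives a uniform second moment. Combined with the leaf comparison inequality at the end of Subsection~\ref{subsec:spectral-regularisation-leafwise-kernel}, this yields uniform integrability and allows truncation of the response. The truncated part is handled by the bounded-response argument above, and the tail is small uniformly in $M$. Interchanging the limit $t\downarrow t_i$ with the sum is trivial at finite $M$.

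\emph{Main obstacle.} I expect the hardest step to be this uniform-integrability transfer from (S6) to the weighted sums, since the NW weights are random. I would control it by bounding $\sum_m\omega_m|\Delta^{(m)}|^2\mathbf 1_{|\Delta^{(m)}|>L}$ in expectation, conditioning on the design points.
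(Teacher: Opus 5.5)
\paragraph{Overall.} Your architecture is the paper's, and part~(i) is fine as written. The shared steps are:
\begin{itemize}
\item the split into a fixed-$\varepsilon$ statistical step and the analytic step $b_i^\varepsilon\to b_i^*$ via \eqref{eq:H-eps-equals-H};
\item Gaussian-ratio domination from the $\alpha<\beta$ gap;
\item pointwise Nadaraya--Watson consistency made uniform by equicontinuity;
\item the log-quotient with $\inf_K H_i^\varepsilon>0$, followed by diagonal extraction;
\item for (ii), reducing the finite-sample diagonal limit to the NW estimate of $\beta^{-1}\mathbb E_\mu[\Delta X_{i+1}\mid u_0]$, with (S6) and Donsker--Varadhan supplying the moments.
\end{itemize}
Because $\Phi_i^\varepsilon$ is bounded on $K$, your direct bounded-response argument also covers Gaussian weights in (i). You need only split the bias at a fixed radius $\rho$ and use Lemma~\ref{lem:kernel-denominator-localisation}. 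This lets you skip the paper's compact-kernel/truncation transfer.

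\paragraph{The gap: part (ii) with the Gaussian kernel.} Take $W_h(u_0,v)=\exp(-d_\varepsilon^\ell(u_0,v)^2/(2h^2))$. Hypothesis (S6) is purely local, and your uniform-integrability argument only covers design points in $B_\varepsilon(u_0,\rho)$. With Gaussian weights every sample carries positive weight, so you must also show
\[
\sum_{m:\,d_\varepsilon^\ell(u_0,U^{(m)})>\rho}\omega_{i,m}\,|\Delta X_{i+1}^{(m)}|\to 0 .
\]
Conditioning on the design, as you propose, this amounts to bounding
\[
D_h(u_0)^{-1}\int_{\{d_\varepsilon^\ell>\rho\}}W_h(u_0,v)\,\mathbb E_\mu\bigl[|\Delta X_{i+1}|\,\big|\,U=v\bigr]\,\mu_U(dv).
\]
That requires a global moment, and (S6) does not provide one. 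The obvious candidate $\mathbb E_\mu|\Delta X_{i+1}|$ need not be finite: $A_i(v)$ may be unbounded over the conditioning space, and finite entropy cannot create a Euclidean moment the reference lacks (e.g.\ $\mu=r$ with a heavy-tailed latent covariance). So your claim that the truncation tail is small uniformly in $M$ is unsupported away from the (S6)-ball.

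The paper's Lemma~\ref{lem:gaussian-boundary-drift} closes this with two ideas your plan lacks.
\begin{enumerate}[(a)]
\item The \emph{intrinsic} moment $m_{\mathrm{intr}}(v)=\int|\delta|_{C_i(v)^+}\,\eta_i^\mu(d\delta\mid v)$ is globally $\mu_U$-integrable. Under $k_i^R(\cdot\mid v)$ the intrinsic norm is $\chi$-distributed with at most $d$ degrees of freedom, so it has $v$-uniform exponential moments. Donsker--Varadhan and the chain rule \eqref{eq:entropy-chain-rule} then give $\int m_{\mathrm{intr}}\,d\mu_U\lesssim H(\mu\mid r)+1$.
\item Lemma~\ref{lem:reference-aware-comparison} gives $|\delta|_{C^\ell_{i,\varepsilon}(u_0)^{-1}}\le\kappa^\ell_\varepsilon(u_0,v)\,|\delta|_{C_i^\ell(v)^+}$ with $\kappa^\ell_\varepsilon\le e^{d_\varepsilon^\ell/2}$. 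The reference-aware Gaussian kernel absorbs this growth: $W_h\kappa^\ell_\varepsilon\le e^{-\rho^2/(4h^2)}$ on $\{d_\varepsilon^\ell>\rho\}$ for small $h$.
\end{enumerate}
Combined with $D_h\gtrsim h^{q_{\mathrm{eff}}}$, the outer contribution is $\lesssim e^{-\rho^2/(4h^2)}h^{-q_{\mathrm{eff}}}\int m_{\mathrm{intr}}\,d\mu_U\to0$. This is exactly where the log-spectral component of $d_\varepsilon^\ell$ is needed. For a compactly supported $W_h$ your argument is complete, since the support eventually lies inside the (S6)-ball.

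\paragraph{Smaller points.}
\begin{itemize}
\item A second-moment bound gives uniform integrability via $\sum_m\omega_{i,m}|\Delta^{(m)}|\mathbf 1_{\{|\Delta^{(m)}|>L\}}\le L^{-1}\sum_m\omega_{i,m}|\Delta^{(m)}|^2$. It does not make $\sum_m\omega_{i,m}|\Delta^{(m)}|^2\mathbf 1_{\{|\Delta^{(m)}|>L\}}$ small.
\item Inside the ball, the Euclidean control comes from $|\delta|\le\lambda_{\max}(A_i(v))^{1/2}|\delta|_{A_i(v)^+}$. Here $\lambda_{\max}(A_i(v))$ is bounded near $u_0$ by the operator-norm continuity in (S2). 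The same-matrix leaf inequality you cite does not give this.
\item For the almost-sure diagonal statement, choose $M_n$ with $P\bigl(\sup_{M\ge M_n}\sup_K|\widehat b-b^{\varepsilon_n}|>1/n\bigr)\le 2^{-n}$ and apply Borel--Cantelli. The bound $1/n$ is not summable.
\end{itemize}
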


\begin{remark}[Diagonal extraction]
\label{rem:finite-eps-vs-limit}
At fixed \(\varepsilon>0\) the regularised Gaussian kernels are
full-rank and uniformly elliptic
(\(A_i^\varepsilon\succeq\varepsilon I\),
\(\det A_i^\varepsilon\ge\varepsilon^d\)); determinant and
inverse-covariance constants depend polynomially on \(\varepsilon\)
but are harmless: at every \(\varepsilon>0\) the statistical error
can be made small by choosing \(M\) large and \(h_M\) small, and the
final convergence follows by selecting the diagonal sequence
\((\varepsilon_n,M_n,h_n)\) sufficiently slowly. No rank-stability
assumption on \(A_i\) is required: rank and active leaf enter only
in the deterministic geometric limit \(\varepsilon\downarrow 0\) via
the global collapse
\(H_i^\varepsilon\to H_i\,\mathbf 1_{\mathcal L_i}\). The compactly
supported auxiliary estimator
\(\widehat H_i^{\varepsilon,M,K}\) of
Appendix~\ref{app:compact-kernel-estimators} achieves the same
compact-uniform convergence almost surely under the stronger
bandwidth condition \(Mh_M^{q_{\mathrm{eff}}}/\log M\to\infty\).
\end{remark}

We can now turn to the finite-dimensional conditioning,
precomputation, and runtime choices of the implementation.

\section{Computational architecture}
\label{sec:computational-architecture}

This section implements the statistical estimator of
Section~\ref{sec:analytic-statistical-terminal-datum}. The only object
estimated at run time is the conditional next-increment law
\(\eta_i^\mu(\cdot\mid U_i=u)\) given a finite-dimensional augmented
conditioning variable \(U_i\), through the single reference-aware
Nadaraya--Watson surrogate
\[
\widehat\eta_i^M(\cdot\mid u)=\sum_m\omega_{i,m}(u)\,\delta_{\Delta X_{i+1}^{(m)}}
\]
of~\eqref{eq:reference-aware-NW-weights}. Throughout this section we specify the actual conditioning
variable \(U_i\), the coordinate selectors used inside it, and the joint
logweights \(\ell_{i,m}\) that produce the weights \(\omega_{i,m}\). The bridge
drift is then read off the backward heat potential associated with the empirical
terminal law, exactly as in Section~\ref{subsec:statistical-estimator}.

\subsection{Finite-dimensional conditioning variable}
\label{subsec:conditioning-variable}

The exact intervalwise conditioning is the entire augmented past, whose
dimension grows with time. We replace it by the finite-dimensional instance of
the macro-conditioning variable~\eqref{eq:macro-conditioning},
\[
U_i=\bigl(U_i^{\mathrm{anch}},\,U_i^{\mathrm{drift}},\,U_i^{\mathrm{inc}},\,U_i^{\mathrm{lat/ref}}\bigr),
\]
where \(U_i^{\mathrm{anch}}\) anchors the current augmented state, \(U_i^{\mathrm{drift}}\) is a
fixed-dimensional weighted-least-squares summary of the recent \(X\)-past
computed in the runtime reference metric, \(U_i^{\mathrm{inc}}\) holds the
past-increment comparisons measured in the intervalwise Mahalanobis geometries,
and \(U_i^{\mathrm{lat/ref}}\) carries the latent/reference information. The
conditioning is on the augmented past and reference: for the state component, \(X\) enters through past and present
information; for the latent/reference component, \(Y\) enters through past,
present, and the immediately-next latent observation.

The drift block is a causal first-order summary of the past: over a memory window of length \(L\),
\begin{equation}
\label{eq:wls-drift-regressor}
\widehat\theta_i
:=
\operatorname*{argmin}_{\theta}
\sum_{k=i-L+1}^{i}
\bigl\|\Delta X_k-m_\theta(\cdot)\bigr\|^2_{(C_k^\varepsilon)^{-1}},
\end{equation}
with \(m_\theta\) a low-order parametric drift model and
\((C_k^\varepsilon)^{-1}\) the runtime reference metric. The output
\(\widehat\theta_i\) has fixed dimension independent of \(L\), so lengthening
the memory window improves the statistical stability of this block without
increasing the dimension seen by the kernel.

\subsection{Coordinate selectors and reference-aware metrics}
\label{subsec:coordinate-selectors}

Each block \(B\in\mathcal B=\{\mathrm{anch},\mathrm{drift},\mathrm{inc},\mathrm{lat/ref}\}\)
may be followed by a statistically validated coordinate selector
\[
S_B:\R^{d_B}\to\R^{k_B},
\qquad B\in\mathcal B,
\]
so that the kernel sees \(S_B U_i^B\), not necessarily the full raw block.
Selectors may be chosen globally, blockwise, componentwise, or on the full
vector. For the state anchor, the drift summary, and the latent descriptors, a
natural choice is PCA/PCR on the corresponding coordinates, truncated by a
validated explained-variance threshold. For the reference-aware increment block
the natural coordinates are the spectral coordinates of the runtime frozen
covariance: if
\[
C_k^\varepsilon=Q_k\Lambda_k^\varepsilon Q_k^\top,
\]
then the Mahalanobis contribution is
\[
|z|_{(C_k^\varepsilon)^{-1}}^2=\sum_j\frac{\langle z,q_{k,j}\rangle^2}{\lambda_{k,j}^\varepsilon}.
\]
The eigenvalues thus play a double role: they are the metric weights of the
comparison and a criterion for coordinate selection. Small eigenvalues identify
characteristic low-noise directions, informative but possibly requiring
flooring or removal for a stable inverse metric; very large eigenvalues
correspond to highly diffusive directions, downweighted automatically by the
Mahalanobis metric and discarded when validation shows that they mostly add
noise. The floored covariances \(C_k^\varepsilon\) are those of
Section~\ref{subsec:statistical-estimator}, so the metric is bounded above and
below and the consistency analysis of
Section~\ref{subsec:main-statistical-convergence-theorem} applies unchanged
(Remark~\ref{rem:reference-aware-kernel-rationale}).

The latent/reference distance is controlled indirectly: the frozen
covariances and covariance descriptors are deterministic functions of the
latent coordinates; hence, on the validated operating region, controlling the
latent coordinates is a stable finite-dimensional proxy for controlling the
induced reference covariances, with the one-sided local bound
\[
d_{\mathrm{ref}}\bigl(\Sigma(y),\Sigma(y')\bigr)\le L\,|y-y'|
\]
on the observed compact region.

\subsection{Joint logweights and empirical terminal laws}
\label{subsec:joint-logweights}

The blockwise scores are not independent estimators. They are raw
log-scores used to select the historical terminal atoms of the empirical
conditional laws. In the joint state--latent implementation, two raw
families are computed at each step: the state score \(\ell^X_{i,j}\), built from
the state anchor, WLS summary and reference-aware increment comparisons,
and the latent score \(\ell^Y_{i,j}\), built from the latent/reference descriptor.

For the historical candidate \(j\), the implementation uses
\[
\bar \ell^X_{i,j}
:=
(1-\rho_X)\ell^X_{i,j}
+
\rho_X \ell^Y_{i,j},
\]
and
\[
\bar \ell^Y_{i,j}
:=
(1-\rho_Y)\ell^Y_{i,j}
+
\rho_Y\bigl((1-\alpha)\ell^X_{i,j}
+
\alpha \ell^{X,i}_{i,j}\bigr),
\]
with validated weights \(\rho_X,\rho_Y,\alpha\in[0,1]\). Thus \(\rho_X\)
injects latent/reference information into the state terminal law, while
\(\rho_Y\) injects state information into the latent terminal law. The
parameter \(\alpha\) separates the state contribution to the latent score
between the running state history and the present anchor \(\ell^{X,i}_{i,j}\).

The two coupled scores are normalised separately,
\[
w^X_{i,j}
=
\frac{\exp(\bar\ell^X_{i,j}-L^X_i)}
{\sum_m \exp(\bar\ell^X_{i,m}-L^X_i)},
\qquad
L^X_i:=\max_m\bar\ell^X_{i,m},
\]
and
\[
w^Y_{i,j}
=
\frac{\exp(\bar\ell^Y_{i,j}-L^Y_i)}
{\sum_m \exp(\bar\ell^Y_{i,m}-L^Y_i)},
\qquad
L^Y_i:=\max_m\bar\ell^Y_{i,m}.
\]
They define the two empirical terminal laws
\[
\widehat\eta^X_i
=
\sum_j w^X_{i,j}\,\delta_{\Delta X^{(j)}_{i+1}},
\qquad
\widehat\eta^Y_i
=
\sum_j w^Y_{i,j}\,\delta_{\Delta Y^{(j)}_{i+1}}.
\]
the bridge drifts are read off the backward heat potentials associated with them, for example the drift of the state bridge is computed as:
\begin{align*}
\widehat b_i^{\varepsilon,M, X}(t, x)&=A_i^\varepsilon\nabla_x\log\widehat H_i^{\varepsilon,M}(t, x)\\
& = \frac{\sum_m \Delta X_{i+1}^{(m)}\omega_{i,m}^M(u)\,
\Phi_i^\varepsilon\!(t,x,\Delta X_{i+1}^{(m)})}{(t_{i+1}-t)\sum_m\omega_{i,m}^M(u)\,
\Phi_i^\varepsilon\!(t,x,\Delta X_{i+1}^{(m)})} - (x-x_i)/(t_{i+1}-t),
\end{align*}
with \(\Phi_i^\varepsilon\) the kernel-ratio response
of~\eqref{eq:def-H-eps}, as in~\eqref{eq:empirical-H-eps}.

\subsection{Covariance descriptors and triangular generation}
\label{subsec:stable-coefficient-database}
\label{subsec:joint-tr-sbts-generation}

The realised lower-level variable that fixes the reference is represented in
finite coordinates by a covariance descriptor \(\Sigma_i\in\mathbb S_+^d\),
estimated from data or supplied exogenously (e.g.\ a daily realised covariance
from intraday increments), and packed as
\(\gamma_i:=\operatorname{vech}(\Sigma_i)\in\R^{d(d+1)/2}\) for regression in the
flat space of symmetric matrices. A generated descriptor may fail to be PSD and
is projected onto \(\mathbb S_+^d\) spectrally: with
\(\Sigma_i^{\mathrm{gen}}=Q_i\operatorname{diag}(\lambda_{i,1},\dots,\lambda_{i,d})Q_i^\top\),
\begin{equation}
\label{eq:psd-projection}
\Sigma_i:=Q_i\operatorname{diag}\!\bigl(\lambda_{i,1}^+,\dots,\lambda_{i,d}^+\bigr)Q_i^\top,
\qquad \lambda^+:=\max(\lambda,0).
\end{equation}
Where a non-degenerate inverse metric is needed---in the transition densities of
Section~\ref{subsec:spectral-regularisation-leafwise-kernel} and the
reference-aware Mahalanobis distances of
Section~\ref{subsec:statistical-estimator}---the descriptor is spectrally
floored with the regularisation parameter \(\varepsilon\),
\begin{equation}
\label{eq:psd-floor}
\Sigma_i^\varepsilon:=Q_i\operatorname{diag}\!\bigl(\max(\lambda_{i,1}^+,\varepsilon),\dots,\max(\lambda_{i,d}^+,\varepsilon)\bigr)Q_i^\top,
\end{equation}
with \(\Sigma_i^\varepsilon\succeq\varepsilon I\); for simulation the canonical
symmetric square root \(F_i:=\Sigma_i^{1/2}\), \(F_iF_i^\top=\Sigma_i\), is used,
continuous on the closed PSD cone and hence stable on rank-deficient matrices.

In joint generation the triangular order of
Section~\ref{subsec:conditional-to-leaf} is followed literally, as illustrated in Algorithm~\ref{alg:joint-tr-sbts}. At each coarse
step the covariance/latent component is generated first, from the joint past
\((X_{0:m},\Sigma_{0:m})\), producing the next descriptor
\(\Sigma_{m+1}\in\mathbb S_+^d\) through its own bridge step on the descriptor
sequence; this descriptor is the realised \(y_{m+1}\) that fixes the frozen
reference on \((t_m,t_{m+1}]\). The state bridge step then produces \(X_{m+1}\)
from the same joint past under that reference, using the joint logweights of
Section~\ref{subsec:joint-logweights}.

\begin{algorithm}[H]
\caption{Joint TR-SBTS triangular generation}
\label{alg:joint-tr-sbts}
\begin{spacing}{1.08}
\begin{algorithmic}[1]
\setlength{\itemsep}{2pt}

\Require fitted state and latent components, initial pair \((X_0,Y_0)\), horizon \(H\)
\State initialise raw log-scores
\(
\ell^X_{\cdot},\, \ell^{X,0}_{\cdot},\, \ell^Y_{\cdot}
\)
\For{\(i=0,\ldots,H-2\)}

    \Statex \emph{Latent step.}
    \State 
    \(
    \bar\ell^Y_{\cdot}
    =
    (1-\rho_Y)\ell^Y_{\cdot}
    +
    \rho_Y\bigl((1-\alpha)\ell^X_{\cdot}
    +
    \alpha\ell^{X,i}_{\cdot}\bigr)
    \)
    \State
    \(
    w^Y_{\cdot}
    \leftarrow
    \operatorname{softmax}(\bar\ell^Y_{\cdot}),
    \quad
    \widehat\eta^Y_i
    \leftarrow
    \sum_j w^Y_j\,\delta_{\Delta Y^{(j)}_{i+1}}
    \)
    \State generate \(Y_{i+1}\) from the latent bridge with terminal law
    \(\widehat\eta^Y_i\)
    \State
    \(
    \ell^Y_{\cdot}
    \leftarrow
    \operatorname{Update}_Y(\ell^Y_{\cdot};Y_{i+1})
    \)

    \Statex \emph{State step.}
    \State 
    \(
    A_i^\varepsilon \leftarrow A_i^\varepsilon(Y_{i+1})
    \)
    \State 
    \(
    \bar\ell^X_{\cdot}
    =
    (1-\rho_X)\ell^X_{\cdot}
    +
    \rho_X\ell^Y_{\cdot}
    \)
    \State
    \(
    w^X_{\cdot}
    \leftarrow
    \operatorname{softmax}(\bar\ell^X_{\cdot}),
    \quad
    \widehat\eta^X_i
    \leftarrow
    \sum_j w^X_j\,\delta_{\Delta X^{(j)}_{i+1}}
    \)
    \State \(X_i^{(0)}\leftarrow X_i\)
    \For{\(r=0,\ldots,n_{\mathrm{in}}-1\)}
        \State
        \(
        b_{i,r}
        \leftarrow
        A_i^\varepsilon
        \nabla_x\log \widehat H_i^{\varepsilon,M}
        (\tau_r,X_i^{(r)};\widehat\eta^X_i)
        \)
        \State
        \(
        X_i^{(r+1)}
        \leftarrow
        X_i^{(r)}
        +
        b_{i,r}\Delta\tau_r
        +
        (A_i^\varepsilon)^{1/2}\sqrt{\Delta\tau_r}\,\xi_{i,r}
        \)
    \EndFor
    \State \(X_{i+1}\leftarrow X_i^{(n_{\mathrm{in}})}\)
    \State 
    \(
    \ell^X_{\cdot},\ell^{X,i}_{\cdot}
    \leftarrow
    \operatorname{Update}_X(\ell^X_{\cdot},\ell^{X,i}_{\cdot};X_{i+1})
    \)

\EndFor
\State \Return \((X_0,Y_0),\ldots,(X_{H-1},Y_{H-1})\)
\end{algorithmic}
\end{spacing}
\end{algorithm}

\subsection{Hyperparameter selection}
\label{subsec:computational-approximation-status}

The free parameters (kernel bandwidths, memory lengths, spectral floors,
selector dimensions, and coupling weights \(\rho_X,\rho_Y,\alpha\)) are
selected by predictive validation against the energy score of
Appendix~\ref{sec:appendix-predictive-energy-score}. The covariance
descriptor is validated first, then the state component, then the coupling
weights, each at a progressively longer predictive horizon. The reference
family used at each latent level is selected, and its closed-loop emission
validated, through the entropic surrogate of
Appendix~\ref{sec:appendix-entropic-reference-selection-and-validation}.

\section{Numerical experiments}
\label{sec:numerical-experiments}

We test TR-SBTS along two complementary axes. The first
(Section~\ref{ssec:low-rank-dimensional-stress}) probes the
\emph{dimensional} robustness of the kernel-PCR conditioning: a
diffusion whose true dynamics live on a low-dimensional subspace is
embedded in increasingly high ambient dimension, and we ask whether a
PCR-truncated TR-SBTS keeps tracking the true subspace once the
ambient dimension is much larger than the intrinsic one. The second
(Section~\ref{ssec:heston-parameter-recovery}) examines the
\emph{stochastic-volatility} case: synthetic paths from a calibrated
Heston model are used as ground truth, TR-SBTS is fitted on them, and
the per-parameter Heston QMLE distribution recovered from the
synthetic paths is compared with the QMLE distribution recovered from
the real paths.

\subsection{Low-rank dimensional stress}
\label{ssec:low-rank-dimensional-stress}

We embed a non-trivial two-dimensional motion in an ambient space of
growing dimension and ask whether the estimator continues to
recover its predictive distribution as the codimension grows. The
informative motion is a Hopf oscillation on \((X_{1},X_{2})\) — a stable
limit cycle of unit radius with non-linear (cubic) drift, dominantly
tangential, so that every cycle phase has a different conditional
mean and a globally smoothed kernel cannot match the ground truth. The
remaining \(d-2\) coordinates are an independent fast Ornstein--Uhlenbeck
process whose contribution to the conditional law of any future state
is, in the asymptotic limit \(\lambda_{\perp}\!\to\!\infty\), a pure
isotropic Brownian increment over the macro step. The trajectory is
generated by an exact transition kernel on a trading-day grid
(\(dt\!=\!1/250\), \(20\) calendar years per run); we sweep
\(d\in\{4,8,16,32,64,128,256,512\}\) with three seeds per dimension and
keep the training-set size fixed at \(M=4500\), so as \(d\) grows the
ratio \(M/d\) collapses from \(1125\) to \(8.8\). The signal subspace
is taken canonical, \(\mathrm{span}(e_{1},e_{2})\); under any random
orthonormal loading the same conclusions hold up to a fixed rotation,
but the canonical choice lets the test scoring be performed on the
first two coordinates without a data-driven projection.

The kernel weights are the \emph{quartic compact-support kernel}
of \cite{hamdouche2023generative} (eq.~4.2),
\(K_{h}(u)=(1-\|u\|^{2}/h^{2})\,\mathbf{1}_{\{\|u\|\le h\}}\), used as a
multiplicative factor at every past time point. This is the
architectural choice that exposes the conditioning effect: a Gaussian
softmax cancels noise-axis contributions in its own normalisation and
masks the curse of dimensionality on this DGP, while a hard support
cutoff lets the perpendicular squared-distance accumulate freely and
push every candidate outside the support as \(d\) grows. We compare two
variants: \texttt{classic\_no\_pcr}, which conditions the kernel on the
full \(d\)-dimensional state, and \texttt{classic\_pcr}, which conditions
it on the validated PCA-truncated state — and the validated truncation
lands on \(k\!=\!2\) at every \(d\), since the position covariance has a
clean step at rank \(2\) by construction. Hyperparameters are validated
per dimension on a held-out validation block; the test metric is the
one-step predictive multivariate energy score on the test block,
sliced to the first two ambient coordinates and normalised by
\(\sqrt{q}\) so that values are directly comparable across \(d\). With
the chosen DGP parameters the score is bounded below by the Bayes-optimal
floor \(\widetilde{\mathrm{ES}}^{\star}\!\approx\!0.040\) (the entropy of
the one-step Brownian transition on the signal subspace) and above by
the predict-the-marginal-mean floor of order \(0.7\); reporting both
the absolute score and its excess over the Bayes-optimal floor isolates
the curse-of-dimensionality contribution to the predictive error.

\begin{figure}[h]
  \centering
  \begin{tikzpicture}
    \begin{axis}[
      width=0.85\linewidth, height=6.5cm,
      xmode=log, log basis x=2, xtick={4,8,16,32,64,128,256,512},
      xticklabels={4,8,16,32,64,128,256,512},
      xlabel={ambient dimension \(d\)},
      ylabel={\(\widetilde{\mathrm{ES}}_{d}\)},
      ymin=0.030, ymax=0.16,
      grid=both, grid style={gray!20},
      legend pos=north west, legend cell align=left,
    ]
      \addplot[dashed, gray, thick, domain=4:512] {0.040};
      \addlegendentry{Bayes-optimal floor};
      \addplot+[mark=*, color=red!70!black, thick, error bars/.cd,
               y dir=both, y explicit]
        coordinates {
          (4,0.066) +- (0,0.003)
          (8,0.069) +- (0,0.003)
          (16,0.092) +- (0,0.021)
          (32,0.114) +- (0,0.017)
          (64,0.116) +- (0,0.009)
          (128,0.107) +- (0,0.007)
          (256,0.109) +- (0,0.009)
          (512,0.129) +- (0,0.014)
        };
      \addlegendentry{\texttt{classic\_no\_pcr}};
      \addplot+[mark=square*, color=blue!70!black, thick, error bars/.cd,
               y dir=both, y explicit]
        coordinates {
          (4,0.066) +- (0,0.005)
          (8,0.059) +- (0,0.003)
          (16,0.053) +- (0,0.007)
          (32,0.067) +- (0,0.017)
          (64,0.061) +- (0,0.003)
          (128,0.063) +- (0,0.006)
          (256,0.067) +- (0,0.007)
          (512,0.078) +- (0,0.010)
        };
      \addlegendentry{\texttt{classic\_pcr}};
    \end{axis}
  \end{tikzpicture}
  \caption{Predictive energy score on the canonical signal subspace as
    a function of the ambient dimension \(d\). The PCR variant stays
    within \(2\times\) the Bayes-optimal floor at every \(d\); the
    no-PCR baseline degrades from \(\approx\!1.7\times\) the floor at
    \(d=4\) to \(\approx\!3.2\times\) at \(d=512\). Error bars are one
    standard deviation across three independent seeds.}
  \label{fig:lowrank-energy-vs-d}
\end{figure}

Figure~\ref{fig:lowrank-energy-vs-d} summarises the experiment. The
PCR variant is essentially flat in \(d\): its excess over the
Bayes-optimal floor stays in a narrow \([0.013,0.039]\) band across
two orders of magnitude in the codimension. The no-PCR baseline
degrades monotonically; its excess over the same floor grows by roughly
a factor of three from \(d=4\) to \(d=512\), with the PCR/no-PCR ratio
on the excess climbing to \(\approx\!4\) at the intermediate
dimensions and stabilising around \(2.5\) at the largest. The two
variants coincide at \(d=4\), where there are no perpendicular axes
to exploit, and diverge as soon as the codimension is comparable to
the signal dimension. This is the regime the construction is designed
to absorb: a kernel that is contracted on the data manifold by PCR
keeps contact with the data at every ambient dimension, while a
kernel that sees the full ambient state loses contact as the noise
dimension grows.

\subsection{Heston parameter recovery}
\label{ssec:heston-parameter-recovery}

The second experiment runs the three-layer TR-SBTS pipeline end-to-end
on a canonical stochastic-volatility model. Let \((S_{t},V_{t})\) solve
the standard Heston system
\begin{equation}
\label{eq:heston-system-numexp}
\begin{aligned}
dS_{t} &= \mu\, S_{t}\, dt + \sqrt{V_{t}}\, S_{t}\, dW_{t}^{S}, \\
dV_{t} &= \kappa\,(\theta - V_{t})\, dt + \xi\,\sqrt{V_{t}}\, dW_{t}^{V},
\qquad d\langle W^{S},W^{V}\rangle_{t} = \rho\, dt,
\end{aligned}
\end{equation}
with parameters \((\kappa,\theta,\xi,\rho)\) drawn per path from a fixed
prior and a horizon of \(T=252\) coarse increments at \(\Delta t = 1/T\).
The observable state is \(\zeta_{t}:=(\log S_{t},V_{t})\). The
experiment is a stress test on how informative a TR-SBTS reference
constructed \emph{purely from data} — without any parametric Heston
assumption inside the model — can be made, when the data is the
diffusion of \(\zeta_{t}\) itself. Throughout this subsection,
\(X\), \(Y\), and \(Z\) denote the three layers of the pipeline,
from the observable level downwards; in particular, \(Z\) here is the
tertiary layer state, not the augmented state of
Section~\ref{sec:reference-based-sbts-projection}.

\subsubsection*{Three layers and what they encode}

The pipeline is the three-layer TR-SBTS construction of
Section~\ref{subsec:joint-tr-sbts-generation}, instantiated with the
following causal interpretation of the layers:
\begin{description}
\item[Primary (\(X\)-layer).] State
  \(X_{t}\equiv\zeta_{t}\in\mathbb{R}^{2}\), the observable pair
  of log-price and variance. This is the only level whose generated
  output the user consumes.
\item[Secondary (\(Y\)-layer).] State \(Y_{t}\in\mathbb{R}^{3}\),
  the packed lower triangle of the \emph{cumulative average} of the
  outer products of past coarse increments of \(\zeta_{t}\),
  \[
    M_{t} \;=\; \frac{1}{t\,\Delta t}\sum_{i=1}^{t}
    \Delta\zeta_{i}\,(\Delta\zeta_{i})^{\top},
  \]
  packed in lower-triangular form. \(M_{t}\) is a fully causal,
  parameter-free running estimate of the path's average covariance
  rate; the secondary models its time evolution.
\item[Tertiary (\(Z\)-layer).] State \(Z_{t}\) is a
  \emph{hybrid-frame parametrisation} of what we call the path's
  average covariance \emph{ribbon}: the path-specific curve in
  \(\mathrm{Sym}^{+}_{2}\) along which the cumulative covariance
  accumulates. Concretely, \(Z_{t}\) is a finite-dimensional vector
  whose first component is the dominant principal direction of the
  path's cumulative covariance estimator (after normalisation by its
  \(X\)-variance), and whose remaining components are the
  normalisations of the second and first principal components of
  that same estimator. The tertiary thus encodes the
  \emph{geometry} of where the diffusion lives, independently of the
  magnitude controlled by \(V_{t}\).
\end{description}

The reference at each level is produced by a deterministic
\emph{backward map} from the state of the level immediately below,
applied at every closed-loop step. The map at the upper edge
(tertiary~$\rightarrow$~secondary) sends a hybrid-frame point back
into the cone of symmetric PSD matrices, picking out the rank-one
covariance ribbon parametrised by that point; the map at the lower
edge (secondary~$\rightarrow$~primary) is the unpacking of the
\(3\)-vector into a \(2\times 2\) symmetric matrix used as the primary's
runtime diffusion covariance. Both maps are closed-form and
state-independent inside each coarse interval, in line with the
intervalwise-frozen reference structure of
Section~\ref{sec:exact-intervalwise-dynamics}.

\subsubsection*{Data-driven references}

The most delicate aspect of the experiment is the choice of references
the upper layers expose to the lower ones. Each is read off the data
without ever invoking a parametric form of the Heston diffusion:
\begin{itemize}
\item The secondary's training stream is the cumulative-average
packed covariance \(M_{t}\) of \(\zeta_{t}\) defined above. No
smoothing kernel, no half-life, no tuning: the running mean of the
squared increments is the most parsimonious causal estimator of
where the increments are diffusing on average up to time \(t\).
\item The tertiary's training stream is the hybrid-frame
parametrisation of the path's terminal cumulative covariance
\(M_{T}\), obtained by a principal-component analysis on \(\zeta\)
and a backward-map reduction into the ribbon coordinates that the
secondary will consume as reference.
\end{itemize}
Both streams are computed per path on the training pool only, and
each layer is then fitted by the kernel-regression estimator of
Section~\ref{subsec:statistical-estimator} against its own stream as
if it were the ground truth. No information about the Heston law
enters the model.

\subsubsection*{Implementation}

The model implementing this pipeline is a three-layer TR-SBTS as
described in Section~\ref{sec:computational-architecture}, run on
\(N_{\mathrm{train}}\) training paths and validated on a held-out pool
of \(N_{\mathrm{val}}\) paths. The closed-loop generation warm-starts
on the path's real history of length \(H = 64\) and then evolves the joint
\((Z,Y,X)\) state by the joint TR-SBTS generator for the remaining
\(T-H\) coarse steps. The secondary and tertiary bandwidths,
history lengths, and integration controls are selected by a
predictive-energy validation on the same pool.

\subsubsection*{Results}

We evaluate the closed-loop synthetic paths through the same per-path
Heston-like estimators of \((\kappa,\theta,\xi,\rho)\) that one would
apply to the real series, and report their cross-path distributions on
the held-out pool. Figure~\ref{fig:heston-densities} shows, for the
selected configuration, the four parameter densities on the synthetic
trajectories overlaid on the densities obtained on the held-out real
trajectories and on those produced by the SBTS-Classic baseline of
\cite{alouadi2025robust}, which fits a single kernel-regression layer
against a frozen, state-independent Brownian reference. The classical
baseline recovers the mean-reversion level \(\theta\) and the long-run
drift, but the variance-of-variance \(\xi\) and the leverage \(\rho\)
collapse onto the empirical priors of those parameters, as the frozen
Brownian reference carries no information about the path-specific
diffusion direction. The TR-SBTS pipeline, by contrast, replaces that
frozen reference with the variable, intervalwise reference produced by
the tertiary~$\rightarrow$~secondary backward map, and as a consequence
recovers \emph{also} the geometry-defining pair \((\xi,\rho)\)
distributionally; the long-run level \(\theta\) is recovered within
the empirical finite-sample fluctuation of the per-path estimator;
the mean-reversion rate \(\kappa\) is the least informative channel at
this horizon and bandwidth, consistent with its weak identifiability
from a fixed-length single trajectory. Allowing the reference to vary
along the path is what enables the recovery of the diffusion
structure that an isochronous Brownian reference does not encode.

\begin{figure}[h]
\centering
\includegraphics[width=0.95\textwidth]{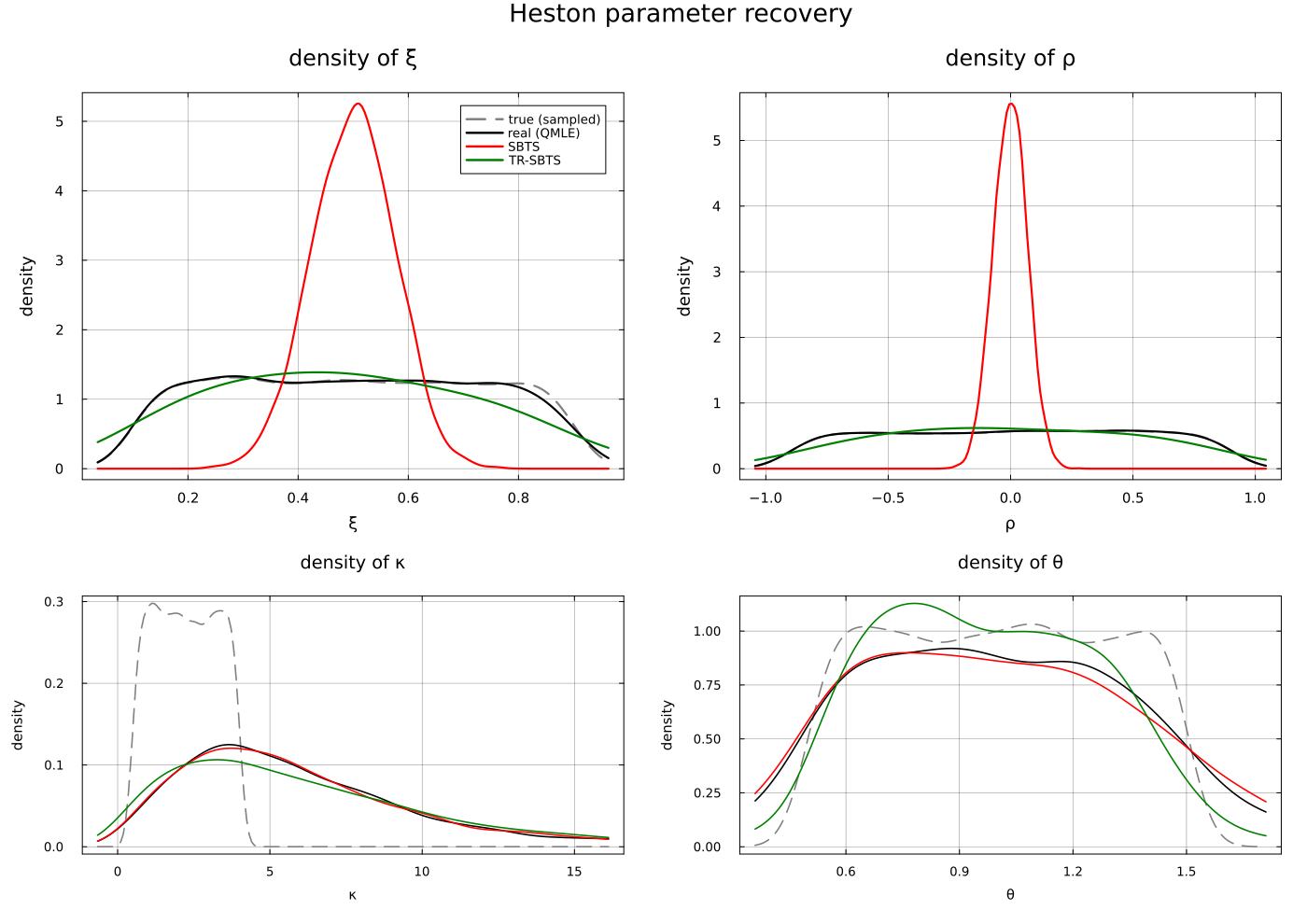}
\caption{Heston parameter recovery on closed-loop synthetic
trajectories. Real held-out paths, TR-SBTS synthetic paths under the
selected configuration, and SBTS-Classic baseline. SBTS-Classic
recovers \(\theta\) but collapses \((\xi,\rho)\) onto the empirical
priors. TR-SBTS additionally recovers the geometry pair
\((\xi,\rho)\) distributionally; the mean-reversion rate \(\kappa\) is
the least identifiable channel at this horizon.}
\label{fig:heston-densities}
\end{figure}

\paragraph{Hardware and software environment.}
\label{par:hardware-software-env}
The experiments above were performed on a laptop with the following
characteristics: Intel(R) Core(TM) i7-8565U CPU at 1.80\,GHz (turbo
up to 1.99\,GHz), 4~physical cores and 8~logical processors,
16\,GB of RAM, and an integrated Intel UHD Graphics 620 GPU (no
discrete accelerator), running Microsoft Windows~11 Pro
(build~10.0.26200). All algorithms were implemented in Julia
(version~1.12.3). The core TR-SBTS solver depends only on the Julia
standard library; the elementary aggregations used at run time come
from the \texttt{Statistics}~1.11.1 standard module. The synthetic
datasets are generated using
\texttt{DifferentialEquations.jl}~7.17.0,
\texttt{Distributions.jl}~0.25.123 and \texttt{StatsBase.jl}~0.34.9;
the Heston experiment additionally uses \texttt{Optim.jl}~1.13.3 for
the calibration step. All figures are produced with
\texttt{Plots.jl}~1.41.4. The TR-SBTS solver runs on the CPU only;
no GPU acceleration is used.

\section{Conclusion and outlook}
\label{sec:conclusion}

We have developed \emph{TR-SBTS}, a conservative extension of the
Schrödinger Bridge for Time Series framework: the entropy-minimisation
problem on path space is unchanged and only the reference is modified. The new
reference is triangular and intervalwise frozen: it incorporates a
volatility structure depending on the coarse past and on a latent
environment, while remaining state-independent inside each coarse
interval, and iterating the construction over the latent environment
yields a hierarchy of conditioning levels, each of which is itself a
TR-SBTS conditional on a higher-level volatility signal. The whole
construction is a single entropy projection on the augmented state
space, with the constraint imposed jointly across time and levels and
unfolded by the disintegration of entropy; in particular, arbitrary
cross-dependence between price and volatility paths is allowed.

This choice of reference preserves the transparency of the original
framework. The minimiser is identified explicitly as an
\(h\)-transform of the reference law, and, conditionally on the
coarse past and on the latent environment, each interval carries a
Gaussian leaf problem, possibly degenerate. Its optimal dynamics are
described by backward heat potentials on the affine leaves generated
by the active covariance directions, yielding the intrinsic
logarithmic-gradient drift \(A\,\nabla\log H\), which remains
meaningful when the covariance is degenerate or of variable rank.

The freezing step is itself controlled: the discrepancy between the
ideal volatility-informed reference and its intervalwise frozen
approximation is measured by a cumulant interpolation error, and the
corresponding bridges are stable
(Proposition~\ref{prop:piecewise-frozen-bridge-stability}).

On the statistical side, the only conditional object that must be
estimated is the next-increment law, and convergence of the
regularised kernel estimators is established for the associated heat
potentials and drifts. The
implementation is non-parametric and reference-aware: a
finite-dimensional conditioning map (PCR-based, or built on the
local Mahalanobis geometry induced by the frozen covariance
cumulants) feeds a coupled state--covariance bridge step in which
each latent level produces a dynamic reference for the level above,
summarised by a covariance descriptor; the symmetric square-root
factorisation enters only at simulation time, for density evaluation
and local geometric normalisation. The numerical experiments of
Section~\ref{sec:numerical-experiments} illustrate this behaviour in
a low-rank dimensional stress test and in a Heston parameter-recovery
study.

\medskip

Several directions remain open.

\paragraph{Jumps.}
The present formulation is continuous-path. A natural extension
combines the intervalwise frozen diffusion reference with a jump
component, in the spirit of Schrödinger bridges with jumps. This
would let the model treat stochastic volatility and discontinuous
path behaviour within a single framework, and is empirically
relevant for financial and energy time series where abrupt moves,
heavy tails, and regime switches are central.

\paragraph{Conditional generation.}
The explicit Brownian noise injected at each inner step, together
with the path-dependent bridge mechanism, makes the framework
structurally suited to conditional generation under pathwise
events---barrier- or drawdown-type constraints, scenario
reconstruction on rare-event subsets, and more generally the
simulation of trajectories subject to path-dependent conditions.
We intend to pursue this direction in future work.

\paragraph{Neural estimators.}
The kernel-based estimator used here is interpretable but exposed
to bandwidth sensitivity and to the curse of dimensionality. A
controlled introduction of neural components into the bridge
mechanism, preserving the analytic stability guarantees and the
transparency of the path-space construction, is a natural direction
of investigation.

\paragraph{Multi-resolution data.}
The cumulant stability result of
Section~\ref{subsec:volatility-informed-freezing} is itself a
multi-resolution statement: refining the volatility grid inside each
coarse interval recovers the continuous volatility limit
(Proposition~\ref{prop:piecewise-frozen-bridge-stability}). The same
principle extends to settings in which the volatility process is
observed or updated at a finer rate than the state, and more
generally to interacting processes with non-identical observation
grids, where each component contributes information when it is
observed while the latent continuous-time interpolation keeps the
joint dynamics coherent.

\section*{Acknowledgements}

The author thanks Arakne S.r.l.\ for providing the working
environment in which this research was carried out and for the
original suggestion to look into the SBTS framework.

The author also thanks Jacopo Stortini for sustained collaboration on
the volatility-related components of the construction throughout its
development, and for serving as a critical interlocutor whose
questions led to several methodological refinements.

The author is particularly indebted to Professor Alessio Porretta,
whose analytical training and sustained guidance underpin the
approach taken here. The choice to handle the local conditional
dynamics through a backward heat-equation regularisation, rather
than via direct drift estimation, traces back to his methodological
influence, as does the introduction to the surrounding literature on
mean-field Schrödinger bridges and finite-entropy Girsanov theory.

Finally, the author thanks Professor Lidia Aceto for several
stimulating exchanges that proved decisive in reorienting the
numerical approach in a non-trivial direction, and for the
perspective that her expertise in numerical analysis brought to bear
on the ill-conditioned aspects of the problem.

\appendix

\section{Technical proof of the statistical approximation theorem}
\label{sec:appendix-statistical-approximation}
This appendix contains the proof of the statistical approximation Theorem~\ref{thm:approximate-intervalwise-dynamics}.
The argument is split into three layers. 
First, we replace the degenerate leafwise heat kernel by its spectrally
floored full-rank extension.
Second, we prove consistency for compactly supported regression kernels.
Third, we transfer the result to the Gaussian kernels used in the
implementation by a truncation argument.

\subsection{Spectral regularisation of the leafwise backward kernel}

\begin{proposition}[Spectral regularisation of the leafwise backward kernel]
\label{prop:spectral-regularisation-leafwise-kernel}
Adopt the setup of Section~\ref{sec:exact-intervalwise-dynamics},
let \(\mu\) satisfy the finite-entropy
condition~\eqref{eq:finite-entropy-assumption}, and fix
\(i\in\{0,\dots,N-1\}\). 

For \(\mu_i\)-a.e.
\((\mathbf x_i,y_{i+1})\) the following hold.

\begin{enumerate}
\item[(i)] \textnormal{\textsc{Smoothness.}}
For every \(\varepsilon>0\), the regularised kernel
\(p_i^\varepsilon(\cdot,\cdot;t_{i+1},\cdot)\)
of~\eqref{eq:def-p-eps} is smooth and strictly positive on
\([t_i,t_{i+1})\times\R^d\times\R^d\), and the regularised
potential \(H_i^\varepsilon\) of~\eqref{eq:def-H-eps} is smooth in
\(x\) on \(\R^d\) and continuous in \(t\) on \([t_i,t_{i+1})\).

\item[(ii)] \textnormal{\textsc{Global collapse.}}
As \(\varepsilon\downarrow 0\),
\begin{equation}
\label{eq:H-eps-equals-H-app}
H_i^\varepsilon(t,x)
\longrightarrow
H_i(t,x)\,\mathbf 1_{\mathcal L_i}(x)
\qquad
\text{pointwise on }(t_i,t_{i+1})\times\R^d,
\end{equation}
locally uniformly on compact subsets of
\((t_i,t_{i+1})\times\mathcal L_i\) (where the limit is the exact
leafwise potential \(H_i\)) and on every compact set in
\((t_i,t_{i+1})\times\R^d\) disjoint from
\((t_i,t_{i+1})\times\mathcal L_i\) (where the limit is \(0\)).

\item[(iii)] \textnormal{\textsc{Drift convergence on compacta.}}
As \(\varepsilon\downarrow 0\), for every compact
\(K\Subset(t_i,t_{i+1})\times\mathcal L_i\),
\begin{equation}
\label{eq:regularised-drift-equals-exact-on-leaf}
\sup_{(t,x)\in K}
\bigl|A_i^\varepsilon\,\nabla_x\log H_i^\varepsilon(t,x)-b_i^*(t,x)\bigr|
\longrightarrow 0,
\end{equation}
in the intrinsic Mahalanobis geometry of \(A_i\).

\item[(iv)] \textnormal{\textsc{Drift convergence at the boundary
\((t_i,x_i)\).}}
For every \(\varepsilon>0\) sufficiently small, the diagonal limit
\begin{equation}
\label{eq:b-eps-boundary-value-app}
b_i^\varepsilon(t_i,x_i)
:=
\lim_{t\downarrow t_i}A_i^\varepsilon\,\nabla_x\log H_i^\varepsilon(t,x_i)
\end{equation}
exists, and \(b_i^\varepsilon(t_i,x_i)\to b_i^*(t_i,x_i)\) as
\(\varepsilon\downarrow 0\), where \(b_i^*(t_i,x_i)\) is the starting
drift of Proposition~\ref{prop:local-intervalwise-minimiser}.
\end{enumerate}
\end{proposition}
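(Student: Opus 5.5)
The plan is to compute the kernel ratio \(\Phi_i^\varepsilon\) explicitly and split it along the orthogonal decomposition \(\R^d=E_{A_i}\oplus E_{A_i}^\perp\), where \(E_{A_i}=\operatorname{Ran}A_i=\mathcal L_i\). Items (i), (ii) and (iii) should then reduce to a one-line factorisation, and item (iv) to a first-moment argument.

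\emph{Factorisation.} Write \(\alpha:=t_{i+1}-t\), \(\beta:=\Delta_i=t_{i+1}-t_i\), \(w:=x-x_i\), and decompose \(w=w_\parallel+w_\perp\) with \(w_\parallel\in E_{A_i}\) and \(w_\perp\in E_{A_i}^\perp\). Fix \(\varepsilon\in(0,\lambda_i^{\min,+})\). The floor \eqref{eq:def-A-eps} leaves every positive eigenvalue of \(A_i\) unchanged and equals \(\varepsilon\) on \(\ker A_i\), so
\[
(A_i^\varepsilon)^{-1}=A_{E_{A_i}}^{-1}\oplus\varepsilon^{-1}I_{E_{A_i}^\perp}.
\]
By Proposition~\ref{prop:local-intervalwise-minimiser} and Corollary~\ref{cor:terminal-conditional-law}, \(\eta_i^\mu\) is carried by \(\mathcal L_i\), so \(\delta_\perp=0\) \(\eta_i^\mu\)-a.s. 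For such \(\delta\) the Gaussian exponents separate, and I expect
\[
\Phi_i^\varepsilon(t,x,\delta)
=\Bigl(\tfrac{\beta}{\alpha}\Bigr)^{\frac{d-r_i}{2}}
e^{-|w_\perp|^2/(2\varepsilon\alpha)}\;
\frac{p_{A_i}(t,x_i+w_\parallel;t_{i+1},x_i+\delta)}{p_{A_i}(t_i,x_i;t_{i+1},x_i+\delta)},
\]
where the last factor is the intrinsic ratio already used in the admissibility step of Proposition~\ref{prop:local-intervalwise-minimiser}. Integrating against \(\eta_i^\mu\) then gives
\[
H_i^\varepsilon(t,x)=c(t)\,e^{-|w_\perp|^2/(2\varepsilon\alpha)}\,H_i(t,x_i+w_\parallel),
\qquad c(t):=\Bigl(\tfrac{\beta}{\alpha}\Bigr)^{\frac{d-r_i}{2}}.
\]
The factor \(c(t)\) depends only on time. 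It comes from the transverse normalisations, which cancel in \(\varepsilon\) but not in time. I would either absorb it into the normalisation convention for \(H_i\) behind \eqref{eq:H-eps-equals-H}, or record explicitly that \eqref{eq:H-eps-equals-H-app} holds up to this factor. In either case it is harmless for every logarithmic gradient.

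\emph{Items (i)--(iii).} For (i), smoothness and positivity of \(p_i^\varepsilon\) are immediate from \(A_i^\varepsilon\succeq\varepsilon I\). For \(H_i^\varepsilon\), I would differentiate under the integral, using the \(\alpha<\beta\) gap exactly as in Proposition~\ref{prop:local-intervalwise-minimiser}: on bounded \(x\)-sets and on \(\{t\le t_{i+1}-\tau\}\), the quadratic form \(-\tfrac12(\alpha^{-1}-\beta^{-1})|\delta|^2_{(A_i^\varepsilon)^{-1}}\) dominates \(\Phi_i^\varepsilon\) and all its \(x\)-derivatives by a Gaussian-type function of \(\delta\), which is integrable against the probability measure \(\eta_i^\mu\).

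For (ii), the factorisation gives the result directly.
\begin{itemize}
\item On the leaf, \(w_\perp=0\), so \(H_i^\varepsilon=c(t)H_i\) for all \(\varepsilon\) small; this yields the on-leaf collapse with locally uniform convergence on compacta.
\item Off the leaf, the factor \(e^{-|w_\perp|^2/(2\varepsilon\alpha)}\to0\). This is uniform on compacts bounded away from the leaf, because there \(|w_\perp|\) is bounded below, \(\alpha\) is bounded above, and \(H_i(t,x_i+w_\parallel)\) is bounded by continuity.
\end{itemize}

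For (iii), on the leaf the gradient of \(\log H_i^\varepsilon\) equals \(\nabla\log H_i\). This vector lies in \(E_{A_i}\), since \(H_i\) depends only on the active coordinates, and \(A_i^\varepsilon\) acts as \(A_i\) on \(E_{A_i}\). Hence \(b_i^\varepsilon=b_i^*\) exactly on \(K\) once \(\varepsilon<\lambda_i^{\min,+}\), which is stronger than the claimed convergence.

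\emph{Item (iv), the main obstacle.} Differentiating the Gaussian ratio gives
\[
A_i^\varepsilon\nabla_x\log H_i^\varepsilon(t,x)
=\frac{1}{\alpha}\left(\frac{\int\delta\,\Phi_i^\varepsilon(t,x,\delta)\,\eta_i^\mu(d\delta)}{H_i^\varepsilon(t,x)}-(x-x_i)\right),
\]
which is the closed form used in Section~\ref{sec:computational-architecture}. At \(x=x_i\) we have \(\Phi_i^\varepsilon(t,x_i,\delta)=(\beta/\alpha)^{d/2}\exp\!\bigl(-\tfrac12(\alpha^{-1}-\beta^{-1})|\delta|^2_{(A_i^\varepsilon)^{-1}}\bigr)\). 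This is bounded by \((\beta/\alpha)^{d/2}\le 2^{d/2}\) for \(t\) near \(t_i\), and it converges pointwise to \(1\) as \(t\downarrow t_i\).

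Dominated convergence therefore gives the limit \(\beta^{-1}\int\delta\,\eta_i^\mu(d\delta)\), provided \(\eta_i^\mu\) has a finite first moment. This first-moment bound is the delicate point. I would obtain it from the finite conditional entropy \(H(\eta_i^\mu\mid k_i^R)<\infty\) via the Donsker--Varadhan inequality \(\int\varphi\,d\eta\le H(\eta\mid k)+\log\int e^\varphi dk\), applied with \(\varphi=\theta|\delta|_{A_i^+}\). The right-hand side is finite because \(k_i^R\) is Gaussian on \(\mathcal L_i\).

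Since the limit is \(\varepsilon\)-independent for \(\varepsilon<\lambda_i^{\min,+}\), and coincides with the starting drift \(\lim_{t\downarrow t_i}b_i^*(t,x_i)\) of Remark~\ref{rem:boundary-semantics-t-i} by the same computation applied to \(H_i\), the convergence \(b_i^\varepsilon(t_i,x_i)\to b_i^*(t_i,x_i)\) follows, and is in fact an equality for \(\varepsilon\) small.
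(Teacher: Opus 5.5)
Your approach is the paper's own. You split the kernel ratio into a transverse Gaussian penalty times the intrinsic leafwise ratio (its Step~1), read off the on-leaf identity and the off-leaf decay (Step~3), and treat \((t_i,x_i)\) through the kernel-ratio average, a constant bound on \(\Phi_i^\varepsilon(t,x_i,\cdot)\) and a Donsker--Varadhan first moment (Step~6). For (iii) you prove the exact identity \(b_i^\varepsilon=b_i^*\) on the leaf once \(\varepsilon<\lambda_i^{\min,+}\): the transverse part of \(\nabla_x\log H_i^\varepsilon\) is \(-w_\perp/(\varepsilon\alpha)\), which vanishes there. This is cleaner than the paper's Steps~4--5, which go through convergence of directional derivatives.

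The factor \(c(t)=(\beta/\alpha)^{(d-r_i)/2}\) you keep is genuinely there, and on this point the paper is wrong. In its Step~1 only the determinant cancels. The time normalisations \((2\pi\alpha)^{-d/2}\) and \((2\pi\beta)^{-d/2}\) leave \((\beta/\alpha)^{d/2}\), and the intrinsic ratio absorbs only \((\beta/\alpha)^{r_i/2}\). The paper's own exponents betray this: \(d/2\) in Step~2, but \(r_i/2\) in the \(\Phi_i^*\) of Step~6. Hence \eqref{eq:eps-kernel-factorisation} and \eqref{eq:H-eps-equals-H} hold only with \(c(t)\) inserted, and item (ii) as stated is false whenever \(r_i<d\): on the leaf \(H_i^\varepsilon=c(t)H_i\) for all small \(\varepsilon\), and \(c(t)>1\) for \(t>t_i\).

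Of your two options, only the second is legitimate, namely proving \(H_i^\varepsilon\to c(t)H_i\mathbf 1_{x_i+\mathcal L_i}\). You cannot absorb \(c(t)\) into the normalisation of \(H_i\). That function is fixed by Lemma~\ref{lem:frozen-interval-controlled-diffusion} as the intrinsic heat propagation of \(f_i\), and \(c(t)H_i\) does not solve the leafwise backward heat equation. It acquires the term \(\partial_t c=\tfrac{d-r_i}{2\alpha}c\), which in \(\R^d\) is balanced only by the transverse second derivative of \(e^{-|w_\perp|^2/(2\varepsilon\alpha)}\) at \(w_\perp=0\). Items (iii) and (iv) are unaffected, as you say, because \(c\) does not depend on \(x\).

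One step in (i) fails as written. The gap coefficient \(\alpha^{-1}-\beta^{-1}\) tends to \(0\) as \(t\downarrow t_i\), so the Gaussian domination is uniform only for \(t\) in compact subsets of \((t_i,t_{i+1})\), not on \(\{t\le t_{i+1}-\tau\}\); the paper's Step~2 makes the same slip. At \(t=t_i\) one has \(\Phi_i^\varepsilon(t_i,x,\delta)=\exp\bigl(\beta^{-1}\langle (A_i^\varepsilon)^{-1}(x-x_i),\delta\rangle-\tfrac{1}{2\beta}|x-x_i|^2_{(A_i^\varepsilon)^{-1}}\bigr)\). Integrating this requires exponential moments of \(\eta_i^\mu\), which finite relative entropy with respect to a Gaussian does not guarantee: Laplace-type tails have finite entropy but no exponential moments beyond a threshold. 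So claim smoothness in \(x\) and continuity in \(t\) on the open interval only. At the left endpoint, cover only the diagonal point \(x=x_i\), which is exactly what your computation in (iv) provides.
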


\begin{proof}
Fix \((\mathbf x_i,y_{i+1})\) in a full \(\mu_i\)-measure set for which
Corollary~\ref{cor:terminal-conditional-law} and
Proposition~\ref{prop:local-intervalwise-minimiser} hold. Write
\(A:=A_i\), \(r:=r_i\), \(k:=d-r\), and \(L:=\mathcal L_i=\operatorname{Ran}A\)
(linear subspace of admissible increments). Let \(P\), \(P^\perp\)
denote the orthogonal projections onto \(L\) and \(\ker A\), and set
\[
\Pi x := x_i + P(x-x_i),
\quad
d_\perp(x):=|P^\perp(x-x_i)|,
\quad
\alpha:=t_{i+1}-t,
\quad
\beta:=t_{i+1}-t_i,
\]
so that \(0<\alpha<\beta\) on \((t_i,t_{i+1})\). By
Corollary~\ref{cor:terminal-conditional-law}, \(\eta_i^\mu\) is
supported on \(L\); throughout, \(\delta\in L\) denotes the
increment variable.

\paragraph{Step 1: factorisation along leaf-tangent and leaf-transverse parts.}
Since the conditioning pair \((\mathbf x_i,y_{i+1})\) is fixed, the
active threshold \(\lambda_i^{\min,+}>0\) is a.s.\ strictly positive
and \(\varepsilon\) eventually falls below it; we conduct the
analysis on this eventual range
\(\varepsilon\in(0,\lambda_i^{\min,+})\), the only regime that
matters in the limit \(\varepsilon\downarrow 0\). There the floor
leaves the active eigenvalues unchanged and replaces the kernel
directions by \(\varepsilon\). Since \(\delta\in L\) and
\(x_i+\delta-x_i=\delta\) has no transverse component, while
\(x_i+\delta-x=\delta-P(x-x_i)-P^\perp(x-x_i)\) splits into active
and transverse parts; writing \(\delta':=\delta-P(x-x_i)\),
\begin{align*}
\bigl\langle (A_i^\varepsilon)^{-1}(x_i+\delta-x),x_i+\delta-x\bigr\rangle
&=
\bigl\langle A_i^+\delta',\delta'\bigr\rangle
+\frac{d_\perp(x)^2}{\varepsilon},
\\
\bigl\langle (A_i^\varepsilon)^{-1}\delta,\delta\bigr\rangle
&=
\bigl\langle A_i^+\delta,\delta\bigr\rangle.
\end{align*}
With \(\det A_i^\varepsilon=(\det{}'A)\varepsilon^k\), the
\(\varepsilon\)-dependent normalisation
\((2\pi(\cdot))^{-d/2}\allowbreak(\det A_i^\varepsilon)^{-1/2}\)
cancels between numerator and denominator. Combining the remaining factors,
\begin{equation}
\label{eq:eps-kernel-factorisation}
\Phi_i^\varepsilon(t,x,\delta)
:=
\frac{p_i^\varepsilon(t,x;t_{i+1},\delta)}
     {p_i^\varepsilon(t_i,x_i;t_{i+1},\delta)}
=
\exp\!\Bigl(
-\frac{d_\perp(x)^2}{2\alpha\varepsilon}
\Bigr)\,
\frac{p_i(t,\Pi x;t_{i+1},\delta)}
     {p_i(t_i,x_i;t_{i+1},\delta)},
\qquad \delta\in L,
\end{equation}
where \(p_i\) is the unfloored leafwise heat kernel of \(A_i\) as a
density on the increment. Integrating~\eqref{eq:eps-kernel-factorisation}
against \(\eta_i^\mu(d\delta)\) yields the global factorisation
\begin{equation}
\label{eq:H-eps-factorisation}
H_i^\varepsilon(t,x)
=
\exp\!\Bigl(
-\frac{d_\perp(x)^2}{2\alpha\varepsilon}
\Bigr)\,
H_i(t,\Pi x).
\end{equation}

\paragraph{Step 2: Gaussian-ratio domination via the \(\alpha<\beta\) gap.}
For \(\delta\) ranging in \(\R^d\) (not only on \(L\)), completing
the square in \(\delta\) in the joint quadratic form of numerator
and denominator gives
\begin{equation}
\label{eq:eps-kernel-ratio-after-square}
\Phi_i^\varepsilon(t,x,\delta)
=
\Bigl(\tfrac{\beta}{\alpha}\Bigr)^{d/2}
\exp\!\Bigl(
-\tfrac{1}{2}\bigl(\tfrac{1}{\alpha}-\tfrac{1}{\beta}\bigr)\,
\|\delta-m_{t,x}\|^2_{(A_i^\varepsilon)^{-1}}
+R_{t,x}^\varepsilon
\Bigr),
\end{equation}
with
\(m_{t,x}:=(\tfrac{1}{\alpha}-\tfrac{1}{\beta})^{-1}\,\tfrac{1}{\alpha}(x-x_i)\)
and \(R_{t,x}^\varepsilon\) collecting the \(\delta\)-independent
residual (which embeds the transverse exponential
of~\eqref{eq:eps-kernel-factorisation}). The crucial point is the
gap \(\alpha<\beta\): the \emph{numerator} \(p_i^\varepsilon\)
diffuses over the residual time \(\alpha\) while the \emph{prior}
\(p_i^\varepsilon(t_i,x_i;\cdot)\) diffuses over the full interval
\(\beta\), so the combined quadratic coefficient
\(\tfrac{1}{\alpha}-\tfrac{1}{\beta}>0\) is strictly positive. Since
\((A_i^\varepsilon)^{-1}\succeq(\|A_i\|+\varepsilon)^{-1}I\), the
quadratic form in \(\delta\)
of~\eqref{eq:eps-kernel-ratio-after-square} is strictly
negative-definite uniformly on \(\{t\le t_{i+1}-\tau\}\) and on
bounded \(x\)-sets, yielding the Gaussian-ratio bound
\[
\Phi_i^\varepsilon(t,x,\delta)
\le
C_{\tau,B}\,
\exp\!\bigl(-c_{\tau,B}\,\|\delta-m_{t,x}\|^2_{(A_i^\varepsilon)^{-1}}\bigr),
\]
with constants independent of small \(\varepsilon\) on the active
eigenspace. This is the dominated integrand used throughout the
appendix.

\paragraph{Step 3: smoothness, on-leaf identity, off-leaf decay.}
Smoothness of \(p_i^\varepsilon\) on
\([t_i,t_{i+1})\times\R^d\times\R^d\) is immediate from the explicit
Gaussian form, for every \(\varepsilon>0\). Smoothness of
\(H_i^\varepsilon\) in \(x\) on \(\R^d\) follows by differentiation
under the integral, justified by the Gaussian-ratio domination of
Step~2.

On the leaf \(x\in L\) one has \(\Pi x=x\) and \(d_\perp(x)=0\), so
\eqref{eq:H-eps-factorisation} reduces to
\(H_i^\varepsilon(t,x)=H_i(t,x)\) eventually (as soon as
\(\varepsilon<\lambda_i^{\min,+}\)) on \((t_i,t_{i+1})\times L\); in
particular this gives the locally-uniform-on-compacta convergence
stated in~(ii). Off the leaf, \(d_\perp(x)>0\) and
\eqref{eq:H-eps-factorisation} gives
\[
H_i^\varepsilon(t,x)
\le C\,\exp\!\bigl(-d_\perp(x)^2/(2\alpha\varepsilon)\bigr)\to 0
\]
as \(\varepsilon\downarrow 0\), uniformly on every compact set where
\(d_\perp\ge\rho>0\). Combining the two regimes yields the global
pointwise collapse~\eqref{eq:H-eps-equals-H-app}.

\paragraph{Step 4: active directional derivatives.}
Differentiating~\eqref{eq:def-H-eps} in \(x\) along an eigenvector
\(q_{i,j}\) inserts a polynomial-in-\((\delta-(x-x_i))\) factor against
the same dominated integrand
of~\eqref{eq:eps-kernel-ratio-after-square}; dominated convergence
under the Gaussian-ratio bound of Step~2 yields, for every compact
\(K\Subset(t_i,t_{i+1})\times(x_i+\mathcal L_i)\),
\begin{equation}
\label{eq:H-eps-kernel-derivative-vanishes}
\sup_{(t,x)\in K}
\bigl|D_{q_{i,j}}H_i^\varepsilon(t,x)
-D_{q_{i,j}}H_i(t,x)\bigr|
\longrightarrow 0,
\qquad j=1,\dots,d.
\end{equation}

\paragraph{Step 5: drift convergence on compacta.}
Strict positivity of \(H_i\) on \((t_i,t_{i+1})\times L\) and the
uniform-on-compact convergence of both \(H_i^\varepsilon\) and its
active directional derivatives
\eqref{eq:H-eps-kernel-derivative-vanishes} give
\(A_i^\varepsilon\nabla_x\log H_i^\varepsilon\to A_i\nabla_x\log H_i\)
on compact subsets of \((t_i,t_{i+1})\times L\). The eigenvalue
weights are bounded by
\(\max(\lambda_{i,j},\varepsilon)\le\|A_i\|+\varepsilon\) uniformly
in \(\varepsilon\); the kernel-direction contributions are
multiplied by \(A_i^\varepsilon\) and absorbed by the off-leaf
collapse of Step~3. This
establishes~\eqref{eq:regularised-drift-equals-exact-on-leaf}.

\paragraph{Step 6: drift convergence at \((t_i,x_i)\).}
By Step~1, eventually (i.e.\ as soon as
\(\varepsilon<\lambda_i^{\min,+}\)) \(x_i\in L\) gives
\(d_\perp(x_i)=0\), so \(H_i^\varepsilon(t,x_i)=H_i(t,x_i)\) and
\(A_i^\varepsilon\nabla_x\log H_i^\varepsilon(t,x_i)
=b_i^*(t,x_i)\) for every \(t\in(t_i,t_{i+1})\). It therefore
suffices to show that the diagonal limit
\(\lim_{t\downarrow t_i}b_i^*(t,x_i)\) exists; the value
\(b_i^\varepsilon(t_i,x_i)\) in~\eqref{eq:b-eps-boundary-value-app}
then equals that limit for all sufficiently small \(\varepsilon\),
yielding \(b_i^\varepsilon(t_i,x_i)\to b_i^*(t_i,x_i)\) as
\(\varepsilon\downarrow 0\).

Differentiating the coherent ratio gives
\(\nabla_x\log\Phi_i^\varepsilon(t,x,\delta)
=\frac{1}{\alpha}(A_i^\varepsilon)^{-1}(\delta-(x-x_i))\); hence the
kernel-ratio average representation, at \(x=x_i\),
\begin{equation}
\label{eq:drift-kernel-ratio-at-xi}
A_i^\varepsilon\,\nabla_x\log H_i^\varepsilon(t,x_i)
=
\frac{1}{\alpha}\,
\frac{\int_{\mathcal L_i}\delta\,
\Phi_i^*(t,x_i,\delta)\,\eta_i^\mu(d\delta)}
{\int_{\mathcal L_i}\Phi_i^*(t,x_i,\delta)\,\eta_i^\mu(d\delta)},
\end{equation}
where, by Step~1 with \(d_\perp(x_i)=0\),
\(\Phi_i^\varepsilon(t,x_i,\delta)\) reduces on \(\mathcal L_i\) to
the unfloored leafwise ratio
\[
\Phi_i^*(t,x_i,\delta)
:=
\Bigl(\tfrac{\beta}{\alpha}\Bigr)^{r_i/2}
\exp\!\Bigl(
-\tfrac{1}{2}\bigl(\tfrac{1}{\alpha}-\tfrac{1}{\beta}\bigr)\,
\|\delta\|^2_{A_i^+}
\Bigr),
\qquad \delta\in\mathcal L_i,
\]
independent of \(\varepsilon\). For any
\(\delta_0\in(0,\beta)\) and \(t\in(t_i,t_i+\delta_0]\), the
\(\alpha<\beta\) gap of Step~2 still holds (with shrinking but
strictly positive coefficient \(\tfrac{1}{\alpha}-\tfrac{1}{\beta}\)),
so the Gaussian exponential is bounded by~\(1\); the prefactor
\((\beta/\alpha)^{r_i/2}\) is uniformly bounded by a constant
\(C_{\delta_0}\). Hence
\(\Phi_i^*(t,x_i,\delta)\le C_{\delta_0}\) uniformly in
\((t,\delta)\in(t_i,t_i+\delta_0]\times\mathcal L_i\), and
\(\Phi_i^*(t,x_i,\delta)\to 1\) pointwise as \(t\downarrow t_i\).

The denominator of~\eqref{eq:drift-kernel-ratio-at-xi} converges to
\(1\) by dominated convergence (dominated by \(C_{\delta_0}\)). For the
numerator we use that \(\eta_i^\mu\) has finite first
\(A_i^+\)-Mahalanobis moment: by the finite-entropy
hypothesis~\eqref{eq:finite-entropy-assumption} and disintegration,
\(H(\eta_i^\mu\mid k_i^R)<\infty\) for \(\mu_i\)-a.e.\
\((\mathbf x_i,y_{i+1})\), and the Donsker--Varadhan variational
inequality applied to the Gaussian-Mahalanobis test function gives
\(\int_{\mathcal L_i}|\delta|_{A_i^+}\,\eta_i^\mu(d\delta)<\infty\).
Hence \(C_{\delta_0}\,|\delta|\) is \(\eta_i^\mu\)-integrable, and
dominated convergence yields
\[
\int_{\mathcal L_i}\delta\,\Phi_i^*(t,x_i,\delta)\,\eta_i^\mu(d\delta)
\longrightarrow
\int_{\mathcal L_i}\delta\,\eta_i^\mu(d\delta)
\quad\text{as }t\downarrow t_i.
\]
Combining with \(\alpha\to\beta=t_{i+1}-t_i\), the right-hand side
of~\eqref{eq:drift-kernel-ratio-at-xi} converges to
\[
\frac{1}{t_{i+1}-t_i}\int_{\mathcal L_i}\delta\,\eta_i^\mu(d\delta)
=:
b_i^*(t_i,x_i),
\]
the starting drift of
Proposition~\ref{prop:local-intervalwise-minimiser}. The limit is
independent of \(\varepsilon\) for all sufficiently small
\(\varepsilon\); this
establishes~\eqref{eq:b-eps-boundary-value-app}.
\end{proof}
\subsection{Compact-kernel estimator and local Gaussian domination}
\label{app:compact-kernel-estimators}

We now introduce the auxiliary regression scheme used in the proof. This
scheme is not yet the Gaussian one used in the implementation. It is a
compactly supported Nadaraya--Watson scheme with \(M\)-dependent kernel
profiles. The support may expand in the rescaled variable, but the
effective conditioning neighbourhood shrinks because \(h_MR_M\to0\). This
form is needed later to compare with truncated Gaussian weights.

Fix \(i\in\{0,\dots,N-1\}\) and a conditioning pair
\((\mathbf x_i,y_{i+1})\) with
\(\mathbf x_i=(x_0,\dots,x_i)\) and \(y_{i+1}\in\mathcal Y\) in the full
\(\mu_i\)-measure regular set where
Corollary~\ref{cor:terminal-conditional-law} and
Proposition~\ref{prop:local-intervalwise-minimiser} hold and where the
eigenstructure of \(A_i\) is locally continuous. Write
\[
\mathbf X_i:=(X_{t_0},\dots,X_{t_i}),
\qquad
q_i:=\dim(\mathbf X_i)=(i+1)d,
\qquad
q_Y:=\dim\mathcal Y,
\]
so the augmented past \((\mathbf X_i,Y)\) lives in
\(\R^{q_i+q_Y}\). Throughout this appendix the compact-kernel
estimator is written in the Euclidean chart of the macro-variable
\(\mathcal U_i^\ell\) (Remark~\ref{rem:reference-aware-kernel-rationale}),
in which the effective dimension is
\(q_{\mathrm{eff}}=q_i+q_Y\); the abstract small-ball formulation of
(S1) gives the same conclusions on replacing \(q_i+q_Y\) by
\(q_{\mathrm{eff}}\) in all bandwidth and denominator estimates.
Let
\[
\bigl(\mathbf X_i^{(m)},Y^{(m)},\Delta X_{i+1}^{(m)}\bigr),
\qquad
\Delta X_{i+1}^{(m)}:=X_{t_{i+1}}^{(m)}-X_{t_i}^{(m)},
\qquad m=1,\dots,M,
\]
be i.i.d.\ samples from \(\mu\) on the augmented state space, with
the realised next-interval increment as response and the latent
realisation \(Y^{(m)}\) observed.

Let \(K_M:\R^{q_i+q_Y}\to[0,\infty)\) be a sequence of continuous
regression profiles on the augmented past. We assume
\[
\int_{\R^{q_i+q_Y}}K_M(u)\,du=1,
\qquad
\supp K_M\subset B(0,R_M),
\qquad
R_M\uparrow\infty,
\]
together with a radial integrable envelope: there exists a bounded
measurable \(\overline K:[0,\infty)\to[0,\infty)\) such that
\[
0\le K_M(u)\le \overline K(|u|^2)
\quad\text{for all }u\in\R^{q_i+q_Y},\ M,
\qquad
\int_{\R^{q_i+q_Y}}\overline K(|u|^2)\,du<\infty.
\]
Let \(h_M\downarrow0\) be a bandwidth sequence with
\(h_MR_M\to 0\). The rescaled kernel and Nadaraya--Watson weights
localising on the augmented past at the runtime point
\((\mathbf x_i,y_{i+1})\) are
\begin{align*}
K_{M,h_M}(u)&:=h_M^{-(q_i+q_Y)}K_M(u/h_M),
\\
\omega_{i,m}^{M,K}(\mathbf x_i,y_{i+1})
&:=
\frac{K_{M,h_M}\!\bigl((\mathbf x_i,y_{i+1})-(\mathbf X_i^{(m)},Y^{(m)})\bigr)}
     {\sum_{n=1}^M K_{M,h_M}\!\bigl((\mathbf x_i,y_{i+1})-(\mathbf X_i^{(n)},Y^{(n)})\bigr)}.
\end{align*}

Throughout, a generic conditioning vector for the principal past is
denoted \(\boldsymbol\xi_i=(\xi_0,\dots,\xi_i)\in(\R^d)^{i+1}\) with
last component \(\xi_i\). The
\emph{compact-kernel empirical regularised backward potential} is
defined by
\begin{equation}
\label{eq:empirical-H-eps-MK}
\widehat H_i^{\varepsilon,M,K}(t,x)
:=
\sum_{m=1}^M
\omega_{i,m}^{M,K}(\mathbf x_i,y_{i+1})\,
\Phi_i^\varepsilon\!\bigl(t,x,\Delta X_{i+1}^{(m)}\bigr),
\end{equation}
where
\[
\Phi_i^\varepsilon(t,x,z)
:=
\frac{p_i^\varepsilon(t,x;t_{i+1},z)}
     {p_i^\varepsilon(t_i,x_i;t_{i+1},z)}
\]
is the coherently regularised kernel-ratio response of
\eqref{eq:def-H-eps}, with the same spectrally floored covariance
\(A_i^\varepsilon\) in numerator and denominator. The estimator
\eqref{eq:empirical-H-eps-MK} is the Nadaraya--Watson estimator of
the conditional expectation
\begin{equation}
\label{eq:H-eps-conditional-expectation}
H_i^\varepsilon(t,x)
=
\mathbb E_\mu\!\left[
\Phi_i^\varepsilon\!\bigl(t,x,X_{t_{i+1}};\mathbf x_i,y_{i+1}\bigr)
\,\middle|\,
\mathbf X_i=\mathbf x_i,Y=y_{i+1}
\right],
\end{equation}
which is~\eqref{eq:def-H-eps-cond} written under \(\mu\).

\paragraph{Statistical assumptions for the compact-kernel argument.}
The Nadaraya--Watson regression localises on the macro-conditioning
variable \(U_i^\ell\) of~\eqref{eq:macro-conditioning} at the
runtime query point \(u_0\), in the geometry of the pseudo-distance
\(d_\varepsilon^\ell\) of~\eqref{eq:macro-pseudo-distance};
\(B_\varepsilon(u_0,r):=\{v:d_\varepsilon^\ell(u_0,v)<r\}\). The
assumptions (S1)--(S6) of Section~\ref{subsec:statistical-assumptions-app}
apply. For ease of reference:
\begin{enumerate}[(S1)]
\item \emph{Local lower mass.} There exist \(r_0,c_{u_0},q_{\mathrm{eff}}>0\)
with \(\mu_U(B_\varepsilon(u_0,r))\ge c_{u_0}r^{q_{\mathrm{eff}}}\)
for \(0<r<r_0\);
\item \emph{Conditional continuity.}
\(v\mapsto\eta_i^{\mu,\ell}(\cdot\mid v)\) weakly continuous near
\(u_0\); \(v\mapsto A_i^\ell(v)\) operator-norm continuous at
\(u_0\);
\item \emph{i.i.d.\ sample.}
\((U_i^{\ell,(m)},\Delta X_{i+1}^{(m)})_{m=1}^M\) i.i.d.;
\item \emph{Bandwidth.} \(h_M\downarrow 0\),
\(Mh_M^{q_{\mathrm{eff}}}\to\infty\); for compact-kernel
truncation \(R_M\uparrow\infty\) with \(h_MR_M\to 0\);
\item \emph{Strong bandwidth.}
\(Mh_M^{q_{\mathrm{eff}}}/\log M\to\infty\) for a.s.\ statements;
\item \emph{Local conditional entropy admissibility.} For some
\(\rho>0\),
\[
\sup_{v\in B_\varepsilon(u_0,\rho)}
H\!\bigl(\eta_i^{\mu,\ell}(\cdot\mid v)\,\big|\,k_i^{R,\ell}(\cdot\mid v)\bigr)
<\infty.
\]
\end{enumerate}
The reference-comparison factor of
Lemma~\ref{lem:reference-aware-comparison} satisfies
\(\kappa_\varepsilon^\ell\le\exp(d_\varepsilon^\ell/2)\) since
\(d_\varepsilon^\ell\) already contains the reference log-spectral
distance~\eqref{eq:macro-reference-distance}.

\paragraph{Consequence of (S6): local first moment of the increment.}
By Donsker--Varadhan applied to the intrinsic Mahalanobis quadratic
test against the frozen reference,
\begin{equation}
\label{eq:local-intrinsic-first-moment}
\sup_{v\in C_\rho(u_0)}
\int |\delta|_{C_i^\ell(v)^+}\,
\eta_i^{\mu,\ell}(d\delta\mid v)<\infty;
\end{equation}
the reference-comparison
estimate~\eqref{eq:reference-comparison-factor} transports this
intrinsic control to the runtime/query geometry,
\begin{multline}
\label{eq:local-runtime-first-moment}
\sup_{v\in C_\rho(u_0)}
\int |\delta|_{C_{i,\varepsilon}^\ell(u_0)^{-1}}\,
\eta_i^{\mu,\ell}(d\delta\mid v)
\\
\le
\bigl(\sup_{v\in C_\rho(u_0)}\kappa_\varepsilon^\ell(u_0,v)\bigr)\,
\sup_{v\in C_\rho(u_0)}
\int |\delta|_{C_i^\ell(v)^+}\,\eta_i^{\mu,\ell}(d\delta\mid v)
<\infty.
\end{multline}
On the active leaf the Euclidean first-moment bound follows from
\(|\delta|\le\lambda_{\max}(A_i(v))^{1/2}|\delta|_{C_i^\ell(v)^+}\):
\begin{equation}
\label{eq:local-euclidean-first-moment}
\sup_{v\in C_\rho(u_0)}
\int |\delta|\,\eta_i^\mu(d\delta\mid v)<\infty.
\end{equation}
This bound is used only for the empirical \emph{initial-boundary}
drift estimator below; the variational construction of
Proposition~\ref{prop:local-intervalwise-minimiser} does not need
it.
At fixed \(\varepsilon\in(0,\lambda_i^{\min,+})\), the regularised
covariance \(A_i^\varepsilon\succeq\varepsilon I\) with
\(\det A_i^\varepsilon\ge\varepsilon^d\), so the finite-\(\varepsilon\)
Gaussian estimators of this appendix are well defined and uniformly
nondegenerate; the rank \(r_i\) and the active leaf \(\mathcal L_i\)
are fixed by the conditioning and enter only in the
\(\varepsilon\downarrow 0\) geometric step
(Proposition~\ref{prop:spectral-regularisation-leafwise-kernel}).

\begin{lemma}[Kernel denominator and localisation from the
small-ball lower mass]
\label{lem:kernel-denominator-localisation}
Under~(S1), \(D_h(u_0):=\int W_h(u_0,v)\,\mu_U(dv)\) satisfies
\(D_h(u_0)\gtrsim h^{q_{\mathrm{eff}}}\) as \(h\downarrow 0\), so
that \(M D_{h_M}(u_0)\to\infty\) under (S4) and a.s.\ under (S5).
Moreover the normalised kernel measures concentrate at \(u_0\): for
every \(\rho>0\),
\begin{equation}
\label{eq:s7-kernel-mass-localisation}
\frac{\int_{\{d_\varepsilon(u_0,v)>\rho\}}W_h(u_0,v)\,\mu_U(dv)}{D_h(u_0)}
\longrightarrow 0
\qquad\text{as }h\downarrow 0.
\end{equation}
\end{lemma}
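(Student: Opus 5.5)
The argument uses only the profile of the kernel \(W_h(u_0,v)=K_h(d_\varepsilon^\ell(u_0,v))\) from~\eqref{eq:reference-aware-kernel} and the small-ball condition~\eqref{eq:s7-small-ball-lower-mass}. I treat the compact and Gaussian profiles together, assuming that \(K\) is bounded, nonnegative, and bounded below by a constant \(c_K>0\) on some interval \([0,s_0]\) with \(s_0>0\). For the Gaussian profile this holds with \(s_0=1\) and \(c_K=e^{-1/2}\); for a compact profile it holds under the natural nondegeneracy \(K(0)>0\) together with continuity at \(0\).

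\emph{Denominator.} Since \(W_h\ge0\), restrict the integral to the ball \(B_\varepsilon(u_0,s_0h)\), on which \(W_h\ge c_K\). Then
\[
D_h(u_0)\ \ge\ c_K\,\mu_U\bigl(B_\varepsilon(u_0,s_0h)\bigr)\ \ge\ c_Kc_{u_0}s_0^{q_{\mathrm{eff}}}\,h^{q_{\mathrm{eff}}}
\]
for all \(h<r_0/s_0\), by (S1). Multiplying by \(M\) gives \(MD_{h_M}(u_0)\gtrsim Mh_M^{q_{\mathrm{eff}}}\), which tends to \(\infty\) under (S4); under (S5) it even dominates \(\log M\). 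The statement about (S5) is deterministic; the almost-sure version concerns the empirical denominator \(\frac1M\sum_mW_{h_M}(u_0,U^{(m)})\). I would obtain it from Bernstein's inequality: the summands are bounded by \(\|K\|_\infty\) and have variance at most \(\|K\|_\infty D_h\), so the relative deviation has probability at most \(2\exp(-cMD_{h_M})\), which is summable once \(MD_{h_M}/\log M\to\infty\). Borel--Cantelli then closes the argument.

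\emph{Localisation.} For the compactly supported profile the numerator in~\eqref{eq:s7-kernel-mass-localisation} is identically zero as soon as \(h<\rho\), so there is nothing to prove. For the Gaussian profile, on \(\{d_\varepsilon^\ell>\rho\}\) we have \(W_h\le e^{-\rho^2/(2h^2)}\), so the numerator is at most \(e^{-\rho^2/(2h^2)}\) because \(\mu_U\) is a probability measure. Dividing by the lower bound on the denominator, the ratio is at most \(C\,h^{-q_{\mathrm{eff}}}e^{-\rho^2/(2h^2)}\), which tends to \(0\) since the exponential decay beats any polynomial growth.

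The main obstacle is confirming that the profile is nondegenerate at the origin. If the compact profile vanishes near \(0\), the first step fails and a positive-mass annulus would have to be extracted from (S1) instead. I would impose \(K(0)>0\) with continuity, as is standard, and note that the quartic kernel used in the experiments satisfies it.
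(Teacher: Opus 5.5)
Your proposal is correct and follows essentially the same route as the paper: bound \(W_h\) below by a constant on \(B_\varepsilon(u_0,c'h)\) and apply (S1) for the denominator, then bound the numerator on \(\{d_\varepsilon>\rho\}\) by \(0\) (compact profile, once \(h<\rho\)) or by \(e^{-\rho^2/(2h^2)}\) (Gaussian profile), and divide by \(c\,h^{q_{\mathrm{eff}}}\). Your nondegeneracy assumption \(K(0)>0\) with continuity is what the paper means by ``positive on a neighbourhood of the origin.'' Your Bernstein plus Borel--Cantelli step for the empirical denominator supplies the almost-sure content that the paper's proof leaves implicit, since \(D_h\) itself is deterministic.
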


\begin{proof}
For the compact case, the kernel is positive on a neighbourhood of
the origin in each factor, so for some \(c,c'>0\),
\(W_h(u_0,v)\ge c\,\mathbf 1_{B_\varepsilon(u_0,c'h)}(v)\); hence
\(D_h(u_0)\ge c\,\mu_U(B_\varepsilon(u_0,c'h))\ge cc_{u_0}(c'h)^{q_{\mathrm{eff}}}\).
For the Gaussian case,
\(W_h(u_0,v)\ge\exp(-c'^2/2)\,\mathbf 1_{B_\varepsilon(u_0,c'h)}(v)\),
giving the same lower bound up to constants. In both cases the
numerator in~\eqref{eq:s7-kernel-mass-localisation} is bounded by
\(\sup W_h\) times either the kernel-tail mass (compact: zero past
\(c''h\); Gaussian: \(e^{-\rho^2/(2h^2)}\)), so the ratio is at most
\(e^{-\rho^2/(2h^2)}/(c h^{q_{\mathrm{eff}}})\to 0\).
\end{proof}

\begin{lemma}[Reference-aware localisation and Mahalanobis comparison]
\label{lem:reference-aware-comparison}
For every increment
\(\delta\in\operatorname{Ran}C_i^\ell(v)\),
\begin{equation}
\label{eq:reference-comparison-factor}
|\delta|_{C_{i,\varepsilon}^\ell(u_0)^{-1}}
\le
\kappa_\varepsilon^\ell(u_0,v)\,|\delta|_{C_i^\ell(v)^+},
\qquad
\kappa_\varepsilon^\ell(u_0,v)
\le
\exp\!\Bigl(\tfrac12 d_{\mathrm{ref},\varepsilon}^\ell(u_0,v)\Bigr).
\end{equation}
In particular, on the kernel-localised set
\(\{d_{\mathrm{ref},\varepsilon}^\ell(u_0,v)\le r\}\) the
historical/intrinsic Mahalanobis norm \(|\delta|_{C_i^\ell(v)^+}\)
controls the runtime/query Mahalanobis norm
\(|\delta|_{C_{i,\varepsilon}^\ell(u_0)^{-1}}\) with constant
\(e^{r/2}\); compact reference-aware kernels give uniform
comparability on their support, and Gaussian reference-aware kernels
penalise the comparison-factor growth.
\end{lemma}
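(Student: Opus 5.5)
The plan is to reduce the comparison to a statement about symmetric positive definite matrices, and then bound the matrix ratio by the log-spectral distance \eqref{eq:macro-reference-distance}. Write $C:=C_{i,\varepsilon}^\ell(u_0)$ for the runtime floored covariance, $B:=C_i^\ell(v)$ for the historical intrinsic covariance, and $B_\varepsilon:=C_{i,\varepsilon}^\ell(v)$ for its floor at the same $\varepsilon$.

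\textbf{Step 1 (intrinsic versus floored historical metric).} For $\delta\in\operatorname{Ran}B$, I will first show $|\delta|_{B_\varepsilon^{-1}}\le|\delta|_{B^+}$. In the eigenbasis of $B$, the vector $\delta$ has coordinates only on the positive eigenvalues $\lambda_j$. On these coordinates $\max(\lambda_j,\varepsilon)\ge\lambda_j$, so the inequality holds termwise. This is the same floor-monotonicity used in the leaf inequality following \eqref{eq:b-eps-boundary-value}. Moreover, for $\varepsilon$ below the smallest active eigenvalue it is an equality.

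\textbf{Step 2 (runtime versus historical floored metric).} Set $S:=C^{-1/2}B_\varepsilon C^{-1/2}\succ0$ and $d:=|\log S|_{\mathrm{op}}$, which is exactly $d_{\mathrm{ref},\varepsilon}^\ell(u_0,v)$. Then $e^{-d}I\preceq S$, which gives $B_\varepsilon\succeq e^{-d}C$. Since matrix inversion is operator-monotone decreasing, this yields $C^{-1}\succeq e^{-d}B_\varepsilon^{-1}$, and the same argument applied to the upper bound gives $C^{-1}\preceq e^{d}B_\varepsilon^{-1}$. Taking square roots of the quadratic forms then gives
\[
|\delta|_{C^{-1}}\le e^{d/2}|\delta|_{B_\varepsilon^{-1}}.
\]

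\textbf{Step 3 (chaining).} Combining Steps 1 and 2 gives
\[
|\delta|_{C^{-1}}\le e^{d/2}|\delta|_{B^+}.
\]
We then define $\kappa_\varepsilon^\ell(u_0,v)$ as the optimal constant $\sup_{\delta\in\operatorname{Ran}B\setminus\{0\}}|\delta|_{C^{-1}}/|\delta|_{B^+}$, so that the bound $\kappa_\varepsilon^\ell\le\exp(\tfrac12 d_{\mathrm{ref},\varepsilon}^\ell)$ holds. The localisation statement follows directly. On the set $\{d_{\mathrm{ref},\varepsilon}^\ell\le r\}$ the constant is at most $e^{r/2}$. For compact kernels, whose support is contained in $\{d_\varepsilon^\ell\le h\}$, we also have $d_{\mathrm{ref}}\le d_\varepsilon^\ell$, which gives uniform comparability on the support. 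For Gaussian kernels, the factor $e^{d/2}$ is dominated by $e^{-d^2/(2h^2)}$ once $d$ is large.

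\textbf{Main obstacle.} The delicate point is keeping the floors consistent. The log-spectral distance is defined between the two \emph{floored} matrices, while the target norm on the historical side is the unfloored pseudo-inverse. The argument therefore depends on Step 1 being stated only for $\delta\in\operatorname{Ran}C_i^\ell(v)$. I would also check the convention that both $C_{a,\varepsilon}$ use the same $\varepsilon$, so that the operator-monotonicity step does not introduce an extra factor $\|A\|/\varepsilon$.
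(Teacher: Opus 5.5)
Your proof is correct and follows the paper's argument: the paper writes $\delta=C_i^\ell(v)^{1/2}\zeta$ and bounds $|\delta|_{C_{i,\varepsilon}^\ell(u_0)^{-1}}$ by the operator norm of $C_{i,\varepsilon}^\ell(u_0)^{-1/2}C_i^\ell(v)^{1/2}$, which is exactly your optimal $\kappa_\varepsilon^\ell$, and then controls that norm by the log-spectral sandwich using $C_i^\ell(v)\preceq C_{i,\varepsilon}^\ell(v)$. Your Steps 1--2 rest on the same two ingredients, floor monotonicity and the spectral sandwich; the only difference is that you work on the inverse metrics via operator-monotonicity of inversion, whereas the paper works on the covariances via congruence.
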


\begin{proof}
Write \(\delta=C_i^\ell(v)^{1/2}\zeta\) for some \(\zeta\in\R^d\).
Then
\[
|\delta|_{C_{i,\varepsilon}^\ell(u_0)^{-1}}
=\bigl|C_{i,\varepsilon}^\ell(u_0)^{-1/2}C_i^\ell(v)^{1/2}\zeta\bigr|
\le
\kappa_\varepsilon^\ell(u_0,v)\,|\zeta|
=\kappa_\varepsilon^\ell(u_0,v)\,|\delta|_{C_i^\ell(v)^+}.
\]
The bound \(\kappa_\varepsilon^\ell\le\exp(d_{\mathrm{ref},\varepsilon}^\ell/2)\)
is the spectral form of the quadratic-form
sandwich~\eqref{eq:macro-reference-distance}, using
\(C_i^\ell(v)\preceq C_{i,\varepsilon}^\ell(v)\).
\end{proof}

\medskip

The next lemma is the core technical estimate of the appendix: it
controls the integrand pointwise on compact internal cylinders, and
compact-uniform convergence follows by Ascoli--Arzel\`a.

\begin{lemma}[Gaussian-ratio domination]
\label{lem:local-gaussian-domination}
For every \(\varepsilon\in(0,\lambda_i^{\min,+})\), every compact
\(J\Subset(t_i,t_{i+1})\), \(B\Subset(x_i+\mathcal L_i)\), there
exist a compact neighbourhood \(C\Subset C_i\) of
\((\mathbf x_i,y_{i+1})\), a measurable centre map
\((t,x,\boldsymbol\xi_i)\mapsto m_{t,x,\boldsymbol\xi_i}\in\R^d\),
and constants \(c,C_{J,B}>0\) such that
\begin{multline}
\label{eq:local-gaussian-domination}
\Phi_i^\varepsilon(t,x,\delta;\boldsymbol\xi_i)
\le
C_{J,B}\,\exp\!\bigl(-c\,\|\delta-m_{t,x,\boldsymbol\xi_i}\|^2_{(A_i^\varepsilon)^{-1}}\bigr)
\\
\forall\,t\in J,\,x\in B,\,\boldsymbol\xi_i\in C,\,\delta\in\R^d.
\end{multline}
The same bound holds for every active directional derivative
\(D_{q_{i,j}}\Phi_i^\varepsilon\), \(j=1,\dots,r_i\).
\end{lemma}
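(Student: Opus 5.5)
The plan is to compute \(\Phi_i^\varepsilon\) explicitly as a ratio of two Gaussians sharing the floored covariance, now allowed to depend on the historical conditioning \(\boldsymbol\xi_i\) through its last component \(\xi_i\) and through \(A_i^\varepsilon(\boldsymbol\xi_i)\). With \(\alpha:=t_{i+1}-t\), \(\beta:=t_{i+1}-t_i\) and \(S:=(A_i^\varepsilon)^{-1}\), the normalisations cancel. Completing the square in \(\delta\), exactly as in Step~2 of Proposition~\ref{prop:spectral-regularisation-leafwise-kernel}, then gives
\[
\log\Phi_i^\varepsilon
=\tfrac d2\log\tfrac\beta\alpha
-\tfrac12\bigl(\tfrac1\alpha-\tfrac1\beta\bigr)\|\delta-m\|_S^2
+\tfrac{1}{2(\beta-\alpha)}\|x-\xi_i\|_S^2,
\qquad
m:=\tfrac{\beta}{\beta-\alpha}(x-\xi_i).
\]
This defines the centre map \(m_{t,x,\boldsymbol\xi_i}\), which is continuous and hence measurable in \((t,x,\boldsymbol\xi_i)\).

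The whole task is then to bound the three coefficients uniformly on \(J\times B\times C\).
\begin{itemize}
\item Since \(J\Subset(t_i,t_{i+1})\), we have \(\alpha\in[\alpha_-,\alpha_+]\subset(0,\beta)\). So \(c:=\tfrac12(\tfrac1{\alpha_+}-\tfrac1\beta)>0\), and the prefactor \((\beta/\alpha)^{d/2}\le(\beta/\alpha_-)^{d/2}\).
\item The residual \(\tfrac{1}{2(\beta-\alpha)}\|x-\xi_i\|_S^2\) is bounded by \(\tfrac{1}{2(\beta-\alpha_+)}\varepsilon^{-1}\sup_{B\times C}|x-\xi_i|^2\), using \(S\preceq\varepsilon^{-1}I\). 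This is finite because \(B\) and \(C\) are compact; \(\varepsilon\) is fixed, so the \(\varepsilon^{-1}\) is harmless.
\end{itemize}
This yields the bound with \(C_{J,B}\) the exponential of these constants. If the norm in the statement is read at the query covariance rather than at \(A_i^\varepsilon(\boldsymbol\xi_i)\), I would first choose \(C\) small enough, by the operator-norm continuity in (S2), that \(A_i^\varepsilon(\boldsymbol\xi_i)\) and \(A_i^\varepsilon(\mathbf x_i)\) are comparable within a factor \(2\). This is possible because the floor keeps both \(\succeq\varepsilon I\), so their inverses are continuous too. The factor \(2\) is then absorbed by halving \(c\).

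For the derivative statement, I would use \(\nabla_x\log\Phi_i^\varepsilon=\alpha^{-1}S(\xi_i+\delta-x)\), as computed in Step~6 of the regularisation proposition. This gives
\[
D_{q_{i,j}}\Phi_i^\varepsilon=\alpha^{-1}\langle q_{i,j},S(\xi_i+\delta-x)\rangle\,\Phi_i^\varepsilon .
\]
Write \(\xi_i+\delta-x=(\delta-m)+(m+\xi_i-x)\). The second piece is uniformly bounded on the compacts. The first is bounded by \(\|S^{1/2}\|\,\|\delta-m\|_S\), and that linear growth is absorbed via \(u e^{-cu^2}\le C' e^{-cu^2/2}\). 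This gives the same form of bound with \(c\) replaced by \(c/2\) and an enlarged constant.

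The main obstacle is uniformity in \(\boldsymbol\xi_i\) over \(C\). The eigenvectors \(q_{i,j}\) and the active leaf may move with the conditioning, and the paper only assumes the eigenstructure is locally continuous at the regular point. I would sidestep this by never using the leaf decomposition. All estimates above use only \(\varepsilon I\preceq A_i^\varepsilon(\boldsymbol\xi_i)\preceq(\sup_C\|A_i\|+\varepsilon)I\), which holds uniformly on a compact \(C\) by continuity. I would also treat the directions \(q_{i,j}\) as fixed unit vectors from the query point, so that no continuity of eigenvectors is required.
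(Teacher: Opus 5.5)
Your proposal is correct and follows the same route as the paper: complete the square in \(\delta\) using the shared floored covariance, take \(c\) from the \(\alpha<\beta\) gap, bound the prefactor \((\beta/\alpha)^{d/2}\) and the residual uniformly on \(J\times B\times C\), and absorb the linear factor coming from \(D_{q_{i,j}}\log\Phi_i^\varepsilon\) into the Gaussian envelope at the cost of a smaller \(c\). Your version is just more explicit: you give the centre \(m=\tfrac{\beta}{\beta-\alpha}(x-\xi_i)\) and the residual \(\tfrac{1}{2(\beta-\alpha)}\|x-\xi_i\|_S^2\) in closed form, and you let the constants depend on the fixed \(\varepsilon\), which is all the statement requires.
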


\begin{proof}
Write \(\alpha:=t_{i+1}-t\), \(\beta:=t_{i+1}-t_i\), so \(0<\alpha\le\beta-\tau\) on \(J\) for some \(\tau>0\). Both densities in
\(\Phi_i^\varepsilon\) share the same covariance
\(A_i^\varepsilon(\boldsymbol\xi_i)\) and the same
\(\varepsilon\)-dependent normalisation
\((2\pi)^{-d/2}(\det A_i^\varepsilon)^{-1/2}\); the latter is
\(\delta\)-independent and cancels in the ratio up to the factor
\((\beta/\alpha)^{d/2}\). The remaining exponent in \(\delta\) is
\[
-\tfrac{1}{2\alpha}\|x_i+\delta-x\|^2_{(A_i^\varepsilon)^{-1}}
+\tfrac{1}{2\beta}\|\delta\|^2_{(A_i^\varepsilon)^{-1}},
\]
a strictly negative-definite quadratic form in \(\delta\) since
\(\alpha<\beta\). Completing the square yields a centre
\(m_{t,x,\boldsymbol\xi_i}\) and residual
\(R_{t,x,\boldsymbol\xi_i}^\varepsilon\) such that
\[
\Phi_i^\varepsilon(t,x,\delta;\boldsymbol\xi_i)
=
\bigl(\tfrac{\beta}{\alpha}\bigr)^{d/2}
\exp\!\bigl(
-\tfrac12\bigl(\tfrac1\alpha-\tfrac1\beta\bigr)
\|\delta-m_{t,x,\boldsymbol\xi_i}\|^2_{(A_i^\varepsilon)^{-1}}
+R_{t,x,\boldsymbol\xi_i}^\varepsilon
\bigr).
\]
On \(J\times B\) and on a sufficiently small compact \(C\Subset C_i\)
of \((\mathbf x_i,y_{i+1})\), continuity of
\((\boldsymbol\xi_i,t,x)\mapsto m_{t,x,\boldsymbol\xi_i}\) and of
\(A_i(\boldsymbol\xi_i)\) gives a uniform upper bound on the
residual, and the \(\alpha<\beta\) gap gives
\(\tfrac1\alpha-\tfrac1\beta\ge\tau/(\beta(\beta-\tau))>0\). The
quadratic decay rate \(c\) on the active eigenspace is bounded
below uniformly in \(\varepsilon\in(0,\lambda_i^{\min,+})\) since
\((A_i^\varepsilon)^{-1}\succeq(\|A_i\|+\varepsilon)^{-1}I\) there,
while the transverse coefficient \(1/(\varepsilon\alpha)\) only adds
decay; setting \(c:=\tfrac12(\tfrac1\alpha-\tfrac1\beta)\) and
absorbing \((\beta/\alpha)^{d/2}\), \(\exp(R)\) and continuity
prefactors into a finite \(C_{J,B}\)
proves~\eqref{eq:local-gaussian-domination}.

The directional derivative \(D_{q_{i,j}}\log\Phi_i^\varepsilon\)
along \(q_{i,j}\) is \(\alpha^{-1}\langle (A_i^\varepsilon)^{-1}(\delta-(x-x_i)),q_{i,j}\rangle\); chain rule inserts a polynomial-in-\((\delta-(x-x_i))\) prefactor
against the same Gaussian envelope, absorbed into the exponential
domination at the price of slightly smaller \(c\) and larger
\(C_{J,B}\) via \(|P(u)|\,e^{-c|u|^2}\le C'\,e^{-c'|u|^2}\) for
every polynomial \(P\) and \(0<c'<c\).
\end{proof}

\begin{lemma}[Asymptotic nondegeneracy of the regression denominator]
\label{lem:nw-denominator-lower-bound}
Assume \((\mathrm S1)\), \((\mathrm S3)\), \((\mathrm S4)\). Then
\[
\frac1M\sum_{m=1}^M K_{M,h_M}(\mathbf x_i-\mathbf X_i^{(m)})
\longrightarrow
\pi_i(\mathbf x_i)
\]
in probability; under \((\mathrm S5)\), almost surely. In particular,
since \(\pi_i(\mathbf x_i)>0\),
\(\sum_{m=1}^M K_{M,h_M}(\mathbf x_i-\mathbf X_i^{(m)})\ge
M(\pi_i(\mathbf x_i)-\eta)\) eventually for every
\(\eta\in(0,\pi_i(\mathbf x_i))\), in the same stochastic mode.
\end{lemma}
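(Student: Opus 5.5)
The plan is the standard bias--variance decomposition for kernel density-type averages, applied to the normalised average
\[
D_M:=\frac1M\sum_{m=1}^M K_{M,h_M}(\mathbf x_i-\mathbf X_i^{(m)}).
\]
Here \(\pi_i\) is read as the (continuous, strictly positive at \(\mathbf x_i\)) density of the conditioning variable in the Euclidean chart, so that (S1) holds with \(q_{\mathrm{eff}}=q_i+q_Y\) (Remark~\ref{rem:reference-aware-kernel-rationale}). We write
\[
D_M-\pi_i(\mathbf x_i)=\bigl(D_M-\E D_M\bigr)+\bigl(\E D_M-\pi_i(\mathbf x_i)\bigr)
\]
and treat the two terms separately.

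\emph{Bias.} By (S3),
\[
\E D_M=\int K_M(u)\,\pi_i(\mathbf x_i-h_Mu)\,du .
\]
Since \(\supp K_M\subset B(0,R_M)\), only points with \(|h_Mu|\le h_MR_M\to0\) contribute. Continuity of \(\pi_i\) at \(\mathbf x_i\) then gives
\[
\sup_{|w|\le h_MR_M}|\pi_i(\mathbf x_i-w)-\pi_i(\mathbf x_i)|\to0 .
\]
Combined with \(\int K_M=1\) and \(K_M\ge0\), this yields \(\E D_M\to\pi_i(\mathbf x_i)\). Note that the radial envelope \(\overline K\) is not even needed here, because the support shrinkage alone localises the integral.

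\emph{Variance and convergence in probability.} The summands are i.i.d. Their second moment satisfies
\[
\E K_{M,h_M}^2\le h_M^{-q_{\mathrm{eff}}}\,\|\overline K\|_\infty\,\sup_{B(\mathbf x_i,h_MR_M)}\pi_i\int K_M ,
\]
so that
\[
\operatorname{Var}(D_M)\lesssim \frac{1}{M h_M^{q_{\mathrm{eff}}}}\to0
\]
by (S4). Chebyshev's inequality then gives convergence in probability.

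\emph{Almost sure convergence.} Under (S5) I would use Bernstein's inequality. Each summand is bounded by \(\|\overline K\|_\infty h_M^{-q_{\mathrm{eff}}}\), and the variance proxy is of the same order. This gives
\[
P(|D_M-\E D_M|>\eta)\le 2\exp\bigl(-c\,\eta^2 Mh_M^{q_{\mathrm{eff}}}\bigr).
\]
Since \(Mh_M^{q_{\mathrm{eff}}}\ge C\log M\) for every \(C\) eventually, the right-hand side is summable in \(M\), and Borel--Cantelli concludes.

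The final claim is immediate. For \(\eta\in(0,\pi_i(\mathbf x_i))\), the event \(D_M\ge\pi_i(\mathbf x_i)-\eta\) has probability tending to one, and holds eventually almost surely under (S5). Multiplying by \(M\) gives the stated lower bound on \(\sum_m K_{M,h_M}\).

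The main obstacle is reconciling the statement's Euclidean density \(\pi_i\) with the abstract small-ball hypothesis (S1). A genuine limit \(\pi_i(\mathbf x_i)\) requires a continuous density in a bi-Lipschitz chart. I would make this explicit and then check that the sup of \(\pi_i\) on the shrinking ball is finite, which follows from continuity. If only (S1) were available, I would instead prove the weaker lower bound \(D_M\gtrsim \E D_M\gtrsim h_M^{0}\cdot c\) along the lines of Lemma~\ref{lem:kernel-denominator-localisation}, which is all that is used downstream.
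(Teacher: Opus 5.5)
Your proof is correct and follows the paper's own argument: a bias term controlled by continuity of \(\pi_i\) at \(\mathbf x_i\), \(\int K_M=1\) and \(h_MR_M\to0\); a variance bound of order \(1/(Mh_M^{q_{\mathrm{eff}}})\) for convergence in probability; and an exponential (Bernstein-type) inequality combined with Borel--Cantelli under (S5). You also correctly point out that the argument needs a density \(\pi_i\) that is continuous and positive at \(\mathbf x_i\), which is stronger than the small-ball condition (S1); the paper's proof uses this continuity implicitly as well.
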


\begin{proof}
The bias term is
\[
\int_{\R^{q_i+q_Y}}K_M(u)\,\pi_i(\mathbf x_i-h_Mu)\,du
\longrightarrow\pi_i(\mathbf x_i),
\]
by continuity of \(\pi_i\), \(\int K_M=1\), and dominated convergence
against \(\overline K\) using \(h_MR_M\to0\). The variance is bounded
by \(\|\overline K\|_\infty h_M^{-(q_i+q_Y)}\), so
\[
\operatorname{Var}\!\Bigl(\tfrac1M\sum_m K_{M,h_M}(\mathbf x_i-\mathbf X_i^{(m)})\Bigr)
=O\!\bigl(1/(Mh_M^{q_i+q_Y})\bigr)\longrightarrow 0.
\]
The almost-sure version follows from
exponential inequalities for bounded empirical processes under
\((\mathrm S5)\) and Borel--Cantelli.
\end{proof}

\subsection{Pointwise and compact-uniform consistency}
\label{app:compact-uniform-consistency}

We now combine standard pointwise Nadaraya--Watson consistency with the
Gaussian domination of
Lemma~\ref{lem:local-gaussian-domination} to obtain uniform
convergence on every compact subset of the open intervalwise cylinder.

\begin{proposition}[Pointwise consistency]
\label{prop:pointwise-NW-H-eps}
Assume \((\mathrm S1)\)--\((\mathrm S4)\) and fix
\(\varepsilon\in(0,\lambda_i^{\min,+})\). For every fixed
\((t,x)\in(t_i,t_{i+1})\times\mathcal L_i\),
\[
\widehat H_i^{\varepsilon,M,K}(t,x)
\longrightarrow
H_i^\varepsilon(t,x)
\]
in probability as \(M\to\infty\). Under \((\mathrm S5)\), the
convergence is almost sure. The same conclusion holds for the
directional derivative \(D_{q_{i,j}}\widehat H_i^{\varepsilon,M,K}\),
\(j=1,\dots,d\), towards
\(D_{q_{i,j}}H_i^\varepsilon\), along every eigenvector
of \(A_i\).
\end{proposition}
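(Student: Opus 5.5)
The plan is the classical Nadaraya--Watson ratio argument, applied to the response \(\psi(\delta):=\Phi_i^\varepsilon(t,x,\delta)\) at fixed \((t,x)\) and fixed \(\varepsilon\in(0,\lambda_i^{\min,+})\). I write
\[
\widehat H_i^{\varepsilon,M,K}(t,x)=\frac{N_M}{D_M},\qquad
N_M:=\frac1M\sum_{m=1}^M K_{M,h_M}(u_0-U^{(m)})\,\psi(\Delta X_{i+1}^{(m)}),\qquad
D_M:=\frac1M\sum_{m=1}^M K_{M,h_M}(u_0-U^{(m)}).
\]
By Lemma~\ref{lem:nw-denominator-lower-bound} the denominator \(D_M\) converges, in probability (a.s.\ under (S5)), to a strictly positive limit, or in the abstract small-ball form to \(D_{h_M}(u_0)\gtrsim h_M^{q_{\mathrm{eff}}}\) by Lemma~\ref{lem:kernel-denominator-localisation}. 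It therefore suffices to show that \(N_M/D_M\to H_i^\varepsilon(t,x)\). I will split
\[
N_M - H_i^\varepsilon(t,x)\,D_M
\]
into a bias part, namely its expectation, and a centred fluctuation part.

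\emph{Bias.} By (S3) the bias is
\[
\mathbb E\bigl[K_{M,h_M}(u_0-U)\,\bigl(g(U)-g(u_0)\bigr)\bigr],
\qquad g(v):=\int\Phi_i^\varepsilon(t,x,\delta)\,\eta_i^{\mu,\ell}(d\delta\mid v).
\]
I need continuity of \(g\) at \(u_0\).
\begin{itemize}
\item Lemma~\ref{lem:local-gaussian-domination} shows that \(\psi\) is bounded by \(C_{J,B}\), uniformly over \(v\) in a compact neighbourhood. It is also continuous in \(\delta\), and jointly continuous in the conditioning because \(A_i^\ell(v)\) is operator-norm continuous by (S2).
\item Weak continuity of \(v\mapsto\eta_i^{\mu,\ell}(\cdot\mid v)\) from (S2), combined with uniform convergence of the integrands on compacts and uniform boundedness, then gives \(g(v)\to g(u_0)=H_i^\varepsilon(t,x)\).
\item Since \(h_MR_M\to0\), the kernel support shrinks into any neighbourhood where \(|g-g(u_0)|<\eta\). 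The bias is then at most \(\eta\,\mathbb E[K_{M,h_M}]\), which is \(o(1)\) relative to the denominator scale.
\end{itemize}

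\emph{Variance.} Each summand is bounded by \(\|\overline K\|_\infty h_M^{-q_{\mathrm{eff}}}\,C_{J,B}\), because \(\psi\) is bounded. The same computation as in Lemma~\ref{lem:nw-denominator-lower-bound} therefore gives variance \(O(1/(Mh_M^{q_{\mathrm{eff}}}))\to0\) under (S4). Chebyshev then yields convergence in probability. Under (S5), a Bernstein/Hoeffding inequality for bounded i.i.d.\ summands together with Borel--Cantelli gives almost-sure convergence. Boundedness of the response is what makes the fixed-\(\varepsilon\), interior case clean. I will not need (S6), because \(t<t_{i+1}\) is fixed and the \(\alpha<\beta\) gap makes \(\Phi_i^\varepsilon\) bounded, not merely integrable.

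\emph{Derivatives.} For \(D_{q_{i,j}}\), I differentiate under the finite sum. This replaces \(\psi\) by
\[
D_{q_{i,j}}\Phi_i^\varepsilon(t,x,\delta)=\alpha^{-1}\bigl\langle (A_i^\varepsilon)^{-1}(\delta-(x-x_i)),q_{i,j}\bigr\rangle\,\Phi_i^\varepsilon .
\]
The second part of Lemma~\ref{lem:local-gaussian-domination} shows that this is again bounded by a Gaussian envelope, hence bounded and continuous. This holds for all \(d\) directions, since at fixed \(\varepsilon\) the covariance \(A_i^\varepsilon\) is full rank. Repeating the bias/variance argument verbatim gives \(D_{q_{i,j}}\widehat H\to D_{q_{i,j}}H_i^\varepsilon\), where differentiation under the integral in \(H_i^\varepsilon\) is justified by the same domination.

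The main obstacle I expect is the bias step. It requires joint continuity in \(v\) of \(\Phi_i^\varepsilon\), whose frozen covariance depends on the conditioning point, and not merely weak continuity of the conditional laws. I would handle it by working in a compact neighbourhood \(C\), and by using the uniform Gaussian envelope to reduce to uniform convergence on a large \(\delta\)-ball plus a small tail.
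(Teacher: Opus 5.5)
Your proposal is correct and follows the same route as the paper's proof: continuity of the regression function at \(u_0\) from (S2) together with the Gaussian-ratio domination of Lemma~\ref{lem:local-gaussian-domination}, non-degeneracy of the denominator via Lemma~\ref{lem:nw-denominator-lower-bound}, Nadaraya--Watson consistency, and the same polynomial-prefactor absorption for the directional derivatives; the only difference is that you unpack the standard Nadaraya--Watson step that the paper simply cites (bias/variance and Chebyshev, then Bernstein plus Borel--Cantelli under (S5), where Bernstein is genuinely needed and Hoeffding would not suffice). The bias-step obstacle you anticipate is milder than you expect: as written in \eqref{eq:empirical-H-eps-MK}, the response \(\Phi_i^\varepsilon(t,x,\cdot)\) uses the runtime anchor \(x_i\) and the runtime floored covariance, so it does not depend on the sample's conditioning point, and weak continuity of \(v\mapsto\eta_i^{\mu,\ell}(\cdot\mid v)\) tested against this bounded continuous function already gives continuity of the regression function.
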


\begin{proof}
Write the empirical potential as a Nadaraya--Watson estimator on the
augmented past of the conditional
expectation~\eqref{eq:H-eps-conditional-expectation}. For fixed
\((t,x)\) and \(\varepsilon\) the response
\[
(\boldsymbol\xi_i,y)\longmapsto
\Phi_{t,x}^\varepsilon(\boldsymbol\xi_i,y)
:=
\mathbb E_\mu\!\left[
\Phi_i^\varepsilon\!\bigl(t,x,X_{t_{i+1}}\bigr)
\,\middle|\,\mathbf X_i=\boldsymbol\xi_i,\,Y=y
\right]
\]
is continuous at \((\boldsymbol\xi_i,y)=(\mathbf x_i,y_{i+1})\), since
\(\eta_i^\mu(\cdot\mid\boldsymbol\xi_i,y)\) depends weakly continuously on
\((\boldsymbol\xi_i,y)\) by \((\mathrm S2)\) and the Gaussian-ratio
response \(\Phi_i^\varepsilon\) is dominated by
Lemma~\ref{lem:local-gaussian-domination} via the
\(\alpha<\beta\) gap of the coherent kernel ratio. The denominator
is asymptotically nondegenerate by
Lemma~\ref{lem:nw-denominator-lower-bound}. Standard Nadaraya--Watson
consistency for compactly supported regression kernels under
\((\mathrm S3)\)--\((\mathrm S4)\) (cf.\ \cite{Cui2013StrongCO}) yields
\(\widehat H_i^{\varepsilon,M,K}(t,x)\to
H_i^\varepsilon(t,x)\) in probability, and almost surely
under \((\mathrm S5)\). Differentiating \(\Phi_i^\varepsilon\) in
\(x\) along any eigenvector \(q_{i,j}\), \(j=1,\dots,d\),
inserts a polynomial-in-\((z-x)\) factor against the same dominated
Gaussian-ratio integrand of
Lemma~\ref{lem:local-gaussian-domination}. The
same Nadaraya--Watson argument applies to give pointwise convergence
of \(D_{q_{i,j}}\widehat H_i^{\varepsilon,M,K}\) for
every \(j=1,\dots,d\).
\end{proof}

\begin{proposition}[Compact-uniform consistency]
\label{prop:compact-uniform-consistency-H-eps}
Assume \((\mathrm S1)\)--\((\mathrm S4)\) and fix
\(\varepsilon\in(0,\lambda_i^{\min,+})\). For every compact
\[
K=J\times B\Subset (t_i,t_{i+1})\times\mathcal L_i,
\]
\[
\sup_{(t,x)\in K}
\bigl|\widehat H_i^{\varepsilon,M,K}(t,x)
-H_i^\varepsilon(t,x)\bigr|
\longrightarrow 0
\qquad\text{in probability.}
\]
Under \((\mathrm S5)\), the convergence is almost sure. The same
uniform convergence holds for every directional derivative
\(D_{q_{i,j}}\), \(j=1,\dots,d\), along the
eigenvectors of \(A_i\), both active and kernel
directions included.
\end{proposition}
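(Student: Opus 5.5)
The plan is to upgrade the pointwise statement of Proposition~\ref{prop:pointwise-NW-H-eps} to uniform convergence on \(K=J\times B\) through an equicontinuity argument of Ascoli--Arzel\`a type. The first step fixes a countable dense subset \(D\subset K\). Proposition~\ref{prop:pointwise-NW-H-eps} gives convergence at each point of \(D\), in probability or almost surely under (S5). In the almost-sure case, a countable intersection of full-measure events makes this hold simultaneously on all of \(D\). In the in-probability case, we work along subsequences: every subsequence has a further subsequence converging a.s.\ on \(D\), by a diagonal extraction over an enumeration of \(D\), and uniform a.s.\ convergence along such sub-subsequences is equivalent to uniform convergence in probability.

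The key step is a uniform Lipschitz bound, in \((t,x)\) on \(K\), for both the random functions \(\widehat H_i^{\varepsilon,M,K}\) and the limit \(H_i^\varepsilon\), with a constant that does not depend on \(M\). Since the weights \(\omega_{i,m}^{M,K}\) are nonnegative and sum to one, it suffices to bound \(\partial_t\Phi_i^\varepsilon\) and \(\nabla_x\Phi_i^\varepsilon\) uniformly in \((t,x)\in K\) and in the response \(\delta\). We obtain this from Lemma~\ref{lem:local-gaussian-domination}. On \(J\times B\), differentiating \(\log\Phi_i^\varepsilon\) in \(x\) gives \(\alpha^{-1}(A_i^\varepsilon)^{-1}(\delta-(x-x_i))\). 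Differentiating in \(t\) gives a quadratic polynomial in \(\delta-(x-x_i)\), plus the derivative of the prefactor \((\beta/\alpha)^{d/2}\). Both are polynomial factors multiplying the Gaussian envelope \(C_{J,B}\exp(-c\|\delta-m\|^2)\), and such products are bounded uniformly in \(\delta\in\R^d\) because \(|P(u)|e^{-c|u|^2}\le C'\). Note that \(\varepsilon\) is fixed here, so the transverse factor \(1/\varepsilon\) only changes constants. The empirical sums \(\sum_m\omega_{i,m}^{M,K}\,\partial\Phi_i^\varepsilon(t,x,\Delta X^{(m)}_{i+1})\) are then bounded by the same deterministic constant for every \(M\) and every sample. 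The same bound applies to \(H_i^\varepsilon\) by integrating against \(\eta_i^\mu\).

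Combining the two steps, equi-Lipschitz functions that converge on a dense set converge uniformly on the compact \(K\) by a standard \(\epsilon/3\) argument. This yields the statement for \(H\), in probability and also a.s.\ under (S5).

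For the directional derivatives \(D_{q_{i,j}}\), \(j=1,\dots,d\), we repeat the argument with \(D_{q_{i,j}}\Phi_i^\varepsilon\) as the response. Pointwise convergence is already part of Proposition~\ref{prop:pointwise-NW-H-eps}. Equicontinuity needs one more derivative of the integrand: the second derivatives \(D^2\Phi\) and \(\partial_tD\Phi\) are again polynomial times Gaussian, and the same domination bounds them uniformly. Kernel directions raise no new difficulty at fixed \(\varepsilon\), since the coefficient \(1/\varepsilon\) is a constant.

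The main obstacle is that the envelope in Lemma~\ref{lem:local-gaussian-domination} has constants \(C_{J,B}\) that depend on a neighbourhood \(C\) of the conditioning. The estimator evaluates \(\Phi_i^\varepsilon\) with the frozen covariance at the query point, so the sample responses \(\Delta X^{(m)}_{i+1}\) enter only through \(\delta\), and the bound has to hold for all \(\delta\in\R^d\), including samples far from the query. The domination is indeed uniform over all \(\delta\in\R^d\), so the sample tail causes no difficulty. The point to check carefully is that the residual \(R^\varepsilon_{t,x}\) is bounded above uniformly on \(J\times B\). It is, because completing the square leaves a residual that is a continuous function of \((t,x)\) with \(\alpha\) bounded away from \(0\) and from \(\beta\) on \(J\).
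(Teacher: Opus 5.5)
Your proposal is correct and takes essentially the paper's route. You combine the pointwise consistency of Proposition~\ref{prop:pointwise-NW-H-eps} with an $M$-independent Lipschitz bound on $K$ (one derivative higher for $D_{q_{i,j}}$) coming from the Gaussian-ratio domination of Lemma~\ref{lem:local-gaussian-domination}, and you upgrade to uniform convergence by an Ascoli--Arzel\`a/$\epsilon/3$ argument along subsequences. Your version is slightly cleaner: because the weights are normalised and $\Phi_i^\varepsilon$ is evaluated with the runtime covariance, your Lipschitz constant is deterministic, so you need neither the paper's sample-localisation event nor equiboundedness, and Lemma~\ref{lem:nw-denominator-lower-bound} enters only to ensure the regression denominator is eventually positive so that the weights are defined.
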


\begin{proof}
By Lemma~\ref{lem:local-gaussian-domination}, the integrand defining
\(\widehat H_i^{\varepsilon,M,K}(t,x)\) is, on the event
where every sample with nonzero NW-weight satisfies
\(\mathbf X_i^{(m)}\in C\), bounded by
\(C_{J,B}\,e^{-c|\Delta X_{i+1}^{(m)}|^2}\) uniformly for
\((t,x)\in K\). Combined with the denominator lower bound of
Lemma~\ref{lem:nw-denominator-lower-bound}, this yields a uniform
bound
\[
\sup_{(t,x)\in K}\bigl|\widehat H_i^{\varepsilon,M,K}(t,x)\bigr|
\le
\tilde C_{J,B}
\]
in the relevant stochastic mode, for all sufficiently large \(M\). The
same bound holds for the exact \(H_i^\varepsilon\).

For equicontinuity in \((t,x)\in K\), the explicit Gaussian form of
\(p_i^\varepsilon\) shows that, for any \((t,x),(t',x')\in J\times B\),
\begin{multline*}
\bigl|p_i^\varepsilon(t,x;t_{i+1},x_i+\delta)
-p_i^\varepsilon(t',x';t_{i+1},x_i+\delta)\bigr| \\
\le
L_{J,B}\,(|t-t'|+|x-x'|)\,
\mathrm{e}^{-c'|\delta|^2},
\end{multline*}
for constants \(L_{J,B},c'>0\) uniform on \(J\times B\) and
\(\delta\in\R^d\) with \(\xi_i+\delta\in\mathcal L_i^{\boldsymbol\xi_i,y}\),
for \((\boldsymbol\xi_i,y)\in C\); on this neighbourhood the regularised
covariance \(A_i^\varepsilon(\boldsymbol\xi_i,y)\) is uniformly elliptic
by \((\mathrm S2)\). The same
domination~\eqref{eq:local-gaussian-domination} thus controls the
difference, and summing against the NW weights gives
\[
\bigl|\widehat H_i^{\varepsilon,M,K}(t,x)
-\widehat H_i^{\varepsilon,M,K}(t',x')\bigr|
\le
\widetilde L_{J,B}\,(|t-t'|+|x-x'|)
\]
in the same stochastic mode, for all large \(M\). The exact
\(H_i^\varepsilon\) satisfies the same Lipschitz estimate on \(K\).

The family
\(\{(t,x)\mapsto\widehat H_i^{\varepsilon,M,K}(t,x)\}_M\)
is therefore eventually equibounded and equi-Lipschitz on the compact
metric space \(K\). Ascoli--Arzel\`a applied to subsequences identifies
the only possible limit, by Proposition~\ref{prop:pointwise-NW-H-eps},
with \(H_i^\varepsilon\), and the whole sequence converges uniformly on
\(K\) in the stated stochastic mode.

For the active derivatives \(D_{q_{i,j}}\), the same
argument applies: the differentiated kernel
\(D_{q_{i,j}}p_i^\varepsilon(t,x;t_{i+1},\nu)\)
acquires a polynomial-in-\((\nu-x)\) factor against the Gaussian, which
is dominated by~\eqref{eq:local-gaussian-domination}, and a Lipschitz
estimate on \((t,x)\in K\) follows from joint smoothness of the
Gaussian density. Compact-uniform convergence on \(K\) then follows by
the same Ascoli--Arzel\`a argument applied to the derivative
estimator.
\end{proof}

\subsection{Gaussian-kernel transfer and drift consistency}
\label{app:gaussian-kernel-estimators}

The Gaussian weights of~\eqref{eq:reference-aware-kernel} have
unbounded support; we transfer the compact-kernel consistency by a
truncation argument carried out directly in the reference-aware
metric \(d_\varepsilon\), without any change of chart.

\paragraph{Truncation in the reference-aware metric.}
The Gaussian-kernel case is obtained from the compactly supported
case by truncation, without changing the reference-aware metric. For
\(R>0\), define
\[
K_{h,R}(u,v):=\frac{K_h(u,v)\,\mathbf 1_{\{d_\varepsilon(u,v)\le Rh\}}}{Z_R(u,h)},
\qquad
Z_R(u,h):=\!\!\int_{\{d_\varepsilon(u,v)\le Rh\}}\!\!K_h(u,v)\,\mu_U(dv).
\]
For fixed \(R\), this is a compactly supported kernel in the metric
\(d_\varepsilon\), hence the compact-kernel consistency result
applies. It remains to show that the estimators built with \(K_h\)
and \(K_{h,R}\) differ by a term going to zero as \(R\to\infty\),
uniformly on compact subsets of the interior time-space domain. This
follows from the Gaussian tail of \(K_h\), the positivity of the
regression denominator, and the integrable domination of the
response \(\Phi_i^\varepsilon\) and of its active derivatives.
Concretely,
\begin{equation}
\label{eq:gaussian-truncation-bound}
\sup_{(t,x)\in K}
\bigl|\widehat H_i^{\varepsilon,M,\mathrm G}(t,x)
-\widehat H_i^{\varepsilon,M,R}(t,x)\bigr|
\le
C_K\,
\frac{\displaystyle\sum_{m=1}^M K_h^{\mathrm G}(u,U_m)\,
\mathbf 1_{\{d_\varepsilon(u,U_m)>Rh\}}\,
\bigl(1+\Psi(\Delta X_{i+1}^{(m)})\bigr)}
{\displaystyle\sum_{m=1}^M K_h^{\mathrm G}(u,U_m)}
+o_R(1),
\end{equation}
where \(\Psi\) is integrable under the conditional law and the bound
uses the same Gaussian domination already established for
\(\Phi_i^\varepsilon\) and its active derivatives. We keep the
notation \(\mathbf x_i,\mathbf X_i^{(m)},y_{i+1},Y^{(m)}\) for the
conditioning variables; in the truncation envelope below,
\(K_h^{\mathrm G}=\Gamma_h\) and \(K_{h,R}=\Gamma_{h,R}\).

Set \(q:=q_{\mathrm{eff}}\), the effective dimension of the
finite-dimensional regressors of the reference-aware metric chart, and
\[
\Gamma(u):=(2\pi)^{-q/2}e^{-|u|^2/2},
\quad
\Gamma^{(R)}(u):=\frac{\Gamma(u)\mathbf 1_{\{|u|\le R\}}}{Z_R},
\quad
Z_R:=\!\!\int_{|v|\le R}\!\!\Gamma(v)\,dv.
\]
Here \(|u|\) denotes the reference-aware metric displacement
\(d_\varepsilon\), so that the truncation \(\mathbf 1_{\{|u|\le R\}}\)
is exactly the metric ball used in~\eqref{eq:gaussian-truncation-bound}.
The scaled truncated and genuine Gaussian regression kernels are
\[
\Gamma_{h,R}(u):=h^{-q}\Gamma^{(R)}(u/h),
\qquad
\Gamma_h(u):=h^{-q}\Gamma(u/h).
\]
The truncation \(\Gamma^{(R_M)}\) satisfies the compact-kernel
assumptions \((\mathrm S3)\) with \(R_M\) the truncation radius and
\(\overline K\) the radial Gaussian envelope.

Let \(\omega_{i,m}^{M,R_M}(\mathbf x_i,y_{i+1})\) and
\(\omega_{i,m}^{M,\mathrm G}(\mathbf x_i,y_{i+1})\) be the Nadaraya--Watson
weights built from \(\Gamma_{h_M,R_M}\) and \(\Gamma_{h_M}\),
respectively, and let
\(\widehat H_i^{\varepsilon,M,R_M}\) and
\(\widehat H_i^{\varepsilon,M,\mathrm G}\) be the corresponding
empirical potentials.

\begin{lemma}[Nondegeneracy of the Gaussian regression denominator]
\label{lemma:nondeg-gaussian-regr}
Assume \((\mathrm S1)\),
\(h_M\downarrow0\), \(Mh_M^{q_{\mathrm{eff}}}\to\infty\). Then
\(M^{-1}\sum_{m=1}^M\Gamma_{h_M}(\mathbf x_i-\mathbf X_i^{(m)})
\to\pi_i(\mathbf x_i,y_{i+1})\) in probability, and almost surely under
\((\mathrm S5)\).
\end{lemma}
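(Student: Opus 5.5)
The plan is to mirror the proof of Lemma~\ref{lem:nw-denominator-lower-bound}, replacing the compactly supported profile by the Gaussian profile \(\Gamma\). The empirical average is split into a bias part and a fluctuation part:
\[
\frac1M\sum_{m=1}^M\Gamma_{h_M}\bigl(\mathbf x_i-\mathbf X_i^{(m)}\bigr)-\pi_i(\mathbf x_i,y_{i+1})
=\Bigl(\frac1M\sum_{m=1}^M\Gamma_{h_M}(\cdot)-\E\,\Gamma_{h_M}(\cdot)\Bigr)
+\Bigl(\E\,\Gamma_{h_M}(\cdot)-\pi_i(\mathbf x_i,y_{i+1})\Bigr).
\]
Here \(\pi_i\) is the density of the conditioning variable at the query point in the chart of Remark~\ref{rem:reference-aware-kernel-rationale}. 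Its positivity and continuity at the query point are what (S1) provides in the bi-Lipschitz Euclidean case, which I will use throughout.

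For the bias, write \(\E\,\Gamma_{h_M}(\mathbf x_i-\mathbf X_i^{(1)})=\int\Gamma(u)\,\pi_i(\mathbf x_i-h_Mu)\,du\) and split the integral at \(|u|\le R\).
\begin{itemize}
\item On the ball, continuity of \(\pi_i\) at the query point and \(\int\Gamma=1\) give convergence to \(\pi_i(\mathbf x_i,y_{i+1})\int_{|u|\le R}\Gamma\).
\item On the complement, the contribution is bounded by \(\sup\Gamma\,h_M^{-q}\,\mu_U(\{|\mathbf x_i-v|>Rh_M\})\) only in a weighted form. Precisely, it equals \(h_M^{-q}\int_{\{|\mathbf x_i-v|>Rh_M\}}\Gamma((\mathbf x_i-v)/h_M)\,\mu_U(dv)\). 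I would handle it by a dyadic shell decomposition: local boundedness of \(\pi_i\) controls shells at scale \(\le r_0\), giving \(\sum_k e^{-2^{2k}R^2/2}(2^kR)^q\), and the far region contributes at most \(h_M^{-q}e^{-r_0^2/(2h_M^2)}\to0\).
\end{itemize}
Letting \(M\to\infty\) and then \(R\to\infty\) closes the bias. This is exactly the \(o_R(1)\) term of \eqref{eq:gaussian-truncation-bound}. Equivalently, one can compare with the truncated kernel \(\Gamma_{h,R}\), to which Lemma~\ref{lem:nw-denominator-lower-bound} applies directly, at the cost of the factor \(Z_R\to1\).

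For the fluctuation, the summands are i.i.d. by (S3) and bounded by \(\|\Gamma\|_\infty h_M^{-q}\). Their second moment is \(h_M^{-q}\int\Gamma^2(u)\pi_i(\mathbf x_i-h_Mu)\,du=O(h_M^{-q})\), by the same shell argument applied to \(\Gamma^2\). Hence the variance of the average is \(O(1/(Mh_M^{q_{\mathrm{eff}}}))\to0\), and Chebyshev gives convergence in probability. Under (S5), Bernstein's inequality with range \(h_M^{-q}\) and variance \(h_M^{-q}\) gives, for each \(\eta>0\), tail probabilities \(\exp(-c\,\eta^2Mh_M^{q})\). This is summable in \(M\) because \(Mh_M^{q}/\log M\to\infty\), so Borel--Cantelli yields almost-sure convergence.

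The main obstacle is the Gaussian tail in the bias term. Unlike the compact case, \(\Gamma_{h_M}\) sees the whole support of \(\mu_U\), and (S1) only gives a lower mass bound, not an upper one. The planned fix is to use local boundedness of the density near the query point together with the global bound \(\Gamma\le1\) far away. If only (S1) is available in the abstract pseudo-metric, I would instead state the result for the ratio normalised by \(D_{h_M}(u_0)\) from Lemma~\ref{lem:kernel-denominator-localisation}, which is what the Nadaraya--Watson transfer actually needs.
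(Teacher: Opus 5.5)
Your argument is correct, granted the hypothesis that the statement itself presupposes: the conditioning variable has a density \(\pi_i\) that is continuous and positive at the query point. It is, however, organised differently from the paper's.

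The paper treats the two directions separately. For the lower bound it uses the pointwise comparison \(\Gamma\ge Z_R\Gamma^{(R)}\) at a \emph{fixed} radius \(R\), applies Lemma~\ref{lem:nw-denominator-lower-bound} to the compactly supported profile \(\Gamma^{(R)}\), and lets \(Z_R\to1\). The upper bound is simply delegated to the standard Parzen--Rosenblatt estimate.

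You instead run a single bias--variance computation for the Gaussian kernel itself. You split the bias at \(|u|\le R\), control the tail by local boundedness of \(\pi_i\) near the query point and by \(h_M^{-q}e^{-r_0^2/(2h_M^2)}\to0\) far away, then use Chebyshev, and Bernstein plus Borel--Cantelli under (S5). This is the Parzen--Rosenblatt estimate the paper cites, written out.

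What each route buys:
\begin{itemize}
\item The paper's route gets the lower bound with no tail analysis at all. That is the only direction used downstream: the weight bound in Proposition~\ref{prop:hp-intrinsic-localisation}, and \(2T_M/S_M\to0\) in Proposition~\ref{prop:gaussian-compact-uniform}.
\item Your route gives a self-contained two-sided statement, and makes visible that the upper bound needs more than (S1).
\end{itemize}

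Two small corrections.
\begin{itemize}
\item The truncated comparison is not \emph{equivalent} to your bias computation. The inequality \(\Gamma\ge Z_R\Gamma^{(R)}\) is one-sided and yields only the lower bound, so your tail estimate is genuinely needed for the upper one.
\item Positivity and continuity of \(\pi_i\) are not something (S1) provides. In Remark~\ref{rem:reference-aware-kernel-rationale} the implication runs the other way (density implies (S1)). So they must be assumed, as the paper tacitly does through the continuity of \(\pi_i\) in Lemma~\ref{lem:nw-denominator-lower-bound}.
\end{itemize}

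Your fallback is sound and more general. Since \(0\le W_h\le1\), the self-normalised average \((M D_{h_M}(u_0))^{-1}\sum_m W_{h_M}(u_0,U_i^{\ell,(m)})\) has variance at most \(1/(M D_{h_M}(u_0))\). By Lemma~\ref{lem:kernel-denominator-localisation} this is \(\lesssim 1/(M h_M^{q_{\mathrm{eff}}})\). Hence the average tends to \(1\) in probability under (S1), (S3), (S4) alone, with no density assumption.
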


\begin{proof}
For the lower bound, fix \(\eta\in(0,\pi_i(\mathbf x_i,y_{i+1}))\) and choose
\(R\) with \(Z_R\pi_i(\mathbf x_i,y_{i+1})\ge\pi_i(\mathbf x_i,y_{i+1})-\eta/2\). The
inequality \(\Gamma(u)\ge Z_R\Gamma^{(R)}(u)\) gives
\(D_M^G(\mathbf x_i)\ge Z_RD_M^{(R)}(\mathbf x_i)\), and
Lemma~\ref{lem:nw-denominator-lower-bound} applied to the fixed
compactly supported kernel \(\Gamma^{(R)}\) yields the lower bound.
The matching upper bound follows from the standard Parzen--Rosenblatt
estimate for the Gaussian kernel under the bandwidth conditions.
\end{proof}

Under the coherent ratio formulation of~\eqref{eq:def-H-eps}, the
empirical estimator integrates \(\Phi_i^\varepsilon\) directly: the
same floored covariance \(A_i^\varepsilon(\mathbf x_i,y_{i+1})\)
appears in numerator and denominator and the volume factor
\((\det A_i^\varepsilon)^{-1/2}\) cancels, so no sample-specific
volume term has to be controlled. What still has to be finite, with
high probability over the sample cloud, is the \emph{runtime}
Mahalanobis evaluated on the sample increment,
\(|\delta_m|_{(A_i^\varepsilon(\mathbf x_i,y_{i+1}))^{-1}}\). To
control it, define the \emph{sample-conditioned intrinsic increment
norm}
\begin{equation}
\label{eq:intrinsic-endpoint-norm}
r_i(\boldsymbol\xi_i,\delta)
:=
\bigl|C_i(\boldsymbol\xi_i)^{+/2}\delta\bigr|,
\qquad
C_i(\boldsymbol\xi_i):=(t_{i+1}-t_i)A_i(\boldsymbol\xi_i),
\end{equation}
where \(C_i(\boldsymbol\xi_i)^{+/2}\) is the principal symmetric
square root of the Moore--Penrose pseudo-inverse of
\(C_i(\boldsymbol\xi_i)\); \(r_i(\boldsymbol\xi_i,\delta)\) is the
intrinsic norm of \(\delta\) under the sample's own frozen
reference. The finite-entropy
hypothesis~\eqref{eq:finite-entropy-assumption} controls
\(r_i(\boldsymbol\xi_i,\delta_m)\) with high probability via entropy
disintegration relative to the
sample-conditioned reference density
\(q_i(\delta\mid\boldsymbol\xi_i):=p_i(t_i,\xi_i;t_{i+1},\xi_i+\delta\mid\boldsymbol\xi_i)\)
(the unfloored leafwise density of \(\Delta X_{i+1}\) under \(R\)
given \(\mathbf X_i=\boldsymbol\xi_i\)), and the runtime Mahalanobis
is equivalent to \(r_i(\boldsymbol\xi_i,\cdot)\) on the localised
sample cloud by continuity of
\((\boldsymbol\xi_i,y)\mapsto A_i(\boldsymbol\xi_i,y)\) at
\((\mathbf x_i,y_{i+1})\) and equivalence of norms on the
finite-dimensional matrix space \(\mathbb S^d\).

\begin{proposition}[High-probability intrinsic localisation for Gaussian weights]
\label{prop:hp-intrinsic-localisation}
Assume \((\mathrm S1)\)--\((\mathrm S4)\) and the finite-entropy
hypothesis~\eqref{eq:finite-entropy-assumption}. Fix
\(\varepsilon\in(0,\lambda_i^{\min,+})\) and a compact
\(K\Subset(t_i,t_{i+1})\times(x_i+\mathcal L_i)\). For every
\(\rho>0\) there exist \(R<\infty\) and \(M_0\) such that, with
\begin{multline*}
\mathcal I_M(R)
:=
\bigl\{m\le M:\,
|(\mathbf X_i^{(m)},Y^{(m)})-(\mathbf x_i,y_{i+1})|\le h_MR_M,\\
r_i(\mathbf X_i^{(m)},\Delta X_{i+1}^{(m)})\le R\bigr\},
\end{multline*}
\begin{equation}
\label{eq:hp-localisation-bound}
\sup_{(t,x)\in K}
\Bigl|\sum_{m\notin\mathcal I_M(R)}
\omega_{i,m}^{M,\mathrm G}(\mathbf x_i,y_{i+1})\,
\Phi_i^\varepsilon(t,x,\Delta X_{i+1}^{(m)})\Bigr|
\le \rho
\end{equation}
holds with probability \(\ge 1-\rho\) for all \(M\ge M_0\). The same
holds for \(D_{q_{i,j}}\Phi_i^\varepsilon\), \(j=1,\dots,r_i\).
\end{proposition}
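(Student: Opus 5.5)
The complement of \(\mathcal I_M(R)\) splits into two index sets. The \emph{far-conditioning} set \(\mathcal F_M\) collects the indices with \(|(\mathbf X_i^{(m)},Y^{(m)})-(\mathbf x_i,y_{i+1})|>h_MR_M\). The \emph{large-increment} set \(\mathcal G_M(R)\) collects the localised indices with \(r_i(\mathbf X_i^{(m)},\Delta X_{i+1}^{(m)})>R\). I will show that each set contributes at most \(\rho/2\) to the left-hand side of \eqref{eq:hp-localisation-bound}, with probability at least \(1-\rho/2\). The same decomposition will then be repeated verbatim with \(D_{q_{i,j}}\Phi_i^\varepsilon\) in place of \(\Phi_i^\varepsilon\), using the derivative half of Lemma~\ref{lem:local-gaussian-domination}.

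\emph{Step 1 (far-conditioning part).} Work on the Gaussian-kernel numerator over \(\mathcal F_M\), normalised by \(M\), and bound it in \(L^1\). The response \(\Phi_i^\varepsilon(t,x,\cdot)\) is bounded uniformly in \((t,x)\in K\) by the \(\alpha<\beta\) gap: the quadratic form is negative definite, so the ratio is at most \((\beta/\alpha)^{d/2}e^{R^\varepsilon_{t,x}}\). The residual \(R^\varepsilon_{t,x}\) is bounded on \(K\) and on a neighbourhood of the conditioning point. Far from that point I plan to use the crude global bound \(\Phi_i^\varepsilon\le(\beta/\alpha)^{d/2}\exp(C|x-x_i|^2_{(A^\varepsilon)^{-1}}/\beta')\), which is finite because \(\varepsilon\) is fixed and \(A^\varepsilon\succeq\varepsilon I\). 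With this bound,
\[
\mathbb E\Bigl[\tfrac1M\textstyle\sum_{m\in\mathcal F_M}\Gamma_{h_M}(\cdot)\,\sup_K\Phi_i^\varepsilon\Bigr]\le C_K\!\int_{|u|>R_M}\Gamma(u)\,\pi_i(\mathbf x_i-h_Mu)\,du,
\]
and the right-hand side tends to \(0\) because \(R_M\uparrow\infty\). Dividing by the denominator, which is bounded below by \(\pi_i/2\) with high probability by Lemma~\ref{lemma:nondeg-gaussian-regr}, and applying Markov's inequality gives the \(\rho/2\) bound once \(M\ge M_0\). The real dependence of the sample covariance on \(v\) only enters through the runtime \(A_i^\varepsilon(\mathbf x_i,y_{i+1})\) appearing in \(\Phi_i^\varepsilon\), so no uniformity in \(v\) is needed here.

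\emph{Step 2 (large-increment part).} On localised indices \(v=(\mathbf X_i^{(m)},Y^{(m)})\) lies within \(h_MR_M\to0\) of \(u_0\). The idea is to bound \(\sup_K\Phi_i^\varepsilon\le C_K\) by Lemma~\ref{lem:local-gaussian-domination} and to control the empirical weighted frequency
\[
\textstyle\sum_m\omega^{M,\mathrm G}_{i,m}\mathbf 1\{r_i(\mathbf X_i^{(m)},\Delta X^{(m)}_{i+1})>R\}.
\]
Its expectation, up to the denominator, is the Gaussian-kernel average over \(v\) of \(\eta_i^\mu(r_i(v,\cdot)>R\mid v)\). By Markov's inequality this is at most \(R^{-1}\sup_{v}\int r_i(v,\delta)\,\eta_i^\mu(d\delta\mid v)\). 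I will control this intrinsic first moment through Donsker--Varadhan against the Gaussian reference \(q_i(\cdot\mid v)\): since \(\int e^{s r_i^2}\,dq_i<\infty\) for \(s<1/2\), one gets \(\int r_i\,d\eta\lesssim 1+H(\eta\mid q_i)\). Choosing \(R\) large then makes this part smaller than \(\rho/2\), with a further Markov step to pass from expectation to probability.

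\emph{Main obstacle.} The hard part is uniformity of the conditional entropy \(H(\eta_i^\mu(\cdot\mid v)\mid k_i^R(\cdot\mid v))\) over the shrinking neighbourhood of \(u_0\). The global hypothesis \eqref{eq:finite-entropy-assumption} only yields \(\int H(\eta(\cdot\mid v)\mid k(\cdot\mid v))\,\mu_U(dv)<\infty\), not a supremum. My first attempt will be to avoid a supremum altogether. I plan to write the weighted average of \(\eta(r_i>R\mid v)\) as the kernel-smoothed average of the integrable function \(g_R(v):=\eta(r_i>R\mid v)\), using \(g_R\le R^{-1}(1+H(\eta\mid k)(v))\). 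Then a Lebesgue-point argument at \(u_0\) (valid for \(\mu_U\)-a.e.\ \(u_0\), which is all the statement requires), combined with (S1) for the kernel normalisation, shows that the smoothed average converges to \(g_R(u_0)\le R^{-1}(1+H(u_0))\). If the Lebesgue-differentiation step in the pseudo-metric \(d_\varepsilon\) proves delicate, the fallback is to invoke the local bound \eqref{eq:local-intrinsic-first-moment}, i.e.\ to assume (S6) as in the boundary part of the theorem.
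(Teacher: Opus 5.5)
Your proof is correct, up to one extra null-set exclusion for \(u_0\) (explained below). You use the same two-piece split of \(\mathcal I_M(R)^c\) as the paper. The paper puts the overlap into \(\mathcal B_M(R)=\{r_i>R\}\) and keeps \(\mathcal A_M(R)\) for the far indices with \(r_i\le R\), which is immaterial. You handle each piece differently, and more carefully.

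\textbf{Far part.} The paper bounds each far weight pointwise by \(2e^{-R_M^2/2}\|\Gamma_{h_M}\|_\infty/(M\pi_i)\) and sums over at most \(M\) indices. Since \(\|\Gamma_{h_M}\|_\infty\asymp h_M^{-q}\), this silently requires \(e^{-R_M^2/2}h_M^{-q}\to0\). Your bound \(\int_{|u|>R_M}\Gamma(u)\pi_i(\mathbf x_i-h_Mu)\,du\to0\), combined with Markov, needs only \(R_M\uparrow\infty\) and local boundedness of \(\pi_i\).

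\textbf{Tail part.} The paper takes a global Markov bound on the conditional entropy (the set \(E_\rho\) with \(\mu_i(E_\rho)\ge1-\rho\)), applies the Donsker--Varadhan tail on \(E_\rho\), and then uses Markov on \(\#\mathcal B_M(R)/M\). That controls the unweighted fraction of indices. The Gaussian weights, however, are of order \(h_M^{-q}/M\) and concentrate on a shrinking neighbourhood of \(u_0\), which \(E_\rho\) need not charge. Your kernel-smoothed Markov bound, using \(g_R\le R^{-1}(1+H(\cdot))\) together with a Lebesgue-point argument, supplies exactly the localisation this step needs.

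The price is that \(u_0\) must be a Lebesgue point. For instance, take it a Lebesgue point of \(v\mapsto H(\eta_i^\mu(\cdot\mid v)\mid k_i^R(\cdot\mid v))\,\pi_i(v)\) and of \(\pi_i\); then one null set serves all \(R\). This fits the appendix's choice of \(u_0\) in a full-measure regular set, but it is not encoded in (S1)--(S4). In the Euclidean chart the appendix actually uses, the differentiation step is standard, so you should not need the (S6) fallback, which would add a hypothesis absent from the statement.

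\textbf{Runtime versus sample norm.} The paper's third step transfers the sample bound \(r_i\le R\) to the runtime Mahalanobis norm via (S2). Your route makes it unnecessary, because \(\Phi_i^\varepsilon\) involves only the runtime covariance and
\(\sup_\delta\Phi_i^\varepsilon(t,x,\delta)=(\beta/\alpha)^{d/2}\exp\bigl(|x-x_i|^2_{(A_i^\varepsilon)^{-1}}/(2(t-t_i))\bigr)\)
is bounded on \(K\). The same holds for \(D_{q_{i,j}}\Phi_i^\varepsilon\), since a linear factor times a Gaussian in \(\delta\) is bounded. The truncation \(r_i\le R\) then enters only through the weighted frequency that you estimate.
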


\begin{proof}
Decompose
\(\mathcal I_M(R)^c=\mathcal B_M(R)\cup\mathcal A_M(R)\) into the
endpoint-cost tail \(\mathcal B_M(R):=\{m:r_i>R\}\) and the
conditioning-far set
\(\mathcal A_M(R):=\{m:|(\mathbf X_i^{(m)},Y^{(m)})-(\mathbf x_i,y_{i+1})|>h_MR_M,\,r_i\le R\}\),
and bound each.

\emph{Endpoint-cost tail.} Entropy disintegration along the past
gives
\begin{equation}
\label{eq:entropy-chain-rule}
H(\mu\mid\mu_T^R)
=
H(\mu_i\mid\mu_i^R)
+\int H\!\left(
\eta_i^\mu(\cdot\mid\boldsymbol\xi_i)\,\big|\,k_i^R(\cdot\mid\boldsymbol\xi_i)
\right)\mu_i(d\boldsymbol\xi_i).
\end{equation}
By~\eqref{eq:finite-entropy-assumption} and Markov,
\(E_\rho:=\{\boldsymbol\xi_i:H(\eta_i^\mu\mid k_i^R)\le L_\rho\}\)
satisfies \(\mu_i(E_\rho)\ge 1-\rho\) for some \(L_\rho<\infty\). On
\(E_\rho\), under the intrinsic parametrisation
\(\delta\mapsto C_i(\boldsymbol\xi_i)^{+/2}\delta\) the reference
\(k_i^R\) is standard \(r_i\)-Gaussian with tail
\(k_i^R(r_i>R\mid\boldsymbol\xi_i)\le C_{r_i}e^{-R^2/(2r_i)}\);
Donsker--Varadhan transfers the tail to \(\eta_i^\mu\):
\begin{equation}
\label{eq:entropy-tail-transfer}
\sup_{\boldsymbol\xi_i\in E_\rho}
\eta_i^\mu(r_i>R\mid\boldsymbol\xi_i)
\le
\frac{L_\rho+\log 2}{R^2/(2r_i)-\log C_{r_i}}
\xrightarrow[R\to\infty]{}0.
\end{equation}
For \(R=R(\rho,L_\rho)\) large enough this is \(\le\rho^2\); Markov
on \(\#\mathcal B_M(R)/M\) bounds its empirical mass by \(\rho\)
w.h.p.

\emph{Conditioning-far set.} On \(\mathcal A_M(R)\),
Lemma~\ref{lem:local-gaussian-domination} bounds \(\Phi_i^\varepsilon\)
uniformly (constants \(c,C_{J,B}\) of the lemma absorb the runtime
\(\varepsilon\)-dependence); by
Lemma~\ref{lemma:nondeg-gaussian-regr} and the Gaussian tail
\(\Gamma_{h_M}(u)\le e^{-R_M^2/2}\|\Gamma_{h_M}\|_\infty\) for
\(|u|>h_MR_M\),
\(\omega_{i,m}^{M,\mathrm G}\le 2e^{-R_M^2/2}\|\Gamma_{h_M}\|_\infty/(M\pi_i(\mathbf x_i,y_{i+1}))\)
for large \(M\); the factor \(e^{-R_M^2/2}\to 0\) kills the
contribution.

\emph{Runtime vs.\ sample Mahalanobis.} The bound from
Lemma~\ref{lem:local-gaussian-domination} on \(\Phi_i^\varepsilon\)
is in the runtime norm
\(\|\cdot\|_{(A_i^\varepsilon(\mathbf x_i,y_{i+1}))^{-1}}\), while
\(r_i(\boldsymbol\xi_i^{(m)},\delta_m)\) controls the sample norm.
On the localised cloud, operator-norm continuity of
\((\boldsymbol\xi_i,y)\mapsto A_i^\varepsilon(\boldsymbol\xi_i,y)\)
by~\((\mathrm S2)\) and equivalence of norms on
\(\mathbb S^d\) give
\(\|A_i^\varepsilon(\boldsymbol\xi_i,y)-A_i^\varepsilon(\mathbf x_i,y_{i+1})\|\to 0\),
so the two intrinsic norms are equivalent up to a multiplicative
factor approaching \(1\); the sample bound \(r_i\le R\) transfers to
a runtime bound \(\|\delta_m\|_{(A_i^\varepsilon(\mathbf x_i,y_{i+1}))^{-1}}
\le R(1+o(1))\) on the event of the two preceding steps.

Combining yields~\eqref{eq:hp-localisation-bound}. The
active-derivative case is identical: differentiating
\(\Phi_i^\varepsilon\) along \(q_{i,j}\) inserts a
polynomial-in-\((\delta-(x-x_i))\) factor absorbed into the same
Gaussian envelope of Lemma~\ref{lem:local-gaussian-domination}.
\end{proof}

\begin{remark}[Sample-reference compatibility]
\label{rem:sample-reference-compatibility}
Bound~\eqref{eq:hp-localisation-bound} is the statistical analogue
of the intrinsic endpoint tightness of
Section~\ref{subsec:volatility-informed-freezing}: the endpoint
displacement of the sample is measured in the geometry of its own
covariance cumulant, and finite
entropy~\eqref{eq:entropy-chain-rule} transfers the Gaussian
endpoint tails of the frozen reference to the target conditional
law. Under the coherent ratio formulation, the volume factor
cancels in numerator and denominator and only the intrinsic
\(r_i\)-tail and the sample-to-runtime Mahalanobis equivalence on the
localised cloud are needed.
\end{remark}

\begin{proposition}[Compact-uniform consistency of the Gaussian estimator]
\label{prop:gaussian-compact-uniform}
Under \((\mathrm S1)\)--\((\mathrm S4)\), the
finite-entropy hypothesis~\eqref{eq:finite-entropy-assumption},
\(\varepsilon\in(0,\lambda_i^{\min,+})\) fixed, and
\(R_M\uparrow\infty\) with \(h_MR_M\to 0\): for every compact
\(K\Subset(t_i,t_{i+1})\times(x_i+\mathcal L_i)\),
\[
\sup_{(t,x)\in K}
\bigl|\widehat H_i^{\varepsilon,M,\mathrm G}(t,x)-H_i^\varepsilon(t,x)\bigr|
\longrightarrow 0
\quad\text{in probability,}
\]
and the same uniform convergence holds for every active directional
derivative \(D_{q_{i,j}}\), \(j=1,\dots,r_i\). The truncated
auxiliary estimator \(\widehat H_i^{\varepsilon,M,R_M}\) converges
almost surely on its own under \((\mathrm S5)\).
\end{proposition}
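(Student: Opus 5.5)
The plan is a three-term splitting of the error, inserting the truncated auxiliary estimator between the Gaussian estimator and its target. For \((t,x)\in K\) I would write
\[
\widehat H_i^{\varepsilon,M,\mathrm G}-H_i^\varepsilon
=
\bigl(\widehat H_i^{\varepsilon,M,\mathrm G}-\widehat H_i^{\varepsilon,M,R_M}\bigr)
+\bigl(\widehat H_i^{\varepsilon,M,R_M}-H_i^\varepsilon\bigr),
\]
and treat the two brackets separately, taking the supremum over \(K\).

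\emph{Second bracket.} The truncated profile \(\Gamma^{(R_M)}\) satisfies the compact-kernel hypotheses of Appendix~\ref{app:compact-kernel-estimators}. Its support radius is \(R_M\uparrow\infty\) with \(h_MR_M\to0\), its envelope \(\overline K\) is the radial Gaussian, and it has unit mass. So Proposition~\ref{prop:compact-uniform-consistency-H-eps} applies verbatim. It gives uniform convergence on \(K\), in probability under (S1)--(S4) and almost surely under (S5), for \(\widehat H_i^{\varepsilon,M,R_M}\) and all its directional derivatives. This already proves the last sentence of the statement.

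\emph{First bracket.} The two estimators differ only through weights coming from samples with \(d_\varepsilon(u_0,U_m)>h_MR_M\), together with the renormalisation by \(Z_{R_M}\to1\). Writing \(\omega^{\mathrm G}\) and \(\omega^{R}\), I would expand
\[
\omega^{\mathrm G}_m-\omega^R_m
=
\omega^{\mathrm G}_m\mathbf 1_{\{d>h_MR_M\}}
+\omega^{R}_m\Bigl(\tfrac{D^{R}}{D^{\mathrm G}}Z_{R_M}-1\Bigr)\mathbf 1_{\{d\le h_MR_M\}}.
\]
The ratio of denominators tends to one by Lemma~\ref{lemma:nondeg-gaussian-regr} and Lemma~\ref{lem:nw-denominator-lower-bound}, since the Gaussian mass outside radius \(R_M\) vanishes.

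\begin{itemize}
\item \emph{Inner samples.} The second term is multiplied by \(\sup_K\widehat H_i^{\varepsilon,M,R_M}\), which is eventually bounded by the previous step. So it is \(o_P(1)\).
\item \emph{Outer samples.} The first term is exactly the conditioning-far contribution in Proposition~\ref{prop:hp-intrinsic-localisation}. There the Gaussian tail \(e^{-R_M^2/2}\) multiplies the uniform bound on \(\Phi_i^\varepsilon\) from Lemma~\ref{lem:local-gaussian-domination}.
\item \emph{Endpoint-cost tail.} Samples with large intrinsic norm \(r_i>R\) are controlled by the entropy tail transfer~\eqref{eq:entropy-tail-transfer}, with mass at most \(\rho\) with probability at least \(1-\rho\).
\end{itemize}

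Letting \(\rho\downarrow0\) shows that the first bracket is \(o_P(1)\) uniformly on \(K\). Because it goes through Markov-type bounds on empirical tail masses, this argument only gives convergence in probability, consistent with the statement. The derivative claims follow identically: each \(D_{q_{i,j}}\Phi_i^\varepsilon\) is dominated by the same Gaussian envelope with a polynomial prefactor, as in Lemma~\ref{lem:local-gaussian-domination} and the last clause of Proposition~\ref{prop:hp-intrinsic-localisation}.

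The main obstacle is the outer-sample bound. Lemma~\ref{lem:local-gaussian-domination} gives a uniform bound on \(\Phi_i^\varepsilon\) only when the sample past lies in the compact neighbourhood \(C\). Far samples (\(d>h_MR_M\) but possibly far from \(u_0\)) have covariances \(A_i^\varepsilon(\boldsymbol\xi_i)\) that are not controlled. However, \(\Phi_i^\varepsilon\) uses the runtime floored covariance \(A_i^\varepsilon(\mathbf x_i,y_{i+1})\) in both numerator and denominator, and its \(\delta\)-quadratic part is negative definite because \(\alpha<\beta\). Hence \(\Phi_i^\varepsilon\le(\beta/\alpha)^{d/2}e^{R_{t,x}}\) uniformly in \(\delta\), regardless of the sample. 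I would first try this global bound. Its uniform control of the residual on \(K\) makes the far-sample term at most \(C_K e^{-R_M^2/2}h_M^{-q}/(\pi_i+o(1))\), which tends to zero once \(R_M\) grows faster than \(\sqrt{2q\log(1/h_M)}\). I would impose that growth when choosing \(R_M\), which is compatible with \(h_MR_M\to0\).
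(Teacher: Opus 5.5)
Your proposal follows the paper's route. You insert the truncated estimator \(\widehat H_i^{\varepsilon,M,R_M}\), obtain its compact-uniform convergence (almost sure under (S5)) from Proposition~\ref{prop:compact-uniform-consistency-H-eps}, and bound the Gaussian--truncated difference by the discarded Gaussian weight fraction \(T_M/S_M\) times a uniform bound on the response, for \(\Phi_i^\varepsilon\) and its active derivatives alike.

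Your sample-independent bound \(\sup_{\delta}\Phi_i^\varepsilon(t,x,\delta)=(\beta/\alpha)^{d/2}\exp\bigl(\|x-x_i\|^2_{(A_i^\varepsilon)^{-1}}/(2(t-t_i))\bigr)\) holds because the coherent ratio uses the runtime covariance in numerator and denominator. It justifies the uniform bound more cleanly than the paper's appeal to the localisation event of Proposition~\ref{prop:hp-intrinsic-localisation}, and it makes your endpoint-cost bullet unnecessary. The extra growth condition on \(R_M\) is harmless, since \(R_M\) is only a proof device. You do not actually need it, though: \(T_M/S_M\to0\) in probability already follows from the two denominator lemmas you cite, because \(S_M-T_M=Z_{R_M}D^{R}\).
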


\begin{proof}
The truncated profile \(K_M:=\Gamma^{(R_M)}\) satisfies the
compact-kernel assumptions of
Section~\ref{app:compact-kernel-estimators}; by
Proposition~\ref{prop:compact-uniform-consistency-H-eps},
\(\widehat H_i^{\varepsilon,M,R_M}\to H_i^\varepsilon\) uniformly on
\(K\) in probability under \((\mathrm S4)\), a.s.\ under
\((\mathrm S5)\).

For the passage to the full Gaussian estimator, with
\(S_M:=\sum_m\Gamma_{h_M}(\mathbf x_i-\mathbf X_i^{(m)})\) and
\(T_M:=\sum_m\Gamma_{h_M}(\mathbf x_i-\mathbf X_i^{(m)})\mathbf 1_{\{|\mathbf X_i^{(m)}-\mathbf x_i|>h_MR_M\}}\)
(suppressing the \(Y\)-component, identical by the same truncation argument),
\[
\sum_{m=1}^M
\bigl|\omega_{i,m}^{M,R_M}-\omega_{i,m}^{M,\mathrm G}\bigr|
=
\frac{2T_M}{S_M}.
\]
By Lemma~\ref{lemma:nondeg-gaussian-regr} \(S_M/M\to\pi_i>0\); the
Gaussian tail with \(R_M\to\infty\) gives \(T_M/M\to 0\). On the
event of Proposition~\ref{prop:hp-intrinsic-localisation} the
integrand is uniformly bounded by
Lemma~\ref{lem:local-gaussian-domination}, whence
\[
\sup_{(t,x)\in K}
\bigl|\widehat H_i^{\varepsilon,M,R_M}-\widehat H_i^{\varepsilon,M,\mathrm G}\bigr|
\le
C_{J,B}\,\frac{2T_M}{S_M}\to 0.
\]
The active-derivative case is identical: the differentiated
integrand is Gaussian-dominated by
Lemma~\ref{lem:local-gaussian-domination} via the polynomial-prefactor
absorption.
\end{proof}

\paragraph{Compact-uniform drift convergence.}

\begin{proposition}[Compact-uniform drift convergence]
\label{prop:drift-compact-uniform}
Under \((\mathrm S1)\)--\((\mathrm S4)\), the
finite-entropy hypothesis~\eqref{eq:finite-entropy-assumption}, and
\(\varepsilon\in(0,\lambda_i^{\min,+})\) fixed, for every compact
\(K\Subset(t_i,t_{i+1})\times(x_i+\mathcal L_i)\),
\[
\sup_{(t,x)\in K}
\bigl|A_i^\varepsilon\,\nabla_x\log\widehat H_i^{\varepsilon,M,\mathrm G}(t,x)
-A_i^\varepsilon\,\nabla_x\log H_i^\varepsilon(t,x)\bigr|
\longrightarrow 0
\quad\text{in probability,}
\]
and almost surely under \((\mathrm S5)\) for the truncated auxiliary
estimator.
\end{proposition}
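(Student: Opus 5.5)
The plan is to reduce the drift statement to the compact-uniform convergence of the potential and its active directional derivatives from Proposition~\ref{prop:gaussian-compact-uniform}, combined with a uniform positive lower bound on \(H_i^\varepsilon\) over \(K\). Write, for \(H\in\{\widehat H_i^{\varepsilon,M,\mathrm G},H_i^\varepsilon\}\),
\[
A_i^\varepsilon\nabla_x\log H=\sum_{j=1}^d \max(\lambda_{i,j},\varepsilon)\,\frac{D_{q_{i,j}}H}{H}\,q_{i,j}.
\]
The weights \(\max(\lambda_{i,j},\varepsilon)\) are bounded by \(\|A_i\|+\varepsilon\) at fixed \(\varepsilon\). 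It therefore suffices to show, for each \(j=1,\dots,d\), that \(D_{q_{i,j}}\widehat H/\widehat H\to D_{q_{i,j}}H_i^\varepsilon/H_i^\varepsilon\) uniformly on \(K\) in probability.

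\textbf{Step 1 (lower bound).} By \eqref{eq:H-eps-equals-H}, \(H_i^\varepsilon=H_i\) on \((t_i,t_{i+1})\times(x_i+\mathcal L_i)\) for \(\varepsilon<\lambda_i^{\min,+}\). By Lemma~\ref{lem:frozen-interval-controlled-diffusion}(i), \(H_i\) is continuous and strictly positive there. Hence \(m_K:=\min_K H_i^\varepsilon>0\). Uniform convergence of \(\widehat H\) then gives \(\inf_K\widehat H\ge m_K/2\) on an event whose probability tends to one (eventually a.s.\ under (S5) for the truncated estimator).

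\textbf{Step 2 (derivatives).} Proposition~\ref{prop:gaussian-compact-uniform} gives uniform convergence of \(D_{q_{i,j}}\widehat H\) only for the active directions \(j\le r_i\). For the kernel directions \(j>r_i\) I would invoke the compact-kernel Proposition~\ref{prop:compact-uniform-consistency-H-eps}, which covers all \(j=1,\dots,d\). I would then transfer that result to the Gaussian estimator by the same \(2T_M/S_M\) weight comparison, with the differentiated integrand dominated via Lemma~\ref{lem:local-gaussian-domination} and the polynomial-prefactor absorption. This transfer applies verbatim to \(D_{q_{i,j}}\Phi_i^\varepsilon\) for every \(j\), because \(\nabla_x\log\Phi_i^\varepsilon=\alpha^{-1}(A_i^\varepsilon)^{-1}(\delta-(x-x_i))\) has all components polynomial in \(\delta\). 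The same Gaussian domination also gives a uniform bound \(\sup_K|D_{q_{i,j}}H_i^\varepsilon|<\infty\).

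\textbf{Step 3 (quotient).} On the good event,
\[
\Bigl|\frac{D\widehat H}{\widehat H}-\frac{DH}{H}\Bigr|\le \frac{|D\widehat H-DH|}{\widehat H}+\frac{|DH|\,|\widehat H-H|}{\widehat H\,H}\le \frac{2}{m_K}\sup_K|D\widehat H-DH|+\frac{2\sup_K|DH|}{m_K^2}\sup_K|\widehat H-H|,
\]
and both terms tend to zero in probability. The almost sure version for the truncated auxiliary estimator follows in the same way from the a.s.\ statements of Propositions~\ref{prop:compact-uniform-consistency-H-eps} and~\ref{prop:gaussian-compact-uniform}.

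\textbf{Main obstacle.} The delicate point is the transverse (kernel-direction) derivatives in Step 2. Their coefficient in \(\nabla_x\log\Phi_i^\varepsilon\) is of order \(1/(\alpha\varepsilon)\), and Proposition~\ref{prop:gaussian-compact-uniform} as stated does not cover them. Because \(\varepsilon\) is fixed here, this factor is a finite constant absorbed into \(C_{J,B}\), so the domination of Lemma~\ref{lem:local-gaussian-domination} still applies with \(\varepsilon\)-dependent constants. If this extension proves awkward, the fallback is to note that on the leaf \(x\in x_i+\mathcal L_i\) the exact transverse derivative of \(H_i^\varepsilon\) vanishes by the factorisation \eqref{eq:H-eps-factorisation}, since \(d_\perp^2\) has zero gradient at \(d_\perp=0\). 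It would then remain only to show that the empirical transverse derivative is small, which again follows from NW consistency of the corresponding polynomial-weighted response.
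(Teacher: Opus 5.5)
Your proposal is correct and follows essentially the same route as the paper: a positive lower bound for $H_i^\varepsilon$ on $K$, compact-uniform convergence of $\widehat H_i^{\varepsilon,M,\mathrm G}$ and its directional derivatives, the quotient identity for $D_{q_{i,j}}\log\widehat H-D_{q_{i,j}}\log H$, and summation of the spectral-floor expansion with coefficients bounded by $\|A_i\|+\varepsilon$. Your Step~2 is more careful than the paper's proof, which invokes Proposition~\ref{prop:gaussian-compact-uniform} (stated only for $j\le r_i$) while summing over all $j=1,\dots,d$; transferring the all-$j$ compact-kernel result via the $2T_M/S_M$ weight comparison, or using that the exact transverse derivative of $H_i^\varepsilon$ vanishes on the leaf, closes that small omission at fixed $\varepsilon$.
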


\begin{proof}
Write \(\widehat H:=\widehat H_i^{\varepsilon,M,\mathrm G}\) and
\(H:=H_i^\varepsilon\). Strict positivity of \(H\) on \(K\) gives
\(\inf_K H\ge\delta_K>0\); by
Proposition~\ref{prop:gaussian-compact-uniform},
\(\widehat H\to H\) and \(D_{q_{i,j}}\widehat H\to D_{q_{i,j}}H\)
uniformly on \(K\), so \(\inf_K\widehat H\ge\delta_K/2\) for all
large \(M\) in the stated stochastic mode. The identity
\begin{multline*}
D_{q_{i,j}}\log\widehat H-D_{q_{i,j}}\log H
=\frac{D_{q_{i,j}}\widehat H-D_{q_{i,j}}H}{\widehat H}
+D_{q_{i,j}}H\!\left(\frac1{\widehat H}-\frac1{H}\right),
\\
\Bigl|\tfrac{1}{\widehat H}-\tfrac{1}{H}\Bigr|
\le\tfrac{2}{\delta_K^2}|\widehat H-H|,
\end{multline*}
combined with the uniform compact-uniform convergence of both
\(\widehat H\) and \(D_{q_{i,j}}\widehat H\) and with smoothness of
\(H\) on the open cylinder, gives
\(\sup_K|D_{q_{i,j}}\log\widehat H-D_{q_{i,j}}\log H|\to 0\).
Summing the spectral-floor expansion
\(A_i^\varepsilon\nabla_x\log H=\sum_{j}\max(\lambda_{i,j},\varepsilon)\,
D_{q_{i,j}}\log H\,q_{i,j}\) with deterministically bounded
coefficients \(\max(\lambda_{i,j},\varepsilon)\le\|A_i\|+\varepsilon\)
yields the claim.
\end{proof}

By Proposition~\ref{prop:spectral-regularisation-leafwise-kernel},
the regularised drift converges to the exact leafwise drift on every
compact \(K\Subset(t_i,t_{i+1})\times(x_i+\mathcal L_i)\) as
\(\varepsilon\downarrow 0\):
\(A_i^\varepsilon\nabla_x\log H_i^\varepsilon\to b_i^*\). A diagonal
extraction in \((\varepsilon_n,M_n,h_{M_n})\) combines this with
Proposition~\ref{prop:drift-compact-uniform} to give the
compact-uniform drift convergence of part~(i) of
Theorem~\ref{thm:approximate-intervalwise-dynamics}.

\subsection{Initial-boundary drift and diagonal consistency}
\label{app:boundary-drift}

The interior arguments above give convergence on compacta
\(K\Subset(t_i,t_{i+1})\times(x_i+\mathcal L_i)\) of the open
cylinder, where the strict time gap \(\alpha<\beta\) provides the
Gaussian-ratio domination of
Lemma~\ref{lem:local-gaussian-domination}. At the initial diagonal
point \((t_i,x_i)\), \(\alpha\uparrow\beta\), the exponential
penalisation disappears, and the response becomes the unbounded
increment \(\Delta X_{i+1}\); we collect here the dominated- and
Donsker--Varadhan-based arguments needed at the boundary. Set
\(\alpha:=t_{i+1}-t\), \(\beta:=t_{i+1}-t_i\). The leafwise kernel
ratio at \(x=x_i\) reduces to
\(\Phi_i^*(t,x_i,\delta)
=(\beta/\alpha)^{r_i/2}\exp\!\bigl(-\tfrac12(\tfrac1\alpha-\tfrac1\beta)\|\delta\|^2_{A_i^+}\bigr)\)
for \(\delta\in\mathcal L_i\).

\begin{lemma}[Analytic boundary drift]
\label{lem:analytic-boundary-drift}
For every conditioning point \(u_0\) at which the conditional
entropy \(H(\eta_i^\mu(\cdot\mid u_0)\,|\,k_i^R(\cdot\mid u_0))\) is
finite --- the full \(\mu_i\)-measure set provided by
disintegration~\eqref{eq:entropy-chain-rule} of the global
finite-entropy assumption~\eqref{eq:finite-entropy-assumption} ---
and for every \(\varepsilon\in(0,\lambda_i^{\min,+})\),
\[
\lim_{t\downarrow t_i}b_i^\varepsilon(t,x_i;u_0)
=
\frac{1}{t_{i+1}-t_i}\,
\mathbb E_\mu[\Delta X_{i+1}\mid \mathbf U_i=u_0]
=:b_i^*(t_i,x_i;u_0),
\]
and the limit is independent of \(\varepsilon\). No local
admissibility assumption (S6) is needed for this analytic step.
\end{lemma}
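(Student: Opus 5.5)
The plan is to reduce the statement to the computation already carried out in Step~6 of the proof of Proposition~\ref{prop:spectral-regularisation-leafwise-kernel}, now phrased at the conditioning point \(u_0\).

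\textbf{Step 1: reduction to the exact leafwise drift.} Fix \(u_0\) with \(H(\eta_i^\mu(\cdot\mid u_0)\,|\,k_i^R(\cdot\mid u_0))<\infty\) and \(\varepsilon\in(0,\lambda_i^{\min,+})\). Since \(x_i\in x_i+\mathcal L_i\), we have \(d_\perp(x_i)=0\). The factorisation \eqref{eq:H-eps-factorisation} therefore gives \(H_i^\varepsilon(t,x_i)=H_i(t,x_i)\) for all \(t\in(t_i,t_{i+1})\). Differentiating \(\log\Phi_i^\varepsilon\) in \(x\) gives \(\alpha^{-1}(A_i^\varepsilon)^{-1}(\delta-(x-x_i))\). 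At \(x=x_i\), multiplying by \(A_i^\varepsilon\) cancels the floor completely, so we obtain the representation \eqref{eq:drift-kernel-ratio-at-xi}:
\[
b_i^\varepsilon(t,x_i;u_0)=\frac1\alpha\,\frac{\int\delta\,\Phi_i^*(t,x_i,\delta)\,\eta_i^\mu(d\delta\mid u_0)}{\int\Phi_i^*(t,x_i,\delta)\,\eta_i^\mu(d\delta\mid u_0)}.
\]
Here \(\Phi_i^*\) is the unfloored leafwise ratio. This already shows that the expression, and hence any limit, is independent of \(\varepsilon\). Both integrals are finite for \(t>t_i\) because \(\Phi_i^*\) is a Gaussian in \(\|\delta\|_{A_i^+}\).

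\textbf{Step 2: uniform bound and pointwise limit of the weight.} For \(t\in(t_i,t_i+\delta_0]\) we have \(\alpha\ge\beta-\delta_0>0\), so \((\beta/\alpha)^{r_i/2}\le C_{\delta_0}\). Because \(\alpha<\beta\), the exponential factor is at most \(1\). Hence \(0\le\Phi_i^*\le C_{\delta_0}\), and \(\Phi_i^*\to1\) pointwise as \(t\downarrow t_i\). Dominated convergence then sends the denominator to \(1\).

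\textbf{Step 3 (main obstacle): integrability of the first moment.} The numerator needs \(\int|\delta|\,\eta_i^\mu(d\delta\mid u_0)<\infty\). This is the only place where finite entropy enters, and it must be handled at the fixed point \(u_0\) alone, without the neighbourhood supremum of (S6).

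Apply the Donsker--Varadhan inequality \(\int g\,d\eta\le H(\eta\mid k)+\log\int e^{g}\,dk\) with the test function \(g(\delta)=\theta\|\delta\|_{A_i^+}^2\). Here \(k=k_i^R(\cdot\mid u_0)\) is the centred Gaussian on \(\mathcal L_i\) with covariance \(\beta A_i\). Choose \(\theta<1/(2\beta)\), so that \(\int e^g\,dk=(1-2\theta\beta)^{-r_i/2}<\infty\). This gives a finite second intrinsic moment, and hence a finite first moment \(\int\|\delta\|_{A_i^+}\,d\eta_i^\mu<\infty\). Since \(\eta_i^\mu\) is supported on \(\mathcal L_i\), we have \(|\delta|\le\lambda_{\max}(A_i)^{1/2}\|\delta\|_{A_i^+}\), which yields Euclidean integrability.

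\textbf{Step 4: conclusion.} Dominated convergence with dominating function \(C_{\delta_0}|\delta|\) sends the numerator to \(\int\delta\,\eta_i^\mu(d\delta\mid u_0)=\mathbb E_\mu[\Delta X_{i+1}\mid\mathbf U_i=u_0]\). Together with \(\alpha\to\beta\), this gives the stated limit. Only finiteness of the entropy at \(u_0\) itself was used, so (S6) is not required. The full-measure claim follows from the chain rule \eqref{eq:entropy-chain-rule}: the integral of the conditional entropies is finite, so they are finite \(\mu_i\)-a.e.
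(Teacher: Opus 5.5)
Your proposal is correct and follows the paper's proof essentially step for step: the kernel-ratio representation of $b_i^\varepsilon(t,x_i;u_0)$ with the unfloored leaf weight $\Phi_i^*$, the uniform bound and pointwise limit $\Phi_i^*\to 1$ as $t\downarrow t_i$, a Donsker--Varadhan intrinsic first-moment bound on the active leaf converted to Euclidean integrability, and dominated convergence; your explicit test function $\theta\|\delta\|_{A_i^+}^2$ with $\theta<1/(2\beta)$ makes concrete the step the paper only cites. One harmless aside: $p_i^\varepsilon$ carries a $d$-dimensional normalisation, so on the leaf $\Phi_i^\varepsilon=(\beta/\alpha)^{(d-r_i)/2}\Phi_i^*$ and $H_i^\varepsilon(t,x_i)=H_i(t,x_i)$ holds only up to this $t$-dependent factor, which cancels in the drift ratio and leaves your argument unaffected.
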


\begin{proof}
Differentiating the coherent ratio at \(x=x_i\) gives the
kernel-ratio average
\[
b_i^\varepsilon(t,x_i;u_0)
=
\frac{1}{\alpha}\,
\frac{\int\delta\,\Phi_i^*(t,x_i,\delta)\,\eta_i^\mu(d\delta\mid u_0)}
{\int\Phi_i^*(t,x_i,\delta)\,\eta_i^\mu(d\delta\mid u_0)}.
\]
For \(\alpha\in[\beta/2,\beta]\) the gap
\(\tfrac1\alpha-\tfrac1\beta\ge 0\), the exponential is bounded by
\(1\) and the prefactor by \(2^{r_i/2}\); hence
\(\Phi_i^*(t,x_i,\delta)\le 2^{r_i/2}\) uniformly in \((t,\delta)\)
and \(\Phi_i^*(t,x_i,\delta)\to 1\) pointwise as
\(\alpha\uparrow\beta\) (the convergence is \emph{not} monotone:
only pointwise plus dominated convergence is used). The
denominator tends to \(1\).

The numerator requires an \(\eta_i^\mu(\cdot\mid u_0)\)-integrable
dominant for \(|\delta|\). Since \(\eta_i^\mu(\cdot\mid u_0)\) is
supported on the active leaf \(\mathcal L_i(u_0)\) of \(A_i(u_0)\)
(Corollary~\ref{cor:terminal-conditional-law}), it suffices to
control \(|\delta|\) on that leaf. By the Donsker--Varadhan
variational inequality applied to the intrinsic Mahalanobis
quadratic test against the reference \(k_i^R(\cdot\mid u_0)\),
finite conditional entropy at \(u_0\) implies the
\emph{intrinsic \(A_i^+\)-Mahalanobis first moment on the active
leaf}
\[
\int_{\mathcal L_i(u_0)}|\delta|_{C_i(u_0)^+}\,
\eta_i^\mu(d\delta\mid u_0)<\infty,
\qquad C_i(u_0):=\Delta_i A_i(u_0).
\]
The Euclidean implication uses the leaf inequality
\[
|\delta|\le\lambda_{\max}(A_i(u_0))^{1/2}\,|\delta|_{C_i(u_0)^+}
\qquad\forall\,\delta\in\mathcal L_i(u_0),
\]
which gives
\(\int|\delta|\,\eta_i^\mu(d\delta\mid u_0)<\infty\). Hence
\(2^{r_i/2}|\delta|\) is an integrable dominant; dominated
convergence and \(\alpha\to\beta\) yield
\(b_i^\varepsilon(t,x_i;u_0)\to\beta^{-1}\int\delta\,\eta_i^\mu(d\delta\mid u_0)\).
The argument is \(\varepsilon\)-independent: on the leaf
\(\Phi_i^*\) is the unfloored leafwise ratio.
\end{proof}

\begin{lemma}[Compact reference-aware boundary drift]
\label{lem:compact-boundary-drift}
Under (S1)--(S6), with the compact choice
\(W_h(u_0,v)=K(d_\varepsilon^\ell(u_0,v)/h)\) and
\(\operatorname{supp}K\subset[0,1]\), the empirical regularised
boundary drift admits the finite-sample limit
\begin{multline*}
\widehat b_i^{M,K}(t_i,x_i)
:=\lim_{t\downarrow t_i}\widehat b_i^{\varepsilon,M,K}(t,x_i)
\\
=\frac{1}{\beta}\,
\frac{\sum_{m=1}^M W_{h_M}(u_0,\mathbf U_i^{(m)})\,\Delta X_{i+1}^{(m)}}
{\sum_{m=1}^M W_{h_M}(u_0,\mathbf U_i^{(m)})},
\qquad\beta=t_{i+1}-t_i,
\end{multline*}
independent of \(\varepsilon\), and
\(\widehat b_i^{M,K}(t_i,x_i)\to b_i^*(t_i,x_i;u_0)\) in
probability; a.s.\ under (S5).
\end{lemma}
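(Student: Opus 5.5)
The proof splits into two parts: an exact finite-sample identity at fixed $M$, and then a ratio law of large numbers for the resulting localised mean.

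\emph{Step 1 (finite-sample limit).} Fix $M$ and a sample realisation, and assume the denominator $\sum_m W_{h_M}(u_0,\mathbf U_i^{(m)})$ is nonzero. Then $\widehat H_i^{\varepsilon,M,K}(t,x)$ is a finite weighted sum of the coherent ratios $\Phi_i^\varepsilon(t,x,\Delta X_{i+1}^{(m)})$. Differentiating the Gaussian ratio as in Step~6 of Proposition~\ref{prop:spectral-regularisation-leafwise-kernel} gives
\[
A_i^\varepsilon\nabla_x\log\Phi_i^\varepsilon(t,x,\delta)=\alpha^{-1}\bigl(\delta-(x-x_i)\bigr),
\]
so at $x=x_i$
\[
\widehat b_i^{\varepsilon,M,K}(t,x_i)=\frac{1}{\alpha}\,\frac{\sum_m W_m\,\Delta X_{i+1}^{(m)}\,\Phi_i^\varepsilon(t,x_i,\Delta X_{i+1}^{(m)})}{\sum_m W_m\,\Phi_i^\varepsilon(t,x_i,\Delta X_{i+1}^{(m)})},
\]
which is the formula already displayed in Section~\ref{subsec:joint-logweights}. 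As $t\downarrow t_i$ we have $\alpha\uparrow\beta$. From~\eqref{eq:eps-kernel-ratio-after-square} at $x=x_i$ the centre is $m=0$, so $\Phi_i^\varepsilon(t,x_i,\delta)\to 1$ for every fixed $\delta$. The sums are finite, so the limit is taken termwise, and the normalising prefactors $(\beta/\alpha)^{d/2}$ cancel between numerator and denominator. This yields the displayed expression, which is independent of $\varepsilon$. On the event that the denominator vanishes we adopt any convention; by Lemma~\ref{lem:kernel-denominator-localisation} that event has probability tending to zero.

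\emph{Step 2 (consistency of the localised mean).} The problem is now a plain Nadaraya--Watson estimate of $\beta^{-1}\mathbb E_\mu[\Delta X_{i+1}\mid U=u_0]$, which equals $b_i^*(t_i,x_i;u_0)$ by Lemma~\ref{lem:analytic-boundary-drift}. Write the error as
\[
\frac{N_M/M}{D_M/M}-\frac{m(u_0)}{1},
\]
with $N_M=\sum_m W_m\Delta X^{(m)}$, $D_M=\sum_m W_m$ and $m(v)=\int\delta\,\eta_i^{\mu,\ell}(d\delta\mid v)$. The plan is as follows.
\begin{enumerate}
\item[(a)] Normalise by $\mathbb E W=D_h(u_0)$ and use Lemma~\ref{lem:kernel-denominator-localisation}: $D_M/(M D_h)\to1$, since its variance is $O(1/(MD_h))\to0$ by (S1) and (S4).
\item[(b)] The bias term $\int W_h(u_0,v)\,m(v)\,\mu_U(dv)/D_h-m(u_0)$ tends to zero. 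Compact support means only $v\in B_\varepsilon(u_0,h)$ contribute. Continuity of $v\mapsto m(v)$ at $u_0$ must be upgraded from the weak continuity in (S2) by uniform integrability of $|\delta|$. That uniform integrability comes from (S6) via Donsker--Varadhan, as in~\eqref{eq:local-intrinsic-first-moment}--\eqref{eq:local-euclidean-first-moment}. In fact the entropy-tail bound~\eqref{eq:entropy-tail-transfer} yields a uniform tail $\sup_{v\in B_\varepsilon(u_0,\rho)}\eta(r_i>R\mid v)\lesssim 1/R^2$. Combined with Lemma~\ref{lem:reference-aware-comparison} this gives uniform integrability of $|\delta|$.
\item[(c)] Fluctuations of $N_M$. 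Since $\Delta X$ is unbounded with only a uniformly integrable first moment, I would truncate $\Delta X$ at level $R$. The truncated part has variance $O(R^2/(MD_h))\to0$. The remainder is bounded in $L^1$ by $\sup_v\mathbb E[|\delta|\mathbf 1_{|\delta|>R}\mid v]$, which by (b) is uniformly small for large $R$. Letting $M\to\infty$ and then $R\to\infty$ gives convergence in probability.
\end{enumerate}

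For the almost-sure statement under (S5), apply Bernstein's inequality to the truncated sums with truncation level $R_M\to\infty$ slowly, for instance $R_M^2\log M=o(MD_{h_M})$, and conclude by Borel--Cantelli. The remainder term needs a strong law for weighted tails. This is the main obstacle, because only the first moment is uniformly controlled, and a.s.\ convergence with merely first moments is delicate. I would first try using (S6) more strongly: finite entropy against a Gaussian reference gives an exponential-type moment $\mathbb E\, e^{c\,r_i}$ of the empirical pieces through Donsker--Varadhan applied to $e^{c r_i^2/2}$-type tests with small $c$, or at least uniform $L\log L$ control. That integrability lets the truncation tail be summable along $M$. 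If this fails I would settle for a.s.\ convergence along the truncated auxiliary estimator, consistent with Remark~\ref{rem:finite-eps-vs-limit}.
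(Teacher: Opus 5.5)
Your route is the paper's. You take a termwise diagonal limit that reduces $\widehat b_i^{\varepsilon,M,K}(t,x_i)$ to the Nadaraya--Watson mean of $\Delta X_{i+1}$. You then use denominator non-collapse from Lemma~\ref{lem:kernel-denominator-localisation}, local uniform integrability of the response from (S6) via Donsker--Varadhan and the reference comparison, and continuity of $v\mapsto\int\delta\,\eta_i^{\mu,\ell}(d\delta\mid v)$ from (S2) together with that uniform integrability.

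For convergence in probability your argument is complete, and more explicit than the paper's, which only invokes ``NW consistency''. The normalisation by $D_{h_M}(u_0)$, the bias/variance split and the response truncation are exactly what is needed. Your Step~1 is also cleaner, because a finite sum needs no domination. The paper's $2^{r_i/2}$ bound is the leafwise one, whereas sample increments off the runtime leaf carry the prefactor $(\beta/\alpha)^{d/2}$, as you wrote.

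The gap is the almost-sure clause, and the repair you suggest does not work. Donsker--Varadhan gives $\int g\,d\eta\le H(\eta\mid k)+\log\int e^{g}\,dk$. With $g=\tfrac c2 r_i^2$ and $c<1$, this yields a uniform conditional \emph{second} moment of $r_i$ on the (S6)-ball, which incidentally gives your uniform integrability in (b) directly. It does not bound $\mathbb E\,e^{c r_i}$, and finite entropy relative to a Gaussian does not in general give exponential moments. For instance, on $\mathbb R$ the law $\eta(dx)\propto(1+|x|)^{-4}dx$ has finite entropy relative to $N(0,1)$, yet $\int e^{c|x|}\,\eta(dx)=\infty$ for every $c>0$.

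Your fallback to ``the truncated auxiliary estimator'' is beside the point. That remark concerns truncating the Gaussian regression kernel, whereas here the kernel is already compact and the unbounded object is the response.

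With the second moments that (S6) actually provides, the natural argument has three parts.
\begin{enumerate}
\item Truncate at a sample-indexed level, $\Delta X_{i+1}^{(m)}\mathbf 1_{\{|\Delta X_{i+1}^{(m)}|\le\tau_m\}}$. For non-increasing $h_M$, a sample with $d_\varepsilon^\ell(u_0,U_i^{\ell,(m)})>h_m$ never enters a later window.
\item Apply Borel--Cantelli to $\sum_m\mu_U\{v:d_\varepsilon^\ell(u_0,v)\le h_m\}\,\tau_m^{-2}$.
\item Apply Bernstein to the truncated sums, which needs $\tau_M\log M=o(MD_{h_M}(u_0))$.
\end{enumerate}
Since $D_h(u_0)\le\|K\|_\infty\,\mu_U\{v:d_\varepsilon^\ell(u_0,v)\le h\}$, the two requirements are compatible only if $MD_{h_M}(u_0)$ grows faster than roughly $\log^3M$. (S5) does not provide this.

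As it stands, the a.s.\ statement therefore needs either an extra hypothesis (higher conditional moments or a stronger bandwidth condition) or a genuinely finer argument. The paper's own proof does not supply one either. It simply asserts the a.s.\ mode alongside the in-probability one by appeal to Nadaraya--Watson consistency, and such results are typically stated for bounded responses or under additional moment and bandwidth conditions of this kind.
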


\begin{proof}
\emph{Diagonal limit.} As \(t\downarrow t_i\),
\(\Phi_i^\varepsilon(t,x_i,\delta)\to 1\) uniformly in the sample
increments (dominated by \(2^{r_i/2}\); cf.\
Lemma~\ref{lem:analytic-boundary-drift}); the
\(\Phi_i^\varepsilon\) factors cancel between numerator and
denominator, leaving the NW estimator of \(\Delta X_{i+1}\) at
\(u_0\) with kernel \(W_{h_M}\).
\emph{Denominator non-collapse.}
Lemma~\ref{lem:kernel-denominator-localisation} gives
\(MD_{h_M}(u_0)\to\infty\) and concentration of
\(W_{h_M}(u_0,\cdot)/D_{h_M}(u_0)\) at \(u_0\).
\emph{Local uniform integrability.} On the compact support
\(\{d_\varepsilon(u_0,v)\le h_M\}\), in particular
\(d_{\mathrm{ref},\varepsilon}^\ell(u_0,v)\le h_M\), and
Lemma~\ref{lem:reference-aware-comparison} gives
\(\kappa_\varepsilon^\ell(u_0,\cdot)\le e^{h_M/2}\); together with
(S6) and Donsker--Varadhan
(\eqref{eq:local-intrinsic-first-moment}--\eqref{eq:local-euclidean-first-moment}),
this yields local uniform integrability of \(\Delta X_{i+1}\) under
the normalised kernel measure.
\emph{Numerator.} Continuity of \(v\mapsto\int\delta\,\eta_i^\mu(d\delta\mid v)\)
at \(u_0\) (from (S2)) plus the uniform integrability above give NW
consistency of the numerator.
\emph{Conclusion.} The ratio converges to
\(\beta^{-1}\int\delta\,\eta_i^\mu(d\delta\mid u_0)=b_i^*(t_i,x_i;u_0)\).
\end{proof}

\begin{lemma}[Gaussian reference-aware boundary drift]
\label{lem:gaussian-boundary-drift}
Under (S1)--(S6), with the Gaussian choice
\(W_h(u_0,v)=\exp(-d_\varepsilon^\ell(u_0,v)^2/(2h^2))\), the
empirical boundary drift satisfies
\[
\widehat b_i^{M,\mathrm G}(t_i,x_i)
:=
\lim_{t\downarrow t_i}\widehat b_i^{\varepsilon,M,\mathrm G}(t,x_i)
\longrightarrow
b_i^*(t_i,x_i;u_0)
\quad\text{in probability,}
\]
a.s.\ under (S5).
\end{lemma}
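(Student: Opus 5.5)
The plan is to follow the structure of Lemma~\ref{lem:compact-boundary-drift}, with the compact support replaced by a truncation in the reference-aware metric, in the same way Proposition~\ref{prop:gaussian-compact-uniform} transfers compact-kernel consistency to Gaussian weights.

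\emph{Diagonal limit.} First I identify the finite-sample diagonal limit. At \(x=x_i\), the quantity \(\widehat b_i^{\varepsilon,M,\mathrm G}(t,x_i)\) is a ratio of \(\Phi_i^*\)-weighted Nadaraya--Watson sums. Here \(\Phi_i^*(t,x_i,\delta)\le 2^{r_i/2}\) for \(\alpha\in[\beta/2,\beta]\), and \(\Phi_i^*\to1\) pointwise. For a fixed sample the sums are finite, so
\[
\widehat b_i^{M,\mathrm G}(t_i,x_i)=\beta^{-1}\,\frac{\sum_m W_{h_M}(u_0,U^{(m)})\,\Delta X_{i+1}^{(m)}}{\sum_m W_{h_M}(u_0,U^{(m)})}.
\]
This limit does not depend on \(\varepsilon\). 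It remains to show that this Gaussian Nadaraya--Watson estimator of \(\beta^{-1}\mathbb E_\mu[\Delta X_{i+1}\mid U=u_0]\) is consistent. The target equals \(b_i^*(t_i,x_i;u_0)\) by Lemma~\ref{lem:analytic-boundary-drift}.

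\emph{Splitting the estimator.} Fix \(\rho\in(0,\rho_6)\), where \(\rho_6\) is the radius in (S6). I split the numerator and denominator sums over the local set \(\{d_\varepsilon^\ell(u_0,U^{(m)})\le\rho\}\) and the far set \(\{d_\varepsilon^\ell(u_0,U^{(m)})>\rho\}\).

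\emph{Local part.} The local part is a Nadaraya--Watson estimator with the bounded, truncated kernel \(W_{h}\mathbf 1_{\{d\le\rho\}}\). On that set, (S6) together with Donsker--Varadhan and Lemma~\ref{lem:reference-aware-comparison} (with \(\kappa\le e^{\rho/2}\)) gives the uniform first moment~\eqref{eq:local-euclidean-first-moment}. More precisely, it gives a uniform entropy bound, and the Donsker--Varadhan inequality with test functions \(\lambda|\delta|_{C^+}\) then yields uniform integrability of \(|\Delta X_{i+1}|\) under \(\eta_i^{\mu,\ell}(\cdot\mid v)\) for \(v\in B_\varepsilon(u_0,\rho)\). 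The argument of Lemma~\ref{lem:compact-boundary-drift} then applies. Lemma~\ref{lem:kernel-denominator-localisation} gives denominator growth \(MD_{h_M}\to\infty\). Weak continuity (S2) plus uniform integrability gives continuity of \(v\mapsto\int\delta\,\eta(d\delta\mid v)\) at \(u_0\), which controls the bias. For the variance, I truncate \(|\Delta X|\) at a level \(L\) and control the remainder in \(L^1\) by uniform integrability. In probability this suffices. Under (S5), exponential inequalities for the truncated bounded part plus Borel--Cantelli give the almost sure version.

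\emph{Far part (main obstacle).} I expect the far part to be the hardest step, because outside \(B_\varepsilon(u_0,\rho)\) neither (S6) nor a moment bound is available. The denominator contribution is handled directly: by~\eqref{eq:s7-kernel-mass-localisation} its expectation is \(o(D_{h_M})\), so after normalisation by \(MD_{h_M}\) it vanishes in probability. For the numerator I would first bound \(\sum_{\text{far}}W_{h_M}|\Delta X^{(m)}|\) by \(e^{-\rho^2/(2h_M^2)}\sum_m|\Delta X^{(m)}|\). I would then use the global finite entropy~\eqref{eq:finite-entropy-assumption}, which via Donsker--Varadhan against the reference gives \(\mathbb E_\mu|\Delta X_{i+1}|_{C^+}<\infty\). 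This controls \(\frac1M\sum_m|\Delta X^{(m)}|\) in probability, with an extra constant coming from the reference-comparison factor; if the Euclidean moment is not directly available, I would appeal to the boundedness of \(\|A_i\|\) on the support. Since \(e^{-\rho^2/(2h_M^2)}/h_M^{q_{\mathrm{eff}}}\to0\), the far part is negligible. For the almost sure version, the strong law for \(\frac1M\sum_m|\Delta X^{(m)}|\) replaces the probability bound. Combining the three parts proves the statement.
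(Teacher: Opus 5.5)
Your architecture matches the paper's: the diagonal limit reducing to the Gaussian Nadaraya--Watson estimator of \(\beta^{-1}\mathbb E_\mu[\Delta X_{i+1}\mid U=u_0]\), a split at a radius \(\rho\) inside the (S6) ball, (S6) plus Donsker--Varadhan for uniform integrability on the inner region, and a Gaussian tail against \(D_h\gtrsim h^{q_{\mathrm{eff}}}\) on the outer region. Your inner part is fine. The gap is in the far-part numerator.

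Global finite entropy \eqref{eq:finite-entropy-assumption} gives only the \emph{intrinsic} moment \(\mathbb E_\mu|\Delta X_{i+1}|_{C_i(U_i)^+}<\infty\), via Donsker--Varadhan against the reference. It does not give the Euclidean moment \(\mathbb E_\mu|\Delta X_{i+1}|<\infty\). Converting sample \(m\)'s intrinsic norm into a fixed Euclidean or runtime norm costs one of two factors. The first is \(\lambda_{\max}(A_i(U^{(m)}))^{1/2}\), and nothing in (S1)--(S6) bounds it away from \(u_0\), since (S2) is continuity at \(u_0\) only. The second is \(\kappa_\varepsilon^\ell(u_0,U^{(m)})\le\exp(\tfrac12 d_\varepsilon^\ell(u_0,U^{(m)}))\), which is not a constant on the far set: it is unbounded as \(d_\varepsilon^\ell\to\infty\). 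So your ``extra constant coming from the reference-comparison factor'' does not exist, and \(\frac1M\sum_m\kappa_m|\Delta X^{(m)}|_{C^+}\) need not be \(O_P(1)\). Falling back on boundedness of \(\|A_i\|\) on the support adds a hypothesis the lemma does not have.

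The missing idea is to keep the comparison factor paired with the kernel weight, rather than first bounding \(W_{h_M}\le e^{-\rho^2/(2h_M^2)}\) uniformly on the far set. Since \(d_{\mathrm{ref},\varepsilon}^\ell\le d_\varepsilon^\ell\),
\[
W_h(u_0,v)\,\kappa_\varepsilon^\ell(u_0,v)
\le\exp\Bigl(-\tfrac{d_\varepsilon^\ell(u_0,v)^2}{2h^2}+\tfrac12\,d_\varepsilon^\ell(u_0,v)\Bigr)
\le\exp\Bigl(-\tfrac{\rho^2}{4h^2}\Bigr)
\qquad\text{on }\{d_\varepsilon^\ell(u_0,v)>\rho\},
\]
as soon as \(h^2\le\rho/2\). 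This is exactly where the Gaussian choice of kernel matters: its penalty absorbs the exponential growth of the reference-comparison factor.

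With this, the far numerator, measured in the runtime norm (equivalent to the Euclidean norm up to a constant depending only on \(u_0\)), is at most \(e^{-\rho^2/(4h^2)}\,\frac1M\sum_m|\Delta X^{(m)}|_{C_i(U^{(m)})^+}\). That needs only the global intrinsic moment \eqref{eq:global-intrinsic-moment}, and dividing by \(D_h\gtrsim h^{q_{\mathrm{eff}}}\) finishes as you intended. The same substitution repairs your almost-sure far-part step: apply the strong law to the intrinsic norms rather than to \(|\Delta X^{(m)}|\).
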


\begin{proof}
The diagonal limit and the cancellation of the \(\Phi_i^\varepsilon\)
factors are as in Lemma~\ref{lem:compact-boundary-drift}, leaving
the Gaussian NW estimator with product kernel \(W_{h_M}\) and
reference factor
\(\Gamma^{\mathrm{ref}}_{h}(u_0,v)=\exp(-r(u_0,v)^2/(2h^2))\),
\(r(u_0,v):=d_{\mathrm{ref},\varepsilon}^\ell(u_0,v)\).

\emph{Reference factor absorbs the comparison factor.} The growth
of \(\kappa_\varepsilon^\ell(u_0,v)^2\le e^{r(u_0,v)}\) is uniformly
absorbed by the Gaussian penalty:
\begin{equation}
\label{eq:gaussian-absorption}
\Gamma^{\mathrm{ref}}_{h}(u_0,v)\,\kappa_\varepsilon^\ell(u_0,v)^2
\le
\exp\!\Bigl(-\tfrac{r(u_0,v)^2}{2h^2}+r(u_0,v)\Bigr)
\le
\exp\!\Bigl(\tfrac{h^2}{2}\Bigr),
\end{equation}
by maximising the exponent over \(r\ge 0\). Hence the reference
factor times the comparison factor is uniformly bounded by
\(\exp(h_M^2/2)\to 1\).

\emph{Normalised kernel concentration.} The normalised localiser
\(\nu_h^{u_0}(dv):=W_h(u_0,v)\,\mu_U(dv)/D_h(u_0)\) concentrates at
\(u_0\) by
Lemma~\ref{lem:kernel-denominator-localisation}: for every
\(\rho>0\), \(\nu_h^{u_0}(\{d_\varepsilon>\rho\})\to 0\) as
\(h\downarrow 0\).

\emph{Inner region \(\{d_\varepsilon(u_0,v)\le\rho\}\).} For
\(\rho\) small enough this is contained in the (S6)-neighbourhood
\(C_\rho(u_0)\); Donsker--Varadhan
gives~\eqref{eq:local-runtime-first-moment}, a uniform first-moment
bound in the runtime/query Mahalanobis norm
\(|\cdot|_{C_{i,\varepsilon}^\ell(u_0)^{-1}}\), and the Euclidean
implication on the active leaf is~\eqref{eq:local-euclidean-first-moment}.
The NW numerator restricted to this region is thus uniformly
integrable against \(\nu_h^{u_0}\) and converges by weak continuity
of \(v\mapsto\int\delta\,\eta_i^\mu(d\delta\mid v)\) at \(u_0\) to
\(\int\delta\,\eta_i^\mu(d\delta\mid u_0)\).

\emph{Outer region \(\{d_\varepsilon(u_0,v)>\rho\}\): direct
exponential tail.} The integrand in the numerator is dominated by
\(\kappa_\varepsilon^\ell(u_0,v)\,m_{\mathrm{intr}}(v)\) with
\(m_{\mathrm{intr}}(v):=\int|\delta|_{C_i^\ell(v)^+}\,\eta_i^\mu(d\delta\mid v)\).
Using \(d_{\mathrm{ref},\varepsilon}^\ell(u_0,v)\le d_\varepsilon(u_0,v)\)
and the reference-comparison
estimate~\eqref{eq:reference-comparison-factor},
\[
W_h(u_0,v)\,\kappa_\varepsilon^\ell(u_0,v)
\lesssim
\exp\!\Bigl(-\tfrac{d_\varepsilon(u_0,v)^2}{2h^2}
+\tfrac12 d_\varepsilon(u_0,v)\Bigr)
\le
\exp\!\Bigl(-\tfrac{\rho^2}{4h^2}\Bigr)
\]
on \(\{d_\varepsilon>\rho\}\), for \(h\) small enough that
\(\tfrac{d_\varepsilon^2}{2h^2}-\tfrac12 d_\varepsilon\ge\tfrac{d_\varepsilon^2}{4h^2}\)
when \(d_\varepsilon>\rho\). The intrinsic moment is globally
integrable under \(\mu_U\):
\begin{equation}
\label{eq:global-intrinsic-moment}
\int m_{\mathrm{intr}}(v)\,\mu_U(dv)
=
\mathbb E_\mu\!\left[\int |\delta|_{C_i(U_i)^+}\,\eta_i^\mu(d\delta\mid U_i)\right]
<\infty,
\end{equation}
since under the conditioned reference \(k_i^R(\cdot\mid v)\) the
intrinsic Mahalanobis norm \(|\delta|_{C_i^\ell(v)^+}\) has
uniform-in-\(v\) exponential moments by construction, and finite
global entropy~\eqref{eq:finite-entropy-assumption} transfers these
moments via Donsker--Varadhan to \(\eta_i^\mu\) with a global bound.
Combining with the denominator lower bound
\(D_h(u_0)\gtrsim h^{q_{\mathrm{eff}}}\) of
Lemma~\ref{lem:kernel-denominator-localisation},
\begin{multline*}
\frac{\int_{\{d_\varepsilon>\rho\}}
W_h(u_0,v)\,\kappa_\varepsilon^\ell(u_0,v)\,m_{\mathrm{intr}}(v)\,\mu_U(dv)}
{D_h(u_0)}
\\
\le
\frac{\exp(-\rho^2/(4h^2))}{c\,h^{q_{\mathrm{eff}}}}
\int m_{\mathrm{intr}}\,d\mu_U
\longrightarrow 0
\end{multline*}
as \(h\downarrow 0\). The outer contribution to the NW numerator
therefore vanishes.

\emph{Conclusion.} Adding the inner and outer estimates, the NW
numerator converges to \(\int\delta\,\eta_i^\mu(d\delta\mid u_0)\);
the denominator stays bounded below by
Lemma~\ref{lem:kernel-denominator-localisation}; the ratio tends to
\(b_i^*(t_i,x_i;u_0)\) in probability, a.s.\ under~(S5).
\end{proof}

The interior consistency of
Proposition~\ref{prop:drift-compact-uniform} together with
Lemmas~\ref{lem:analytic-boundary-drift}--\ref{lem:gaussian-boundary-drift}
gives convergence on every compact subset of the open intervalwise
cylinder \emph{and} at the single boundary diagonal point
\((t_i,x_i)\), where the empirical boundary drift is defined as the
finite-sample diagonal limit
\(\widehat b_i^M(t_i,x_i):=\lim_{t\downarrow t_i}\widehat b_i^{\varepsilon,M}(t,x_i)\).

\subsection{Proof of the main statistical convergence theorem}

\begin{proof}[Proof of Theorem~\ref{thm:approximate-intervalwise-dynamics}]
Fix \(i\), the query \(u_0=(\mathbf x_i,y_{i+1})\), and a compact
\(K\Subset(t_i,t_{i+1})\times(x_i+\mathcal L_i)\) in the backward
stratum \(\{t\le t_{i+1}-\tau\}\).

\smallskip
\noindent\emph{Interior drift convergence (part~(i)).} Under
(S1)--(S5), decompose
\begin{equation}
\label{eq:drift-two-step-decomposition}
A_i^\varepsilon\nabla_x\log\widehat H_i^{\varepsilon,M,\mathrm G}
\;\xrightarrow{M\to\infty}\;
A_i^\varepsilon\nabla_x\log H_i^\varepsilon
\;\xrightarrow{\varepsilon\downarrow 0}\;
A_i\nabla_x\log H_i = b_i^*,
\end{equation}
the first arrow by Proposition~\ref{prop:drift-compact-uniform}
(statistical step at fixed \(\varepsilon\)), the second by
Proposition~\ref{prop:spectral-regularisation-leafwise-kernel}
(analytical step in \(\varepsilon\)). Picking any
\(\varepsilon_n\downarrow 0\),
Proposition~\ref{prop:gaussian-compact-uniform} at each
\(\varepsilon_n\) provides \(M_n\to\infty\) and admissible
\(h_{M_n}\downarrow 0\) with compact-uniform \(\tfrac1n\)-control of
\(\widehat H_i^{(n)}\) and its active gradient in the stated
stochastic mode; composing the two arrows
in~\eqref{eq:drift-two-step-decomposition} gives
\(\sup_K|\widehat b_i^{(n)}-b_i^*|\to 0\) in probability (a.s.\
under~(S5)). The strict interior \(\alpha<\beta\) gap furnishes the
Gaussian-ratio domination of
Lemma~\ref{lem:local-gaussian-domination} and no boundary
admissibility is invoked.

\smallskip
\noindent\emph{Boundary-diagonal drift convergence (part~(ii)).}
Under (S1)--(S6), the diagonal-limit value at \((t_i,x_i)\) is
supplied directly by the boundary
Section~\ref{app:boundary-drift}: the analytic limit by
Lemma~\ref{lem:analytic-boundary-drift} (which uses only finite
conditional entropy at \(u_0\), automatic from
disintegration~\eqref{eq:entropy-chain-rule} for \(\mu_i\)-a.e.\
\(u_0\)), and the empirical limit by either
Lemma~\ref{lem:compact-boundary-drift} (compact-kernel regression)
or Lemma~\ref{lem:gaussian-boundary-drift} (Gaussian-kernel
regression), both using (S6) for the local entropy admissibility
and (S1) for the small-ball denominator non-collapse via
Lemma~\ref{lem:kernel-denominator-localisation}. Both deliver
\(\widehat b_i^{(n)}(t_i,x_i)\to b_i^*(t_i,x_i;u_0)\) in
probability, a.s.\ under~(S5), with the empirical boundary drift
defined as the finite-sample diagonal limit
\(\widehat b_i^{(n)}(t_i,x_i):=\lim_{t\downarrow t_i}\widehat b_i^{\varepsilon_n,M_n}(t,x_i)\).

\end{proof}

\begin{remark}[Uniformity with respect to the conditioning point]
If \((\mathbf x_i,y_{i+1})\) ranges in a compact set
\(C\Subset C_i\) where \(\pi_i\) is bounded away from \(0\) and the
eigenstructure of \(A_i\) is continuous, the Gaussian-domination
constants of Lemma~\ref{lem:local-gaussian-domination} can be chosen
uniformly in \((\mathbf x_i,y_{i+1})\in C\), and the same diagonal
triple \((\varepsilon_n,M_n,h_{M_n})\) delivers the conclusions of
Theorem~\ref{thm:approximate-intervalwise-dynamics} uniformly on
\(C\).
\end{remark}

\begin{remark}[Reference-weighted Hilbert structure]
\label{rem:reference-weighted-hilbert}
The natural global Hilbert structure on the active subspace
\(\mathcal L_i\) of increments is not \(L^2(\mathcal L_i,d\ell_i)\)
but the reference-weighted space \(L^2(k_i^R)\) with
\(k_i^R(d\delta):=p_i(t_i,x_i;t_{i+1},\delta)\,d\ell_i(\delta)\): if
\(f_i\in L^2(k_i^R)\), the backward reference semigroup is
contractive in the corresponding norm, and Gaussian smoothing gives
active-gradient estimates in the same weighted spaces on compact
subintervals of \((t_i,t_{i+1})\). The consistency theorem above
does not rely on this stronger global integrability: it uses only compact convergence for the heat drift.
\end{remark}

\section{Predictive validation: energy score and conditional kernel score}
\label{sec:appendix-predictive-energy-score}

This appendix describes the predictive validation score used to select and
compare model configurations. The score is designed to test the conditional
predictive law generated by the model. The model is first given a real past
trajectory as memory; it then generates several possible future continuations,
which are compared with the realised future of the validation path.

Let
\[
Z_1,\dots,Z_n \in \R^q
\]
be the validation path in the coordinates used by the model. In the white-only
case one may simply take \(Z_i=X_i\). In a joint state-volatility model,
\(Z_i\) may instead denote the same-grid joint state, for instance
\[
Z_i=(X_i,Y_i),
\]
where \(Y_i\) is the volatility-factor state. The definition below is
independent of this choice.

Fix a memory length \(p_{\mathrm{mem}}\ge 1\), a predictive horizon
\(K\ge 1\), a number \(L\ge 1\) of Monte Carlo predictive continuations, and a
validation stride \(s\ge 1\). For every admissible validation index \(i\), define
the real memory block
\[
\mathcal H_i
:=
(Z_{i-p_{\mathrm{mem}}+1},\dots,Z_i)
\in (\R^q)^{p_{\mathrm{mem}}},
\]
and the realised future block
\[
Z_{i+1:i+K}
:=
(Z_{i+1},\dots,Z_{i+K})
\in (\R^q)^K.
\]
We identify the future block with its time-major vectorisation
\[
z_i^{\mathrm{obs}}
:=
\operatorname{vec}_{\mathrm{time}}(Z_{i+1},\dots,Z_{i+K})
\in \R^{Kq}.
\]

Conditionally on the real memory \(\mathcal H_i\), the fitted generative model
produces \(L\) independent predictive continuations
\[
\widehat Z^{(\ell)}_{i,1:K}
=
\bigl(
\widehat Z^{(\ell)}_{i,1},
\dots,
\widehat Z^{(\ell)}_{i,K}
\bigr),
\qquad
\ell=1,\dots,L.
\]
For \(K>1\), the continuation is generated autoregressively. After each
generated step, the model memory is shifted and the generated value is inserted
as the newest state. Equivalently, setting
\[
\widehat{\mathcal H}^{(\ell)}_{i,0}:=\mathcal H_i,
\]
one samples recursively
\[
\widehat Z^{(\ell)}_{i,r}
\sim
\widehat P_\theta
\bigl(
\cdot \mid \widehat{\mathcal H}^{(\ell)}_{i,r-1}
\bigr),
\qquad r=1,\dots,K,
\]
and then updates
\[
\widehat{\mathcal H}^{(\ell)}_{i,r}
=
\operatorname{shift}
\bigl(
\widehat{\mathcal H}^{(\ell)}_{i,r-1},
\widehat Z^{(\ell)}_{i,r}
\bigr).
\]
We again use the time-major vectorisation
\[
\widehat z^{(\ell)}_i
:=
\operatorname{vec}_{\mathrm{time}}
\bigl(
\widehat Z^{(\ell)}_{i,1},\dots,\widehat Z^{(\ell)}_{i,K}
\bigr)
\in \R^{Kq}.
\]

\subsection*{Energy score and aggregation}

The per-window score is the multivariate energy score
\begin{equation}
\label{eq:predictive-energy-local-score}
\widehat{\mathrm{ES}}_i
:=
\frac{1}{L}\sum_{\ell=1}^{L}
\bigl\|\widehat z^{(\ell)}_i - z_i^{\mathrm{obs}}\bigr\|
-
\frac{1}{2L^{2}}
\sum_{\ell=1}^{L}\sum_{\ell'=1}^{L}
\bigl\|\widehat z^{(\ell)}_i - \widehat z^{(\ell')}_i\bigr\|,
\end{equation}
and the global score is the empirical mean over admissible validation
windows~\(\mathcal I\):
\begin{equation}
\label{eq:predictive-energy-population-score}
\widehat{\mathrm{ES}}
:=
\frac{1}{|\mathcal I|}\sum_{i\in\mathcal I}
\widehat{\mathrm{ES}}_i.
\end{equation}

\subsection*{Conditional transition-kernel score}
\label{subsec:appendix-conditional-kernel-score}

The energy
score~\eqref{eq:predictive-energy-local-score}--\eqref{eq:predictive-energy-population-score}
tests the \emph{global} generative quality of the model on
increments. We complement it with a Gaussian-kernel score that
targets a different object: the quality of the approximation of the
\emph{conditional} transition kernel itself,
\(K_i^\mu(u,\cdot):=\mathcal L_\mu(\Delta X_{i+1}\mid \mathbf U_i=u)\),
of which \(\widehat K_i^{(n)}(u,\cdot)\) is the NW estimator of
Theorem~\ref{thm:approximate-intervalwise-dynamics}. The two scores
play complementary roles: the energy score evaluates marginal/aggregate
generative fit, the conditional kernel score evaluates the statistical
operator that the method estimates.

Given a conditioning query \(u_q\) from the validation set, let
\[
\widehat K_i^{\mathrm{train}}(u_q,\cdot)
:=
\sum_{m=1}^M w_m(u_q)\,\delta_{Z_m}
\]
denote the empirical transition kernel produced by the runtime
regression weights \(w_m(u_q):=\omega_{i,m}^{M,\mathrm G}(u_q)\) of
Section~\ref{subsec:statistical-estimator} on the training set,
\(Z_m:=\Delta X_{i+1}^{(m)}\) being the future increment associated
with the \(m\)-th training sample. Let \(z_q\) denote the realised
validation increment at \(u_q\). The Gaussian-kernel conditional
score at \(u_q\) is
\begin{equation}
\label{eq:cmmd-conditional-score}
S_k(u_q,z_q)
:=
\sum_{m,\ell=1}^M w_m(u_q)\,w_\ell(u_q)\,k(Z_m,Z_\ell)
-2\sum_{m=1}^M w_m(u_q)\,k(Z_m,z_q),
\end{equation}
with characteristic Gaussian kernel
\begin{equation}
\label{eq:cmmd-kernel}
k(a,b)
:=
\exp\!\Bigl(-\tfrac{|S^{-1}(a-b)|^2}{2\sigma_k^2}\Bigr),
\end{equation}
\(S\) the normalisation operator on the increment coordinates and
\(\sigma_k>0\) a kernel bandwidth. The global validation score is
the robust upper quantile over a validation query set \(\mathcal Q\),
\begin{equation}
\label{eq:cmmd-aggregate}
\mathcal S_k^{(\alpha)}
:=
\operatorname{Quantile}_{\alpha}\!\bigl\{S_k(u_q,z_q):u_q\in\mathcal Q\bigr\},
\qquad\alpha\in(0,1),
\end{equation}
typically \(\alpha=0.9\). \(S_k(u_q,z_q)\) is a kernel discrepancy
between \(\widehat K_i^{\mathrm{train}}(u_q,\cdot)\) and the
realised validation transition; the characteristic Gaussian kernel
metrises weak convergence of probability measures, so the score
provides a practical validation criterion for the statistical
approximation of the conditional transition operator itself.

\subsection*{Enriched score for coupled \((X,Y)\) validation}

For coupled validation we use the same global score template
applied to features that combine positions with a packed squared-increment
descriptor. Let \(K\ge 1\) be the predictive horizon as above and let
\(\Delta^{\otimes 2}_r X\in\R^{d_X^2}\) denote the time-\(r\) packed
squared-increment block, i.e.\ the vectorisation of
\(\Delta X_r\Delta X_r^{\top}\). Define a per-step feature
\[
\phi_r(X)
:=
\begin{pmatrix}
X_r \\[3pt]
\bigl[\Delta^{\otimes 2}X\bigr]_r
\end{pmatrix}
\in\R^{d_X+d_{\mathrm{inc},X}},
\qquad d_{\mathrm{inc},X}:=d_X^2,
\]
with an analogous definition for \(Y\). Stack-rescale by component
standard deviations
\[
\phi_r(X)
:=
\begin{pmatrix}
(\sigma_{\mathrm{pos}}\sqrt{d_X})^{-1}\, X_r \\[3pt]
(\sigma_{\mathrm{inc}}\sqrt{d_{\mathrm{inc}}})^{-1}\,
\bigl[\Delta^{\otimes 2} X\bigr]_r
\end{pmatrix}
\in \R^{d_X + d_{\mathrm{inc}}},
\]
where \(\sigma_{\mathrm{pos}}\) and \(\sigma_{\mathrm{inc}}\) are the
root-mean-square standard deviations of, respectively, the position
components and the packed squared-increment components over the training set.
The coupled enriched comparison vector is
\[
\tilde z_i
:=
\operatorname{vec}_{\mathrm{time}}
\Bigl(
\bigl[\phi_r(X),\,\phi_r(Y)\bigr]_{r=1,\dots,K}
\Bigr)
\in \R^{K(d_X + d_{\mathrm{inc},X} + d_Y + d_{\mathrm{inc},Y})},
\]
and the enriched global score \(\widehat{\mathrm{ES}}_{\mathrm{enriched}}\)
is obtained by substituting \(\tilde z_i\) and \(\widetilde{\widehat z}^{(\ell)}_i\)
for \(z_i^{\mathrm{obs}}\) and \(\widehat z^{(\ell)}_i\)
in~\eqref{eq:predictive-energy-local-score}--\eqref{eq:predictive-energy-population-score}.

\section{Entropic reference selection and validation}
\label{sec:appendix-entropic-reference-selection-and-validation}

The construction of
Sections~\ref{subsec:volatility-informed-freezing}
and~\ref{subsec:joint-tr-sbts-generation} requires, at each latent
level, a reference family under which the data-generating law has
finite relative entropy on every coarse interval. We use a single
finite-sample surrogate of that condition in two roles: selecting a
reference family for a given latent level, and validating the
closed-loop emission of the level. 

\paragraph{Per-step empirical Gaussian negative log-likelihood.}
Fix a coarse step \([t,t+\Delta t]\) and consider, at the relevant
latent level, a candidate zero-mean Gaussian reference for the
level's state increment \(\Delta\zeta_t\in\R^d\),
\[
R_{\Sigma,\Delta t}(d\zeta)
\;=\;
\mathcal N\!\bigl(0,\;\Sigma\,\Delta t\bigr)(d\zeta),
\qquad \Sigma\in\mathrm{Sym}^+_d,
\]
with density \(r_{\Sigma,\Delta t}\): the entropic surrogate is
evaluated against the local-martingale part of the reference, whose
drift is identically zero. Given an observed increment
\(\Delta\zeta^{\mathrm{obs}}_{p,t}\) on a training path \(p\), the
per-step Gaussian negative log-likelihood under
\(R_{\Sigma,\Delta t}\) is
\begin{multline}
\label{eq:entropic-per-step-empirical-nll}
\mathcal S^{(p)}_t(\Sigma)
\;:=\;
-\log r_{\Sigma,\Delta t}\!\bigl(\Delta\zeta^{\mathrm{obs}}_{p,t}\bigr)
\\
=\;
\tfrac{1}{2}\log\det\!\bigl(\Sigma\,\Delta t\bigr)
+
\tfrac{1}{2}
\bigl(\Delta\zeta^{\mathrm{obs}}_{p,t}\bigr)^{\!\top}
\!\bigl(\Sigma\,\Delta t\bigr)^{-1}
\Delta\zeta^{\mathrm{obs}}_{p,t}
+\;
\tfrac{d}{2}\log(2\pi).
\end{multline}
The first \(\Sigma\)-dependent term in
\eqref{eq:entropic-per-step-empirical-nll} is the log-volume of the
reference; the second is the squared Mahalanobis norm of the
observed increment in the metric induced by \(\Sigma\). The
\(\Sigma\)-independent additive constant \(\tfrac{d}{2}\log(2\pi)\)
cancels in any comparison between candidates evaluated on the same
\(\Delta\zeta^{\mathrm{obs}}_{p,t}\).

When the candidate family produces rank-deficient covariances by
construction (for instance a directional rank-one parametrisation),
\eqref{eq:entropic-per-step-empirical-nll} is evaluated on the
spectrally floored covariance \(\Sigma^{(f),\varepsilon}_t\)
of~\eqref{eq:psd-floor}, so the inverse exists and the perpendicular
component of \(\Delta\zeta^{\mathrm{obs}}_{p,t}\) enters as the
leaf-aware penalty
\(\tfrac{1}{2}\|\Delta\zeta^{\mathrm{obs},\perp}_{p,t}\|^2/(\varepsilon\Delta t)\)
discussed in
Section~\ref{subsec:stable-coefficient-database}. Comparison across
families with different ranks accordingly uses the same
\(\varepsilon\)-floor on all candidates.

\paragraph{Selecting a reference family for a latent level.}
Let \(\{\Sigma^{(f)}_t\}_f\) be a finite family of candidate frozen
reference trajectories indexed by a hyperparameter \(f\) (smoothing
length, history window, rank constraint, parametric form). For each
training path \(p\), aggregate
\eqref{eq:entropic-per-step-empirical-nll} along the path's step set
\(\mathcal T_p\):
\[
\overline{\mathcal S}_p\!\bigl(\Sigma^{(f)}\bigr)
\;:=\;
\frac{1}{|\mathcal T_p|}
\sum_{t\in\mathcal T_p}\mathcal S^{(p)}_t\!\bigl(\Sigma^{(f),\varepsilon}_t\bigr).
\]
The selection rule is the cross-path \(q_\alpha\)-quantile of these
path-wise scores,
\begin{equation}
\label{eq:entropic-reference-selection-rule}
\widehat f
\;:=\;
\operatorname*{arg\,min}_{f}\;
q_\alpha\!\left(\bigl\{\overline{\mathcal S}_p\!\bigl(\Sigma^{(f)}\bigr)\bigr\}_p\right),
\qquad \alpha\in(0,1),
\end{equation}
typically with \(\alpha=0.9\). The quantile suppresses tail paths on
which any candidate would be a poor description and orders families
by how well their reference explains the bulk of the observed
increments; no closed-loop generation is required at this stage.

\paragraph{Forward entropic validation of a generated latent level.}
For a held-out validation path \(p\), let
\(\Sigma^{\mathrm{gen}}_{p,t}\in\mathrm{Sym}^+_d\) denote the
closed-loop generation of the level on \(p\), PSD-projected and
spectrally floored at the common \(\varepsilon\)
via~\eqref{eq:psd-projection}--\eqref{eq:psd-floor}. The per-path
entropic-adherence score is the same NLL aggregate as for selection,
applied now to the closed-loop generated reference,
\begin{equation}
\label{eq:entropic-validation-score}
\overline{\mathcal S}^{\mathrm{val}}_p\!\bigl(\Sigma^{\mathrm{gen}}\bigr)
\;:=\;
\frac{1}{|\mathcal T_p|}
\sum_{t\in\mathcal T_p}
\mathcal S^{(p)}_t\!\bigl(\Sigma^{\mathrm{gen},\varepsilon}_{p,t}\bigr),
\end{equation}
the empirical Gaussian negative log-likelihood of the observed
increment under the level's own generated reference, averaged over
the validation step set. The cross-path scalar score is the
\(q_\alpha\)-quantile of
\(\{\overline{\mathcal S}^{\mathrm{val}}_p\}_p\), with the same
\(\alpha\) and the same \(\varepsilon\)-floor used at selection.

The two scores
\eqref{eq:entropic-reference-selection-rule}--\eqref{eq:entropic-validation-score}
share a single objective: the empirical entropy of the
data-generating law under the level's Gaussian model. Selection
applies it to candidate frozen references, validation applies it to
the closed-loop generated reference on held-out paths, and no
intermediate proxy enters between the two stages.

\nocite{*}
\bibliographystyle{plain}
\bibliography{biblio}
% Suggested keys used or reserved in this section:
% \cite{hamdouche2023sbts}
% \cite{alouadi2025robustsbts}
% \cite{demarco2026jumpssbts}

\end{document}